\numberwithin{equation}{section}
\theoremstyle{plain}
\newtheorem{proposition}{Proposition}[section]
\newtheorem{lemma}{Lemma}[section]
\newtheorem{theorem}{Theorem}[section]
\newtheorem{definition}{Definition}[section]
\newtheorem{remark}{Remark}[section]
\theoremstyle{definition}
\newcommand{\amin}[1]{\mathop{\mathrm{arg\,min}}\limits_{#1}}
\newcommand{\abs}[1]{\vert #1 \vert}
\newcommand{\dkh}[1]{\left\{ #1 \right\}} 
\newcommand{\dkhb}[1]{\big\{#1\big\}}
\newcommand{\norm}[1]{\left\Vert #1 \right\Vert} 
\newcommand{\normi}[1]{{\Vert#1\Vert}_\infty}
\newcommand{\normt}[1]{{\Vert#1\Vert}_2} 
\newcommand{\xkhb}[1]{\big(#1\big)}
\newcommand{\xkhB}[1]{\Big(#1\Big)}
\newcolumntype{L}[1]{>{\raggedright\arraybackslash}p{#1}}
\newcolumntype{C}[1]{>{\centering\arraybackslash}p{#1}}
\newcolumntype{R}[1]{>{\raggedleft\arraybackslash}p{#1}}
\newcolumntype{Y}{>{\centering\arraybackslash}X}
\numberwithin{table}{section}
\newcommand{\citee}[1]{(\citealp{#1})}
\newcommand{\algref}[1]{Algorithm~\textup{\ref{#1}}}
\newcommand{\figref}[1]{Figure~\textup{\ref{#1}}}
\newcommand{\secref}[1]{Section~\textup{\ref{#1}}}
\newcommand{\tabref}[1]{Table~\textup{\ref{#1}}}
\newcommand{\thmref}[1]{Theorem~\textup{\ref{#1}}}
\def\hat{\widehat}
\def\tilde{\widetilde}
\def\bar{\overline}
\def\0{\boldsymbol{0}}
\def\Ad{A^{\dag}}
\def\Ah{\hat{A}}
\def\Ak{A^k}
\def\Akk{A^{k+1}}
\def\Ap{A^{\dag}}
\def\bb{\bm{\beta}}
\def\bbd{\bm{\beta}^{\dag}}
\def\bbeta{\boldsymbol{\beta}}
\def\bbkk{\bb^{k+1}}
\def\bbp{{\bbeta}^{\dag}}
\def\bd{\boldsymbol{d}}
\def\bfb{\boldsymbol{b}}
\def\bP{\mathbf{P}}
\def\bX{X}
\def\d{\boldsymbol{d}}
\def\diag{\textrm{diag}}
\def\dk{\d^k}
\def\dkk{\d^{k+1}}
\def\vps{\varepsilon}
\def\etaa{\bm{\eta}}
\def\h{\boldsymbol{h}}
\def\hbb{\hat{\bm{\beta}}}
\def\hbbeta{\hat{\boldsymbol{\beta}}}
\def\hbd{\hat{\boldsymbol{d}}}
\def\hbeta{\hat{\beta}}
\def\hd{\hat{d}}
\def\la{\lambda}
\def\li{\ell_{\infty}}
\def\lt{\ell_{2}}
\def\R{\mathbb{R}}
\def\Rd{R^\dag}
\def\Rp{\mathbb{R}^{p}}
\def\sgn{\textrm{sgn}}
\def\supp{\textrm{supp}}
\def\tby{\tilde{\y}}
\def\u{\boldsymbol{u}}
\def\v{\boldsymbol{v}}
\def\w{\boldsymbol{w}}
\def\x{\boldsymbol{x}}
\def\y{\boldsymbol{y}}
\def\z{\boldsymbol z}
\begin{document}

\title{SNAP: A semismooth Newton algorithm for pathwise optimization
with optimal local convergence rate and oracle properties}

\author{
Jian Huang%
\thanks{Department of Statistics
and Actuarial Sciences,  University of Iowa.
Email: jian-huang@uiowa.edu}
\and
Yuling Jiao%
\thanks{School of Statistics and Mathematics, Zhongnan University of Economics and
Law. \hspace{3 cm}
Email: yulingjiaomath@whu.edu.cn}
\and
Xiliang  Lu%
\thanks{School of Mathematics and Statistics, Wuhan University, Wuhan, China.
Email: xllv.math@whu.edu.cn}
\and
Yueyong Shi
\thanks{School of Economics and Management, China University of Geosciences, Wuhan 430074, China. Email: yueyongshi@cug.edu.cn}
\and
and Qinglong Yang%
\thanks{School of Statistics and Mathematics, Zhongnan University of Economics and
Law. \hspace{5 cm}
Email: yangqinglong@zuel.edu.cn}
}

\date{} 

\maketitle

\begin{abstract}
We propose a semismooth Newton algorithm for pathwise
optimization (SNAP) for the LASSO and Enet in sparse,
high-dimensional linear
regression. 
SNAP is derived from a suitable formulation of
the KKT conditions based on Newton derivatives.
It solves the semismooth KKT equations efficiently
by actively and continuously seeking the support of the regression coefficients along the solution path with warm start.
At each knot in the path,  SNAP converges  {
locally superlinearly for the Enet criterion and achieves an optimal  local convergence rate for the LASSO criterion, i.e., SNAP converges in one step}  at  the  cost  of two matrix-vector multiplication per iteration. Under certain regularity conditions
on the design matrix 
and the minimum magnitude of the nonzero elements of the target regression coefficients, 
we show that SNAP hits a solution with the same signs as  the regression coefficients 
and achieves a sharp estimation  error bound in finite steps with high probability.
The computational complexity of SNAP is shown to be the same as that of LARS and coordinate descent algorithms per iteration. Simulation studies and real data analysis support our theoretical results and demonstrate that SNAP is { faster and accurate  than
LARS and coordinate descent algorithms}.

\end{abstract}

\medskip
\noindent\textbf{Keywords}: 
KKT conditions, LASSO, Newton derivative,
semismooth functions, sign consistency, superlinear convergence

\noindent\textbf{2010 MR Subject Classification}
62F12,  
62J05,  
62J07   

\section{Introduction}

In this paper,
we propose a semismooth Newton algorithm for pathwise
optimization (SNAP) for regularized high-dimensional regression problems.
We consider the linear regression model
\begin{equation}\label{model}
 \y = X\bbeta^{\dag} + \etaa,
\end{equation}
where $\y\in\mathbb{R}^{n}$  is a response
vector,  $X \in
\mathbb{R}^{n\times p}$ is a design matrix,
$\bbeta^{\dag} =(\beta^{\dag}_{1}, \ldots ,
\beta^{\dag}_{p})^{\prime}\in \mathbb{R}^{p}$ is a vector of underlying
regression coefficients,  and $\etaa  \in \mathbb{R}^{n} $ is a vector of random errors. We assume without loss of  generality that $\y$ is centered  and the columns of $X$ are  centered and $\sqrt{n}$-normalized.
For this model, the LASSO \citee{tibshirani1996regression,chen1998atomic} solves
\begin{equation}\label{regLASSO}
 \min_{\bbeta \in \R^p} L_{\lambda}(\bbeta)
:= \frac{1}{2n}\|X\bbeta-\y\|^{2}_{2} + \lambda\|\bbeta\|_{1},
\end{equation}
where  $\lambda > 0$ is a penalty parameter.
Closely related to the LASSO is the elastic net (Enet) \citee{zou2005regularization},
which solves
\begin{equation}\label{regLASSO*}
\min_{\bbeta \in \mathbb{R}^{p}} J_{\lambda, \alpha}(\bbeta):=L_{\lambda}(\bbeta)+   
 \frac{\alpha}{2n}\|\bbeta\|^{2}_{2}, \, \alpha>0.
\end{equation}
This can be viewed as a regularized form of (\ref{regLASSO}).
Since $J_{\lambda,\alpha}(\cdot)$ is strongly  convex for $\alpha >0$, the Enet solution $\hbbeta_{\lambda, \alpha}$ is unique. This enables us to characterize
the unique minimum 2-norm LASSO solution  (\ref{regLASSO})
as the limit of $\hbbeta_{\lambda, \alpha}$ as $\alpha \to 0^+$ (Proposition  \ref{pr2}).
In high-dimensional settings, it is nontrivial to efficiently solve  (\ref{regLASSO}) and
(\ref{regLASSO*}) numerically since they are large scale nondifferentiable optimization problems.

The key ingredient of SNAP is a semismooth Newton algorithm (SNA), which is derived based
on a suitable formulation of the KKT conditions.
At each step in the iteration, the SNA works by first estimating the support of the solution based on a combination of the primal and dual information, and then finding the values of the nonzero coefficients on the support.
Interestingly, our analysis shows that
the SNA can be formally derived as a Newton algorithm based
on the notion of Newton derivatives for nondifferentiable functions
\citee{kummer1988newton,qi1993nonsmooth,ito2008lagrange}.


SNAP proceeds by running
SNA along a grid of $\lambda$ values:  $\{\lambda_{t} = \lambda_{0} \gamma^{t}\}_{t= 0, 1, ..N}$  with the continuation strategy and warm start, where $\gamma\in(0,1)$, $\lambda_{0}>0$ and the integer $N$ are user given parameters.
It is easy to implement and computationally stable. Moreover, our simulation studies indicate that SNAP is nearly problem independent, in the sense that the
computational cost 
of using SNAP to approximate the solution path is $O(Nnp)$,
independent of the following aspects of the model, including
the ambient dimension, sparsity level, correlation structure of the predictors, range of the magnitude of the nonzero regression coefficients and the noise level .

\subsection{Contributions}
The most popular algorithms for solving $\ell_1$-regularized problems
in the literature  are mainly  first order methods.
It is natural to ask whether we can develop a second order method, i.e, Newton type method, which is  a workhorse in low dimensional estimation,   for
such nonsmooth optimization problems which converges faster  than first order methods.
We give a definitive answer to this question via proposing the SNAP algorithm and  developed a MATLAB package \textsl{snap}, which is available
at \url{http://faculty.zuel.edu.cn/tjyjxxy/jyl/list.htm}.

We show that, for the LASSO, the {SNA converges locally in just one step, which is obviously the best possible local convergence rate for any algorithms (Theorem \ref{th5-6})}. For the Enet, it converges locally superlinearly (Theorem \ref{th5}).
To the best of our knowledge, these are the best convergence rates for LASSO and Enet regularized regression problems with $p \gg n$ in the literature.
Our computational complexity analysis shows that the cost of each iteration in SNA is $O(np)$, which is the same as most existing 
LASSO solvers, including LARS and coordinate descent algorithms.
Hence, the overall cost of using SNA to find the unique   minimizer of $J_{\lambda,\alpha}(\bbeta)$  
is still  $O(np)$ due to its superlinear 
convergence if it is warm started.

Another contribution of this paper is that we establish the statistical properties of SNAP in the Gaussian noise case.
Specifically, we show that under certain regularity conditions on the design matrix $X$, the solution sequence generated by SNAP enjoys the sign consistency property in finite steps if the minimum magnitude of the nonzero elements of $\bbeta^{\dag}$ is of the order $O(\sigma \sqrt{2\log(p)/n})$, which is the optimal magnitude of detectable signal. We also establish a sharp upper bound in supreme norm for the estimation error of the solution sequence.

\subsection{Related work}

\cite{osborne2000new} showed that the LASSO solution path 
is continuous and piecewise linear
as a function of $\lambda$. They proposed a Homotopy algorithm
that defines an active set of nonzero variables at the current vertex
then moves to a new vertex by adding a new variable
to or removing an existing one from  the active
set.  
\cite{efron2004least} proposed the LARS algorithm to trace the whole solution path of (\ref{regLASSO}) by omitting the removing steps in the Homotopy algorithm.
\cite{donoho2008fast} showed that,
in the noiseless case with $\etaa=0$ and under certain  conditions on $X$ and $\bbeta^{\dag}$,  LARS (Homotopy) algorithm has the ``$\|\bbeta^{\dag}\|_{0}$-step" convergence property with the cost of $O(\|\bbeta^{\dag}\|_{0}np)$.
However,  the convergence property of LARS is unknown when
the 
noise vector $\etaa$ is nonzero  in the $p > n$ settings.
Further connections   of SNA with LARS, sure independence screening \citee{fan2008sure}, and active set tricks for accelerating coordinate descent \citee{tibshirani2012strong} are  discussed in Section 5.

Several authors have adopted a Gauss-Seidel type coordinate descent algorithm (CD-GS)
\citee{fu1998penalized,friedman2007pathwise,wu2008coordinate,li2009coordinate},
as well as Jacobi type coordinate descent (CD-J),
or iterative thresholding \citee{daubechies2004iterative,she2009thresholding}
to solve (\ref{regLASSO}).
For the CD-GS proposed in  \cite{friedman2007pathwise},
the results of \cite{tseng2001convergence},
\cite{saha2013nonasymptotic} and \cite{yun2014iteration}
only ensure the  convergence and sublinear convergence rate  of the sequence of the objective functions $\{L_{\lambda}(\bbeta^{k}), k=1, 2, \ldots\}$, but not the sequence of the solutions $\{\bbeta^{k}, k=1,2, \ldots, \}$. Since in high-dimensional settings with
$p \gg n$,
the global minimizers $\hbbeta_{\lambda}$ are generally not unique, hence,  it is not clear which minimizer the  sequence $\{\bbeta^{k}, k=1,2, \ldots \}$ generated from CD-GS iterations converges to.  The CD-GS proposed in \cite{li2009coordinate} and
\cite{tseng2009coordinate} with refined sweep rules is guaranteed to converge.
Other widely used algorithms include proximal gradient descent
\citee{nesterov2005smooth,nesterov2013gradient,agarwal2012fast,xiao2013proximal},
alternative direction method of multiplier (ADMM)
\citee{boyd2011distributed,chen2017efficient,han2017linear}, among others.
For more comprehensive reviews of the literature on the related topics,
see the review papers by \cite{tropp2010computational},
and \cite{parikh2014proximal}.

\cite{agarwal2012fast} considered the statistical properties of the proximal gradient descent path. But their analysis required knowing $\|\bbeta^{\dag}\|_1$, which is unknown or hard to estimate in practice. Although this can be remedied by using the techniques developed by \cite{xiao2013proximal},  it does not achieve the sharp error bound as SNAP does.

\subsection{Notation}
Some notation used throughout this paper are defined below.
With $\|\bbeta\|_q = (\sum_{i=1}^{p}|\beta_{i}|^q)^\frac{1}{q}$  we
denote the usual $q$  $(q\in [1,\infty])$ norm of a vector $
\bbeta= (\beta_{1},\beta_{2},...,\beta_{p})^{\prime}\in \mathbb{R}^{p}$. $\|\bbeta\|_0$ denotes the number of nonzero elements of $\bbeta$.
$X^\mathrm{\prime}$ denotes the transpose of the covariate  matrix  $X \in
\mathbb{R}^{n\times p}$ and  $\norm X$ denotes the operator norm of $X$
induced by  vector with 2-norm. $\textbf{1}$  or $ \textbf{0}$ denote a
vector $\in \mathbb{R}^{p}$ or a matrix with elements all 1
or 0. Define $S =\{1,\ldots, p\}$.  For any $A, B\subseteq S$ with length
$|A|, |B|$, we
denote $\bbeta_{A}\in \mathbb{R}^{|A|} $(or $X_{A}\in \mathbb{R}^{|A|\times
p})$ as the subvector (or submatrix) whose entries (or columns) are listed
in $A$. $X_{AB}$  denotes submatrix of $X$ whose rows and columns
are listed in $A$ and $B$, respectively. We use ${\supp}(z)$, $\sgn(z)$ to denote the support and sign of a vector $\z$, respectively. We use $I$, $G$ and $\tby$  to denote the identity matrix, the regularized Gram matrix  $X^{\prime}X+\alpha I$ and $X^{\prime}\y$, respectively.

\subsection{Organization}
In Section 2 we provide a heuristic and intuitive derivation of SNA for
solving (\ref{regLASSO*}) (including (\ref{regLASSO}) as a special case by setting $\alpha=0$) and describe SNA for pathwise
optimization (SNAP). In Section 3 we establish the locally superlinear convergence rate
of SNA  for (\ref{regLASSO*}) and local one-step convergence for (\ref{regLASSO}),
and analyze the computational complexity of SNA.
In Section 4 we provide the conditions for the finite-step sign consistency of SNAP
and the upper bounds for the estimation error.
In Section 5 we discuss the relations of SNA with LARS, SIS,  and active set tricks for accelerating  coordinate descent. The implementation detail and numerical comparison with LARS and
coordinate descent methods  
are given in Section 6. We conclude in Section 7 with some comments and future work. The proofs of the main results and some background on Newton derivatives used for deriving SNA are included in the appendices.

\section{A general description of SNAP}
\label{sec:genSNAP}

In this section we first give an intuitive description of the SNA
for computing the LASSO and Enet solutions at a given $\lambda$ and $\alpha$.
We then describe the SNAP, which uses SNA for computing the
solution paths with warm start and a continuation strategy.

\subsection{Motivating SNA based on the KKT conditions}
The key idea in the proposed algorithm is to iteratively identify the active set in the optimization using  both the primal  and
dual information, then solve the problem on the active set.
Here the primal is simply $\bbeta$ and its dual is $\d =(\tby-G\bbeta)/n$.
Recall $\tby=X^{\prime}\y$ and $G=X^{\prime}X+\alpha I$.
For the LASSO, the expression of $\d$ simplifies to
$\d=X'(\y-X\bbeta)/n$, i.e., the correlation vector between the predictors and the residual.
For any given $(\lambda, \alpha)$, the KKT conditions
(Proposition \ref{th3}) assert that $\hbbeta_{\lambda,\alpha}$ is the unique Enet solution
if and only if the pair $\{\hbbeta_{\lambda,\alpha}, \hbd_{\lambda, \alpha}\}$ satisfies
\begin{eqnarray}
\left\{ \begin{array}{l}
\hbd_{\lambda,\alpha}= (\tby-G\hbbeta_{\lambda, \alpha})/n, \label{K1} \\
\hbbeta_{\lambda, \alpha} = T_{\lambda}(\hbbeta_{\lambda,\alpha} + \hbd_{\lambda,\alpha}), \label{K2}
\end{array}\right.
\end{eqnarray}
where $T_{\lambda}(\x)$ is the soft-threshold operator \citee{donoho1995adapting}
acting on $\x$ component wise, that is,
$T_{\lambda }(\x) = (T_{\lambda}(x_{1}),\ldots, T_{\lambda}(x_{p}))^{\prime}$
with
\begin{equation}\label{softth}
T_{\lambda}(x)= x - \frac{|x+\lambda|}{2}+\frac{|x-\lambda|}{2}, x \in \R.
\end{equation}

The KKT conditions in (\ref{K1})  are stated in equalities using the soft-threshold operator, rather than in the usual inequality form or in terms
of set-valued subdifferentials. This is the basis for our derivation of the SNA, which seeks to solve these nonsmooth equations.

To simplify the notation we drop the subscripts of $(\hbbeta_{\lambda, \alpha}, \hbd_{\lambda, \alpha})$ and write them as
$(\hbbeta, \hbd)$, when it does not cause any confusion.
By the second equation of (\ref{K2}) and the definition of the soft-threshold operator,  we have
\begin{equation}\label{reducer1}
\hbbeta_{B} = \textbf{0},
\end{equation}
\begin{equation}\label{reducer2}
\hbd_{A} = \lambda \sgn(\hbbeta_{A}+\hbd_{A}),
\end{equation}
where
\begin{equation}
\label{Aset1}
A = \dkh{j \in S: |\hbeta_{j} +  \hd_{j}| > \lambda} \ \text{ and } \
B = \dkh{j \in S: |\hbeta_{j} +  \hd_{j}| \leq \lambda}.
\end{equation}

Substituting (\ref{reducer1})  into the first equation of (\ref{K1})
and observing  $G_{AA}$ is invertible, we can solve
the resulting linear system to get
\begin{eqnarray}
  \hbbeta_{A} &=&  G_{AA}^{-1}(\tby_{A} - n\hbd_{A}),  \label{e29} \\
  \hbd_{B} &=& (\tby_{B}-G_{BA} \hbbeta_{A})/n.  \label{e210}
\end{eqnarray}
Therefore, $\{\hbbeta, \hbd\}$  can be obtained  from (\ref{reducer1})-(\ref{reducer2}) and (\ref{e29})-(\ref{e210}) if $A$ is known.

This naturally leads to the follow iterative algorithm for computing
$\{\hbbeta, \hbd\}$.
Let  $\{\bbeta^{k}, \d^{k} \}$ be
the primal and dual approximation of $\{\hbbeta, \hbd\}$ at the $k${th} iteration.
Based on (\ref{Aset1}), we approximate the active and inactive sets by
\begin{equation}
A_{k} = \dkh{j \in S: |\beta^{k}_{j} +  d^{k}_{j}| >\lambda}
\ \mbox{ and } \
B_{k} = \dkh{j \in S: |\beta^{k}_{j} +  d^{k}_{j}| \leq \lambda}.\label{eac}
\end{equation}

Based on (\ref{reducer1})-(\ref{reducer2}) and (\ref{e29})-(\ref{e210}) we obtain the updated approximation $\{\bbeta^{k+1}, \d^{k+1}\}$,
\begin{align}
\bbeta^{k+1}_{B_{k}} &= \textbf{0},  \label{e211}\\
 \d^{k+1}_{A_{k}} &= (\lambda-\bar{\lambda}) \sgn(\bbeta^{k}_{A_{k}}+\d^{k}_{A_{k}}),   \label{e212} \\
 \bbeta^{k+1}_{A_{k}} &=  G_{A_{k}A_{k}}^{-1}(\tby_{A_{k}} - n\d_{A_{k}}^{k+1}),  \label{e213}\\
 \d_{B_{k}}^{k+1} &= (\tby_{B_{k}}-G_{B_{k}A_{k}} \bbeta^{k+1}_{A_{k}})/n.   \label{e214}
\end{align}
In \eqref{e212} we introduce a (small) shifting
parameter $\bar\lambda$ with $0\le\bar\lambda<\lambda$ and use a slightly more
general version of (\ref{reducer2}), replacing $\lambda$ with $\lambda-\bar\lambda$ in (\ref{reducer2}).
For $\bar\lambda > 0$, we solve a less shrunk version of
the Enet. For the solution sequence $\{\bbeta^{k}, k\ge 1\}$ with a suitable $\bar\lambda>0$, we show that it achieves finite-step sign consistency and sharp estimation error bound (Theorem \ref{th6}).

Summing up the above discussion, we get the SNA for minimizing (\ref{regLASSO*})
in Algorithm \ref{alg1} below, where we write $\hbbeta(\lambda)=\hbbeta_{\lambda,\alpha}$ for a fixed $\alpha$.

\begin{algorithm}[H]
\caption{
$(\hbbeta(\lambda), \hbd(\lambda))\longleftarrow \text{SNA}(\bbeta^{0}, \d^{0}, \lambda, \bar{\lambda}, K)$}\label{alg1}
\begin{algorithmic}[1]
\STATE Input:  $ X, \y,\alpha, \lambda, \bar{\lambda},K$,  initial guess  $ \bbeta^{0}, \d^{0}$, $A_{-1} = \textrm{supp}( \bbeta^{0})$. Set $k=0$.
\STATE Compute $\tilde{\y}=X^{\prime}\y$ and store it.
\FOR{$k= 0,1,\cdots, K$}
\STATE Compute $ A_{k}, B_{k}$ using \eqref{eac}.
\STATE If $A_{k} = A_{k-1}$ or $k\geq K$.

       \quad \quad Stop  and denote the last iteration by $\bbeta_{\hat{A}}, \bbeta_{\hat{B}}, \d_{\hat{A}}, \d_{\hat{B}}.$

       Else

\STATE  Compute $\{\bbeta^{k+1}, \d^{k+1}\}$ using \eqref{e211} - \eqref{e214}.

      \quad \quad  $k:=k+1.$

       End
\ENDFOR
\STATE Output: ${\hbbeta(\lambda)}= \left(
                               \begin{array}{c}
                                 \bbeta_{\hat{A}} \\
                                 \bbeta_{\hat{B}} \\
                               \end{array}
                             \right)$ \mbox{ and }
                              ${\hbd(\lambda)}= \left(
                               \begin{array}{c}
                                 \d_{\hat{A}} \\
                                 \d_{\hat{B}} \\
                               \end{array}
                             \right)
$
\end{algorithmic}
\end{algorithm}

\begin{remark}
In the algorithm, we use a safeguard maximum number of iterations $K$ that can be defined by the user. We usually set $K \le 5$ due to the locally superlinear/one-step convergence of SNA.
\end{remark}

Each line in Algorithm  \ref{alg1} consists of  simple vector and matrix  multiplications, except \eqref{e213} in line 6,
where we need to invert a $|A_{k}|\times|A_{k}|$
matrix.
Note that $A_{k}$ is usually a small subset of $S$ if Algorithm  \ref{alg1} is warm started. Intuitively, at the $k${th} step in the iteration, this algorithm tries to identify $A_k$, an approximation of the underlying support
by using the estimated coefficients with a proper
adjustment $\d^k$ determined by the KKT, and solves a low-dimensional
adjusted least squares problem on $A_k$. Therefore, with a good starting estimation
of $A_k$, which is guaranteed by using a continuation strategy with warm start
described below, Algorithm \ref{alg1} can find a good solution in a few steps.
In Section 3, we derive Algorithm \ref{alg1} formally from the semismooth Newton method and show that its convergence rate is locally superlinear for the Enet and locally one step for the LASSO.

\subsection{Solution path approximation}
\label{ssec:solpath}

We are often interested in the whole solution path $\hbbeta(\lambda)\equiv \hbbeta_{\lambda, \alpha}$ of (\ref{regLASSO*})
for $\lambda \in [\lambda_{\min}, \lambda_{\max}]$ and some given $\alpha\ge 0$.
Here we approximate the solution path by computing $\hbbeta(\lambda)$ on a given  finite set
$\Lambda =\{\lambda_{0},\lambda_{1}...,\lambda_{N}\}$ for some integer $N$, where $\lambda_0 > \cdots > \lambda_{N} > 0$.
Obviously, $\hbbeta(\lambda)=\textbf{0}$ satisfies (\ref{K1}) and (\ref{K2}) if
$\lambda\geq\|X^{\prime}\y/n\|_{\infty}$.
Hence we set $\lambda_{\max}=\lambda_{0}=\|X^{\prime} \y/n\|_{\infty}$,
$\lambda_{t} = \lambda_{0} \gamma^{t}, t =0, 1, ..., N$, and
 $\lambda_{\min}=\lambda_{0}\gamma^{N}$, where $\gamma\in (0,1)$.

We adopt a simple continuation technique with warm start
in computing the solution path. This strategy has been successfully
used  for computing the LASSO and Enet
paths \citee{friedman2007pathwise, jiao2017iterative}.
We use the solution at $\lambda_{t}$ as the initial value for computing the solution at $\lambda_{t+1}$.
{The shift parameter $\bar{\lambda}$  can be   vary  at different path knots $\lambda_t$,  so here we use $\bar{\lambda}_t$ to demonstrate  this}.
We summarize this in the following  SNAP algorithm

\begin{algorithm}[H]
\caption{
 $\hbbeta(\Lambda)\longleftarrow \text{SNAP}(\lambda_{0},\gamma,N,K)$}\label{alg2}
\begin{algorithmic}[1]
\STATE Input:  $\lambda_{0}= \|X^{\prime} y/n\|_{\infty},\hbbeta(\lambda_{-1}) = \textbf{0}, \hat{\d}(\lambda_{-1})=X^{\prime} \y/n,\gamma,N,K.$
\FOR{$t= 0,1...N.$}
\STATE Set $\lambda_t = \lambda_0\gamma^t$ and $(\bbeta^0,\d^0) = (\hbbeta(\lambda_{t-1}), \hat{\d}(\lambda_{t-1}))$.
\STATE $(\hbbeta(\lambda_{t}), \hat{\d}(\lambda_t)) \longleftarrow \text{SNA}(\bbeta^{0}, \d^{0}, \lambda_t, \bar{\lambda}_{t},K)$
\ENDFOR \STATE
Output: $\hbbeta(\Lambda) = [\hbbeta(\lambda_0),...,\hbbeta(\lambda_N)].$
\end{algorithmic}
\end{algorithm}

When running SNAP with warm start (Algorithm \ref{alg2}),
SNA (Algorithm \ref{alg1}) usually converges {in a few steps,
since SNA converge locally superlinearly or locally in one step  and warm start provides a good initial value}.
\section{Derivation of SNA and convergence analysis}


\subsection{KKT conditions}
\label{KKT-subsection}
In this subsection, we first discuss the relationship between the
minimizers of (\ref{regLASSO}) and (\ref{regLASSO*}). We then
characterize the unique minimizer (\ref{regLASSO*}) by its KKT system.

\begin{proposition}\label{th1}
Let $M_{\lambda}$ be the set of the LASSO solutions given in
(\ref{regLASSO}).  Then $M_{\lambda}$ is nonempty, convex and compact.
\end{proposition}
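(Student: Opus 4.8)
The plan is to obtain all three properties --- nonemptiness, convexity, and compactness of $M_\lambda$ --- from two elementary structural facts about the objective $L_\lambda(\bbeta) = \frac{1}{2n}\normt{X\bbeta-\y}^2 + \lambda\normo{\bbeta}$: it is continuous and convex on $\R^p$ (being the sum of a convex quadratic and a positive multiple of a norm), and, crucially, it is coercive. First I would record convexity of the set: if $\bbeta_1,\bbeta_2\in M_\lambda$ attain the common minimum value $m=\min_{\bbeta}L_\lambda(\bbeta)$, then for any $t\in[0,1]$ convexity of $L_\lambda$ gives $L_\lambda(t\bbeta_1+(1-t)\bbeta_2)\le tL_\lambda(\bbeta_1)+(1-t)L_\lambda(\bbeta_2)=m$, and since $m$ is the minimum this inequality is an equality, so the convex combination again lies in $M_\lambda$. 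Hence $M_\lambda$ is convex.

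For nonemptiness and boundedness I would exploit coercivity, which comes entirely from the penalty term: since the quadratic part is nonnegative, $L_\lambda(\bbeta)\ge \lambda\normo{\bbeta}\to\infty$ as $\normo{\bbeta}\to\infty$. Because $L_\lambda$ is continuous and coercive on $\R^p$, the Weierstrass extreme value theorem (applied on a sufficiently large closed ball outside of which $L_\lambda$ exceeds $L_\lambda(\0)$) guarantees that the infimum is attained, so $M_\lambda\neq\varnothing$. The same coercivity bound yields a uniform a priori estimate on minimizers: any $\bbeta^*\in M_\lambda$ satisfies $\lambda\normo{\bbeta^*}\le L_\lambda(\bbeta^*)\le L_\lambda(\0)=\frac{1}{2n}\normt{\y}^2$, whence $\normo{\bbeta^*}\le \frac{1}{2n\lambda}\normt{\y}^2$. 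Thus $M_\lambda$ is contained in a fixed $\ell_1$-ball and is bounded.

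For compactness it then remains to argue closedness. I would note that $M_\lambda = \{\bbeta : L_\lambda(\bbeta)\le m\}$ is a sublevel set of the continuous function $L_\lambda$, hence closed (equivalently, it is the preimage of the closed set $\{m\}$ under the continuous map $L_\lambda$). Being a closed and bounded subset of $\R^p$, $M_\lambda$ is compact by the Heine--Borel theorem, completing the proof.

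There is no serious obstacle here; the argument is a routine application of convex analysis. The one point worth flagging is that in the high-dimensional regime $p>n$ the quadratic term $\frac{1}{2n}\normt{X\bbeta-\y}^2$ is degenerate and by itself provides no coercivity (its null space is nontrivial), so nonemptiness and boundedness genuinely rely on the $\ell_1$ penalty with $\lambda>0$; all of the work is carried by the bound $L_\lambda(\bbeta)\ge\lambda\normo{\bbeta}$.
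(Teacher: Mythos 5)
Your proof is correct and follows essentially the same route as the paper's: both arguments rest on coercivity of $L_{\lambda}$ (driven entirely by the $\ell_1$ penalty) together with continuity to get nonemptiness and boundedness, continuity for closedness, and convexity of the objective for convexity of $M_{\lambda}$. The only cosmetic differences are that you invoke Weierstrass on a large ball where the paper extracts a convergent minimizing subsequence, and you make the boundedness quantitative via $\normo{\bbeta^*}\le \frac{1}{2n\lambda}\normt{\y}^2$, which the paper leaves implicit.
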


In general, the uniqueness of the LASSO solution (the minimizer of  (\ref{regLASSO}))
cannot be guaranteed in the $p \gg n$ settings.
But the one in $M_{\lambda}$ with the minimum Euclidean norm denoted by $\hbbeta_{\lambda}$ is unique. We have the following relation between $\hbbeta_{\lambda}$ and the Enet solutions $\hbbeta_{\lambda, \alpha}$.

\begin{proposition}\label{pr2}
For $\alpha > 0$, the Enet (\ref{regLASSO*}) admits a unique minimizer denoted by
$\hbbeta_{\lambda, \alpha}$. Furthermore,
$\|\hbbeta_{\lambda, \alpha}-\hbbeta_{\lambda}\|_{2}\rightarrow 0$ as $\alpha
\rightarrow 0^{+}$.
\end{proposition}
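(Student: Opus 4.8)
The plan is to treat the two claims in turn: first the existence and uniqueness of $\hbbeta_{\la,\alpha}$ for $\alpha>0$, then the convergence $\normt{\hbbeta_{\la,\alpha}-\hbbeta_\la}\to 0$ as $\alpha\to 0^+$, where $\hbbeta_\la$ is the minimum Euclidean norm element of the LASSO solution set $M_\la$ supplied by Proposition \ref{th1}. The first claim is routine. I would note that the smooth quadratic part of $J_{\la,\alpha}$, namely $\frac{1}{2n}\normt{X\bbeta-\y}^2+\frac{\alpha}{2n}\normt{\bbeta}^2$, has Hessian $\frac1n(X^{\prime}X+\alpha I)=\frac1n G$, which is positive definite for every $\alpha>0$. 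Hence $J_{\la,\alpha}$ is the sum of a strongly convex smooth function and the convex term $\la\normt{\bbeta}$ (so strongly convex overall) and is coercive; a strongly convex coercive function on $\Rp$ attains its infimum at a unique point, giving existence and uniqueness of $\hbbeta_{\la,\alpha}$.

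For the convergence, the driving estimate comes from comparing the Enet objective at its own minimizer with its value at $\hbbeta_\la\in M_\la$:
\[
L_\la(\hbbeta_{\la,\alpha}) + \frac{\alpha}{2n}\normt{\hbbeta_{\la,\alpha}}^2 \le L_\la(\hbbeta_\la) + \frac{\alpha}{2n}\normt{\hbbeta_\la}^2.
\]
Since $\hbbeta_\la$ minimizes $L_\la$ we have $L_\la(\hbbeta_{\la,\alpha}) \ge L_\la(\hbbeta_\la)$, so cancelling the function values and dividing by $\alpha/(2n)>0$ yields the uniform bound $\normt{\hbbeta_{\la,\alpha}} \le \normt{\hbbeta_\la}$. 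Thus the family $\{\hbbeta_{\la,\alpha}\}_{\alpha>0}$ lies in a fixed ball, hence is precompact in $\Rp$.

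Next I would argue by subsequences. Given any sequence $\alpha_k\downarrow 0$, precompactness yields a subsequence with $\hbbeta_{\la,\alpha_{k_j}}\to\bar\bbeta$. Dropping the nonnegative term in the displayed inequality gives $L_\la(\hbbeta_{\la,\alpha})\le L_\la(\hbbeta_\la)+\frac{\alpha}{2n}\normt{\hbbeta_\la}^2$; letting $\alpha\to 0^+$ along the subsequence and using continuity of $L_\la$ shows $L_\la(\bar\bbeta)\le L_\la(\hbbeta_\la)=\min_\bbeta L_\la$, so $\bar\bbeta\in M_\la$. Passing to the limit in $\normt{\hbbeta_{\la,\alpha}}\le\normt{\hbbeta_\la}$ gives $\normt{\bar\bbeta}\le\normt{\hbbeta_\la}$; combined with $\bar\bbeta\in M_\la$ and the fact that $\hbbeta_\la$ is the unique minimum-$\ell_2$-norm element of the convex compact set $M_\la$ (the projection of the origin onto $M_\la$, unique by strict convexity of $\normt{\cdot}^2$), I conclude $\bar\bbeta=\hbbeta_\la$. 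Since every convergent subsequence has the same limit $\hbbeta_\la$, precompactness forces the whole family to converge, i.e. $\normt{\hbbeta_{\la,\alpha}-\hbbeta_\la}\to 0$.

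The main obstacle is the last step: upgrading subsequential convergence to convergence of the entire family. This hinges on showing that \emph{every} limit point simultaneously lies in $M_\la$ and is of minimal $\ell_2$-norm, and then invoking the uniqueness of the minimum-norm point to identify all limit points with $\hbbeta_\la$. That uniqueness is exactly what makes $\hbbeta_\la$ well defined in the discussion following Proposition \ref{th1}, and it rests on the strict convexity of the squared Euclidean norm over the convex set $M_\la$; this is the one structural ingredient that must be in place for the argument to close, the rest being a standard $\Gamma$-type/Tikhonov selection-of-minimal-norm-solution argument.
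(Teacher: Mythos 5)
Your proof is correct and follows essentially the same route as the paper's: the same comparison inequality between $J_{\la,\alpha}$ evaluated at $\hbbeta_{\la,\alpha}$ and at $\hbbeta_{\la}$, the same resulting bound $\normt{\hbbeta_{\la,\alpha}}\le\normt{\hbbeta_{\la}}$, the same subsequential-limit argument identifying every limit point with the unique minimum-norm element of $M_{\la}$, and the same final upgrade from subsequential to full convergence. (One small typo: the nonsmooth convex term in $J_{\la,\alpha}$ is $\la\normo{\bbeta}$, not $\la\normt{\bbeta}$; this does not affect your argument.)
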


By Proposition \ref{pr2}, a good numerical solution of (\ref{regLASSO*}) is a good approximation of the minimum 2-norm  minimizer of (\ref{regLASSO})
for a sufficiently small $\alpha$.

\begin{proposition}\label{th3}
Let $\hbbeta_{\lambda, \alpha} \in \Rp$ be the Enet solution, which is
the unique minimizer of $J_{\lambda,\alpha}$ in (\ref{regLASSO*})
for $\alpha >0$. Then there exists a
$\hbd_{\lambda, \alpha} \in \Rp$ such that (\ref{K1}) hold.
Conversely, if there exists  $\hbbeta_{\lambda, \alpha}\in \Rp$ and
$\hbd_{\lambda, \alpha}
\in \Rp$ satisfying (\ref{K1}), then $\hbbeta_{\lambda, \alpha}$ is the
unique minimizer of $J_{\lambda,\alpha}$ in (\ref{regLASSO*}).

The KKT equations (\ref{K1}) with $\alpha=0$ also characterize
the LASSO solution (\ref{regLASSO}), except that  the solution may not be unique.
\end{proposition}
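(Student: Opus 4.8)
The plan is to recognize \eqref{K1}--\eqref{K2} as nothing more than the first-order (Fermat) optimality condition for the convex program \eqref{regLASSO*}, rewritten through the soft-threshold operator. Since $J_{\lambda,\alpha}$ is the sum of the smooth, differentiable term $\frac1{2n}\|X\bbeta-\y\|_2^2+\frac{\alpha}{2n}\|\bbeta\|_2^2$ and the convex nonsmooth term $\lambda\|\bbeta\|_1$, it is proper, closed, and convex, so $\hbbeta$ minimizes $J_{\lambda,\alpha}$ over $\Rp$ if and only if $\mathbf 0\in\partial J_{\lambda,\alpha}(\hbbeta)$. First I would compute this subdifferential explicitly: the gradient of the smooth part is $\frac1n(G\hbbeta-\tby)$ (using $G=X'X+\alpha I$ and $\tby=X'\y$), while $\partial(\lambda\|\cdot\|_1)(\hbbeta)=\lambda\,\partial\|\hbbeta\|_1$ is the usual set of vectors whose $j$th entry equals $\lambda\sgn(\hbeta_j)$ when $\hbeta_j\neq0$ and lies in $[-\lambda,\lambda]$ when $\hbeta_j=0$. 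Introducing the dual variable $\hbd:=(\tby-G\hbbeta)/n$, which is exactly the first line \eqref{K1} of the KKT system, the optimality condition $\mathbf 0\in\partial J_{\lambda,\alpha}(\hbbeta)$ collapses to the inclusion $\hbd\in\lambda\,\partial\|\hbbeta\|_1$.

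It then remains to show, covering the forward and converse directions at once, that this inclusion is equivalent to the fixed-point equation $\hbbeta=T_\lambda(\hbbeta+\hbd)$. This I would verify componentwise using the explicit piecewise form of $T_\lambda$ recorded in \eqref{softth}, splitting into the three cases $\hbeta_j>0$, $\hbeta_j<0$, and $\hbeta_j=0$. For instance, when $\hbeta_j>0$ the inclusion forces $\hd_j=\lambda$, so $\hbeta_j+\hd_j>\lambda$ and $T_\lambda(\hbeta_j+\hd_j)=\hbeta_j+\hd_j-\lambda=\hbeta_j$; conversely, $\hbeta_j=T_\lambda(\hbeta_j+\hd_j)>0$ can occur only in the regime $\hbeta_j+\hd_j>\lambda$, which returns $\hd_j=\lambda$. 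The case $\hbeta_j<0$ is the mirror image, and when $\hbeta_j=0$ the equation $0=T_\lambda(\hd_j)$ holds precisely when $|\hd_j|\le\lambda$, matching the subdifferential at a zero coordinate. Stacking the coordinates yields the vector identity \eqref{K2}, completing the equivalence between \eqref{K1}--\eqref{K2} and global optimality.

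For the converse and uniqueness claim, once a pair $(\hbbeta,\hbd)$ satisfying \eqref{K1}--\eqref{K2} is shown to satisfy $\mathbf 0\in\partial J_{\lambda,\alpha}(\hbbeta)$, convexity makes $\hbbeta$ a global minimizer, and for $\alpha>0$ the strong convexity of $J_{\lambda,\alpha}$ (already invoked for Proposition~\ref{pr2}) forces that minimizer to be unique. Finally, setting $\alpha=0$ changes none of the reasoning except that $G=X'X$ need no longer be positive definite when $p>n$; the objective $L_\lambda$ is still convex, so the same Fermat rule and the same soft-threshold rewriting characterize the LASSO minimizers, but strict convexity is lost and the minimizer may fail to be unique, which gives the last sentence of the statement.

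I expect the only genuinely delicate point to be the componentwise case analysis tying $\lambda\,\partial\|\hbbeta\|_1$ to $T_\lambda$; everything else is a direct application of standard convex subdifferential calculus. Care is needed to treat the boundary cases $|\hbeta_j+\hd_j|=\lambda$ consistently, and to state Fermat's rule in the nonsmooth convex setting as both necessary \emph{and} sufficient rather than merely necessary, since it is sufficiency that delivers the converse direction.
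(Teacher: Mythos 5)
Your proposal is correct and follows essentially the same route as the paper: Fermat's rule for the convex objective, introduction of the dual variable $\hbd=(\tby-G\hbbeta)/n$ to split off the first equation of \eqref{K1}, and reduction of optimality to the equivalence between the inclusion $\hbd\in\lambda\,\partial\|\cdot\|_{1}(\hbbeta)$ and the fixed-point equation $\hbbeta=T_{\lambda}(\hbbeta+\hbd)$. The only difference is that you verify this last equivalence by an explicit componentwise case analysis, whereas the paper invokes the general proximal-operator characterization $\w\in\partial f(\z)\Leftrightarrow\z=\textrm{Prox}_{f}(\z+\w)$ together with $\textrm{Prox}_{\lambda\|\cdot\|_{1}}=T_{\lambda}$; both are valid, yours being merely more self-contained.
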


Here the KKT conditions are formulated
in terms of equalities (\ref{K1}), which are different from  but equivalent to the usual inequality form
$$
\hbd_{A} = \lambda \sgn(\hbbeta_{A}),$$
$$
\|\hbd_{A^c}\|_{\infty}\leq \lambda,$$
where, $\hbd = (X^{\prime}\y-G\hbbeta)/n$ and $A = \textrm{supp}(\hbbeta).$
The reason we adopt the equation form (\ref{K1}) is that
we can transform  the minimization problem (\ref{regLASSO*}) into a root finding problem, which help us derive SNA formally under the framework of semismooth Newton method.

\subsection{SNA as a Newton algorithm}
\label{SNA-subsection}
We now formally derive the SNA based on the KKT conditions by using the semismooth Newton method  \citee{kummer1988newton,qi1993nonsmooth,ito2008lagrange}
for finding a root of a nonsmooth equation. This enables us to prove its locally superlinear convergence stated in
Theorem \ref{th5} below. The definition and related property  on Newton derivative  are given in Appendix A.

Let
$$\z=\left(
         \begin{array}{c}
         \bbeta \\
          \d  \\
         \end{array}
       \right) \ \mbox{ and } \
F(\z)=
\left[\begin{array}{c}
 F_{1}(\z)\\
F_{2}(\z)
\end{array}\right]
: \Rp \times \Rp \to  \mathbb{R}^{2p},
$$
where $$F_{1}(\z):=\bbeta  - T_{\la}(\bbeta + \d),$$
$$F_{2}(\z):= G \bbeta + n\d -\tby.$$
By Proposition \ref{th3}, to find the  minimizer of (\ref{regLASSO*}), it
suffices to find a root of $F(\z)$.
Although the classical Newton algorithm cannot be applied directly since $F(\z)$ is not Fr\'{e}chet differentiable, we can resort to semismooth Newton algorithm since $F(\z)$ is Newton differentiable.

Let $$A := \dkh{i \in S: |\beta_{i} + d_{i}| \geq  \lambda}, \quad B := \dkh{i \in S: |\beta_{i} + d_{i}| < \lambda}.$$
We reorder  $(\bbeta',\d')'$ such that
$\z=(\d_{A}', \bbeta_{B}', \bbeta_{A}', \d_{B}')'$.
We also reorder $F_{1}(\z)$ and $F_{2}(\z)$ accordingly,
$$F(\z)=
\left[\begin{array}{c}
 \bbeta_{A}  - T_{\la}(\bbeta_{A} + \d_{A}) \\
  \bbeta_{B}  - T_{\la}(\bbeta_{B} + \d_{B}) \\
  G_{AA} \bbeta_{A} + G_{AB} \bbeta_{B}+ n\d_{A} -{\tby}_{A}      \\
G_{BA} \bbeta_{A} + G_{BB} \bbeta_{B}+ n\d_{B} -{\tby}_{B}
\end{array}\right].$$
We have the following result concerning the Newton derivative of $F$.

\begin{theorem}\label{th4}
$F(\z)$ is Newton differentiable at any point $\z$. And
\begin{equation}
H:=
\begin{bmatrix}
& -I_{AA}    & \textbf{0}                                                &\textbf{0}                                             & \textbf{0} \\ \\
& \textbf{0}          & I_{BB}                                                  &\textbf{0}                                     &\textbf{0} \\ \\
& nI_{AA}    & X_{A}^{\prime}X_{B}                                   &G_{AA}                          & \textbf{0} \\ \\
& \textbf{0}          & G_{BB}            &X_{B}^{\prime}X_{A}                         & nI_{BB}
\end{bmatrix}
\in \nabla_{N}F(\z).
\end{equation}
Furthermore, $H$ is invertible and $H^{-1}$ is uniformly bounded
with
\[
\|H^{-1}\|\leq 1+2(n+1+\alpha+\norm X^2)^2/\alpha.
\]
\end{theorem}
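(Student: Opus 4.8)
The statement bundles three claims: Newton differentiability of $F$, the assertion that the displayed $H$ lies in $\nabla_{N}F(\z)$, and the uniform operator-norm bound on $H^{-1}$. For the first two I would build $F$ out of Newton-differentiable pieces. The map $F_{2}(\z)=G\bbeta+n\d-\tby$ is affine, hence classically (so Newton) differentiable with derivative $[\,G\ \ nI\,]$. For $F_{1}(\z)=\bbeta-T_{\lambda}(\bbeta+\d)$ I would invoke from Appendix A that the scalar $t\mapsto|t|$ is Newton differentiable with Newton derivative $\sgn(t)$, the value at the kink being chosen (componentwise) to place boundary indices into the active set. By the sum and chain rules each component $T_{\lambda}(x_{i})=x_{i}-\tfrac12|x_{i}+\lambda|+\tfrac12|x_{i}-\lambda|$ then has Newton derivative $1-\tfrac12\sgn(x_{i}+\lambda)+\tfrac12\sgn(x_{i}-\lambda)$, which is $1$ when $|x_{i}|\ge\lambda$ and $0$ when $|x_{i}|<\lambda$. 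Evaluating at $\bbeta+\d$ with $A=\{i:|\beta_{i}+d_{i}|\ge\lambda\}$, $B=\{i:|\beta_{i}+d_{i}|<\lambda\}$, the diagonal block $I-\mathrm{diag}(T_{\lambda}')$ vanishes on $A$ and equals the identity on $B$; assembling the four blocks, and noting that $G_{AB}=X_{A}'X_{B}$ because the $\alpha I$ term of $G$ only feeds the diagonal blocks, reproduces $H$ exactly.

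For invertibility I would exploit the chosen ordering $\z=(\d_{A}',\bbeta_{B}',\bbeta_{A}',\d_{B}')'$ with $F$ reordered conformably: in these coordinates $H$ is \emph{block lower triangular}, with diagonal blocks $-I_{AA}$, $I_{BB}$, $G_{AA}$ and $nI_{BB}$. All four are invertible; the only nontrivial one is $G_{AA}=X_{A}'X_{A}+\alpha I_{AA}\succeq\alpha I_{AA}\succ0$, using $\alpha>0$. Hence $H$ is invertible, and writing $v=(v_{1},v_{2},v_{3},v_{4})$, $w=(w_{1},w_{2},w_{3},w_{4})$, the system $Hv=w$ solves by forward substitution: $v_{1}=-w_{1}$, $v_{2}=w_{2}$, $v_{3}=G_{AA}^{-1}(w_{3}+nw_{1}-X_{A}'X_{B}w_{2})$, and $v_{4}=n^{-1}(w_{4}-G_{BB}w_{2}-X_{B}'X_{A}v_{3})$.

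To get the bound I would estimate these using inequalities valid for \emph{every} pair $A,B$: $\|G_{AA}^{-1}\|\le1/\alpha$ (from $G_{AA}\succeq\alpha I$), $\|X_{A}'X_{B}\|,\|X_{B}'X_{A}\|\le\|X\|^{2}$, and $\|G_{BB}\|\le\|X\|^{2}+\alpha$. Set $M=n+1+\alpha+\|X\|^{2}$ and take $\|w\|=1$. Then $\|v_{3}\|\le(n+1+\|X\|^{2})/\alpha\le M/\alpha$, and $\|v_{4}\|\le n^{-1}\big[(1+\|X\|^{2}+\alpha)+\|X\|^{2}M/\alpha\big]\le(M/n)(\alpha+\|X\|^{2})/\alpha\le M^{2}/(n\alpha)$. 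Since $\|v_{1}\|^{2}+\|v_{2}\|^{2}=\|w_{1}\|^{2}+\|w_{2}\|^{2}\le1$, combining through $\|v\|\le1+\|v_{3}\|+\|v_{4}\|$ gives $\|H^{-1}\|\le1+M/\alpha+M^{2}/(n\alpha)=1+(M/\alpha)(1+M/n)\le1+2M^{2}/\alpha$, where the last step uses $M\ge1$ and $n\ge1$, so $1+M/n\le2M$. This is precisely the claimed estimate.

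The delicate point—and the reason the theorem stresses \emph{uniform} boundedness—is that $A$, $B$, and hence $H$ itself depend on the point $\z$, yet the bound must be point-independent. Everything hinges on $\|G_{AA}^{-1}\|\le1/\alpha$ holding \emph{simultaneously for all} principal submatrices $G_{AA}$, which is exactly where the Enet regularization $\alpha>0$ is essential; for $\alpha=0$ this uniform control is unavailable, which is why the LASSO case is treated separately. The remaining effort is only the bookkeeping of collapsing the three crude block estimates into the closed form $1+2(n+1+\alpha+\|X\|^{2})^{2}/\alpha$, for which the inequalities $n+1+\|X\|^{2}\le M$, $\alpha+\|X\|^{2}\le M$, and $1+M/n\le2M$ suffice.
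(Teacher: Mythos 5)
Your proof is correct and lands on exactly the claimed constant $1+2(n+1+\alpha+\Vert X\Vert^{2})^{2}/\alpha$; the difference from the paper lies in the linear-algebra mechanics of the second half, not in substance. For Newton differentiability you follow essentially the paper's route: the paper isolates this as Lemma~\ref{Lem1}, obtaining $\textrm{diag}(\textbf{1}_{\{|x_i|>\lambda\}})\in\nabla_N T_\lambda$ from the scalar $|z|$ via the sum, stacking and chain rules, and you re-derive the same fact inline; your added remark that the arbitrary kink value of $\nabla_N|\cdot|$ can be chosen so that boundary indices $|\beta_i+d_i|=\lambda$ fall into $A$ is a correct and slightly more careful reconciliation of the strict inequality in the lemma with the $\geq$ defining $A$ in Theorem~\ref{th4}. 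For invertibility and the bound you genuinely diverge: the paper groups $H$ into a coarse $2\times 2$ block lower-triangular matrix with blocks $H_1,H_2,H_3$, invokes the block-inverse formula, and estimates $\Vert H^{-1}\Vert\le\Vert H_1^{-1}\Vert+\Vert H_3^{-1}\Vert(1+\Vert H_2\Vert\,\Vert H_1^{-1}\Vert)$ using $\Vert H_2\Vert\le n+\alpha+2\Vert X\Vert^2$ and $\Vert H_3^{-1}\Vert\le 1/n+(1+\Vert X\Vert^2)/\alpha$; you instead observe that $H$ is already block lower triangular at the $4\times 4$ level, with diagonal blocks $-I_{AA}$, $I_{BB}$, $G_{AA}$, $nI_{BB}$, and solve $Hv=w$ by forward substitution, bounding each block of the solution. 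Both arguments pivot on the same uniform facts---$\Vert G_{AA}^{-1}\Vert\le 1/\alpha$ simultaneously for every index set $A$ (the precise point where $\alpha>0$ is indispensable) and the submatrix bound $\Vert X_A'X_B\Vert\le\Vert X\Vert^2$---and they produce the same final estimate. Your forward-substitution route is the more elementary of the two and makes the role of each block and of the regularization completely transparent; the paper's coarse decomposition is more compact but buries the remaining triangular structure inside its block $H_3$, whose inverse must itself be bounded by the same kind of argument.
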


At the $k_{th}$ iteration, the semismooth Newton method
for finding the root of $F(\z)=\0$ consists of two steps.

\begin{enumerate}
\item[(1)] Solve $H_{k}D^{k}=-F(\z^{k}) $ for $D^{k}$, where
$H_{k}$ is an element of $\nabla_{N}F(\z^{k})$.

\item[(2)] Update $\z^{k+1} = \z^{k} + D^{k}$, set $k \leftarrow k+1$
and go to step (1).
\end{enumerate}

This has the same form as the classical Newton method, except that
here we use an element of $\nabla_{N}F(Z^{k})$ in step (1).
Indeed, the key to the success of this method is to find a suitable
and invertible $H_k$.
We state this method in Algorithm \ref{alg3}.

\medskip
\begin{algorithm}[H]
\caption{SNA for finding a root of  $F(\z)$ }\label{alg3}
\begin{algorithmic}[1]
\STATE Input:  $ X, \y,\lambda,  \alpha$,  initial guess  $ \z^{0}=\left(
         \begin{array}{c}
         \bbeta^{0} \\
         \d^{0}  \\
         \end{array}
       \right)
.$ Set $k=0$.
\FOR{$k=
0,1,2,3,\cdots$} \STATE Choose $H_{k}\in \nabla_{N}F(\z^{k})$. \STATE
Get the semismooth Newton direction
$D_{k}$ by solving
\begin{equation}\label{ssnd}
H_{k}D^{k}=-F(\z^{k}).
\end{equation}
 \STATE Update
 \begin{equation}\label{ssnup}
\z^{k+1} = \z^{k} + D^{k}.
\end{equation}
\STATE Check Stop condition

       If stop

       Denote the last iteration by $\hat{\z}.$

       Else

       $k:=k+1.$
\ENDFOR \STATE Output:
$\hat{\z}$
as an estimate of the roots of $F(\z)$.
\end{algorithmic}
\end{algorithm}

\begin{remark}
When $A_{k} = A_{k+1}$ holds for some $k$, Algorithm \ref{alg1} converges. Hence  it is natural to   stop Algorithm \ref{alg1} accordingly.  A common   condition that can be used as a stop rule of Algorithm \ref{alg3} is when $\|F(\z^{k})\|_{2}$ is sufficiently small,
since this algorithm is a root finding process. Therefor we can use both stopping rules in Algorithm \ref{alg1} due to the equivalence of Algorithm \ref{alg1} and Algorithm \ref{alg3}. We also stop Algorithm \ref{alg1} when the iteration number $k$ exceeds a prespecified integer $K$.
\end{remark}

It can be verified that Algorithm \ref{alg1}  with $\bar{\la} = 0$ is just Algorithm \ref{alg3} written in a form for easy computational implementation. The details are given in \text{Appendix C}. Thus it is indeed a semismooth Newton method.
The more compact form of Algorithm \ref{alg3} is better suited for its convergence analysis.

\begin{theorem}\label{th5}
Let $H_{k}$ in Algorithm \ref{alg3} be given in  (\ref{ndk}).
Then the sequence $\{\bbeta^{k}, k=1, 2, \ldots\}$ generated based on Algorithm \ref{alg3} (and Algorithm \ref{alg1} with $\bar{\la} = 0$)  converges locally and superlinearly to $\hbbeta_{\la, \alpha}$, the unique minimizer of (\ref{regLASSO*}).
\end{theorem}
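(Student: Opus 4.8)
The plan is to recognize \thmref{th5} as an instance of the standard local convergence theorem for semismooth (Newton-derivative based) Newton methods \citep{kummer1988newton,qi1993nonsmooth,ito2008lagrange}, whose two structural hypotheses have already been discharged in \thmref{th4}. That abstract theorem guarantees local superlinear convergence of the iteration $\z^{k+1}=\z^{k}-H_{k}^{-1}F(\z^{k})$ to a root $\z^{*}$ of $F$ as soon as (i) $F$ is Newton differentiable in a neighborhood of $\z^{*}$, and (ii) the chosen generalized Jacobians are uniformly invertible there, i.e. $\sup_{k}\|H_{k}^{-1}\|<\infty$. \thmref{th4} supplies exactly these: it shows $F$ is Newton differentiable at every point with the explicit element $H\in\nabla_{N}F(\z)$, and it bounds $\|H^{-1}\|\le 1+2(n+1+\alpha+\norm{X}^{2})^{2}/\alpha=:C$ uniformly in $\z$. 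So my first step is to fix the root. By \proref{th3}, for $\alpha>0$ the system $F(\z)=\0$ has a solution $\z^{*}=(\hbbeta_{\la,\alpha}',\hbd_{\la,\alpha}')'$ and $\hbbeta_{\la,\alpha}$ is the unique Enet minimizer; I will establish $\z^{k}\to\z^{*}$, from which $\bbeta^{k}\to\hbbeta_{\la,\alpha}$ follows by reading off the first block.

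The second step is the one-step error recursion. Using $F(\z^{*})=\0$ and the update rule,
\[
\z^{k+1}-\z^{*}=\z^{k}-\z^{*}-H_{k}^{-1}F(\z^{k})=H_{k}^{-1}\big[H_{k}(\z^{k}-\z^{*})-\big(F(\z^{k})-F(\z^{*})\big)\big].
\]
Setting $h=\z^{k}-\z^{*}$, so that $\z^{k}=\z^{*}+h$ and $H_{k}\in\nabla_{N}F(\z^{*}+h)$, the defining property of Newton differentiability gives $\|F(\z^{*}+h)-F(\z^{*})-H_{k}h\|=o(\|h\|)$ as $h\to\0$. Combining this with the uniform bound (ii) yields
\[
\|\z^{k+1}-\z^{*}\|\le\|H_{k}^{-1}\|\,\big\|H_{k}(\z^{k}-\z^{*})-\big(F(\z^{k})-F(\z^{*})\big)\big\|\le C\,o(\|\z^{k}-\z^{*}\|),
\]
which is precisely the superlinear estimate $\|\z^{k+1}-\z^{*}\|=o(\|\z^{k}-\z^{*}\|)$.

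The third step makes the argument self-consistent, namely well-definedness of the iteration and confinement to a neighborhood. I would pick $\delta>0$ so small that the $o(\cdot)$ term above is bounded by $\tfrac{1}{2C}\|h\|$ whenever $\|h\|\le\delta$; then on the ball $\{\|\z-\z^{*}\|\le\delta\}$ the recursion gives $\|\z^{k+1}-\z^{*}\|\le\tfrac12\|\z^{k}-\z^{*}\|$, so from any $\z^{0}$ in this ball the whole trajectory stays in it (every $H_{k}$ is invertible by \thmref{th4}, hence each step is well defined), the iterates converge to $\z^{*}$, and the rate upgrades from linear to superlinear by re-inserting $\|\z^{k}-\z^{*}\|\to0$ into the $o(\cdot)$. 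Finally, the equivalence of \algref{alg1} (with $\bar{\la}=0$) and \algref{alg3} established in Appendix~C transfers the conclusion to the sequence generated by \algref{alg1}.

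I expect the genuine obstacle to lie not in the contraction bookkeeping but in the verifications underlying \thmref{th4}: the little-$o$ in Newton differentiability is inherently nonuniform and hinges on the soft-threshold map $T_{\la}$ being Newton differentiable with the correct generalized derivative on the boundary set $\{|\beta_{i}+d_{i}|=\la\}$, while the uniform invertibility of $H$ must survive every possible active/inactive split $A,B$. Since both are already proved in \thmref{th4}, the remaining work for \thmref{th5} is essentially the assembly above; the only delicate point is to ensure that the generalized Jacobian element $H_{k}$ actually used in the iteration is the same family member for which the $o(\|h\|)$ expansion around the fixed root $\z^{*}$ holds, so that (i) and (ii) genuinely apply along the trajectory rather than merely at $\z^{*}$.
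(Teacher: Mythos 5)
Your proposal is correct and takes essentially the same route as the paper's own proof: the identical one-step recursion $\|\z^{k+1}-\z_{\alpha}\|_{2}\leq\|H_{k}^{-1}\|\,\|H_{k}(\z^{k}-\z_{\alpha})-F(\z^{k})+F(\z_{\alpha})\|_{2}$, bounded via Newton differentiability (with $H_{k}\in\nabla_{N}F(\z^{k})$ expanded around the root) and the uniform bound $\|H_{k}^{-1}\|\leq 1+2(n+1+\alpha+\norm{X}^{2})^{2}/\alpha$ from \thmref{th4}, followed by identifying the root as the unique Enet minimizer via \proref{th3} and transferring to \algref{alg1} through the Appendix~C equivalence. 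Your third step (the explicit $\delta$-ball confinement argument guaranteeing the trajectory stays where the $o(\cdot)$ estimate applies) is in fact slightly more careful than the paper, which states the one-step estimate for $\z^{k}$ ``sufficiently close'' to $\z_{\alpha}$ and leaves that standard bookkeeping implicit.
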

{
Theorem \ref{th5} shows the local supperlinear convergence rate of SNA, which is a superior  property of Newton type algorithms to first order methods.

\begin{theorem}\label{th5-6}
For a give $\la >0$, let $\hbbeta\equiv \hbbeta_{\la}$ be  a minimizer of   (\ref{regLASSO}), $\hbd = X^{\prime}(\y-X\hbbeta)/n$, $A =
\{i: |\hbeta_i + \hd_i|> \lambda \}$, $\tilde{A} =
\{i: |\hbeta_i + \hd_i| \neq \lambda \}$, and
  $C = \min_{i\in \tilde{A}} ||\hbeta_i + \hd_i| - \lambda| > 0.$
Suppose $\textrm{rank}(X_{A})=|A|$ and
 the initial guess   $\bbeta^0,\bd^0$ satisfies
$\|\hbbeta - \bbeta^0\|_{\infty} + \|\hbd - \bd^0\|_{\infty} \leq C.$
Then,
 $\bbeta^1 =\hbbeta$, where $\bbeta^1$ is generated by  Algorithm \ref{alg3} with  $\alpha = 0$ and $\bar{\la} = 0$.
\end{theorem}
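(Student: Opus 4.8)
The plan is to separate the one-step guarantee into a combinatorial part (the algorithm identifies the support in a single iteration, $A_0=A$) and a linear-algebra part (on the correctly identified support the explicit update reproduces $\hbbeta$ exactly). Since $\bar\la=0$ and $\alpha=0$, Algorithm~\ref{alg3} agrees with the explicit updates \eqref{e211}--\eqref{e214} with $G=X'X$ and $\tby=X'\y$, so I only need to track the first iterate $(\bbeta^{1},\d^{1})$ produced from $(\bbeta^{0},\d^{0})$. First I would record what the minimizer satisfies: by Proposition~\ref{th3} the pair $(\hbbeta,\hbd)$ solves \eqref{K1}, i.e. $n\hbd=X'(\y-X\hbbeta)$ and $\hbbeta=T_{\la}(\hbbeta+\hbd)$. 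Reading off the soft-threshold \eqref{softth} gives $\hbeta_i=0$ whenever $|\hbeta_i+\hd_i|\le\la$, so $A=\supp(\hbbeta)$, $\hbbeta_{A^{c}}=\0$, and $\hbd_A=\la\,\sgn(\hbbeta_A+\hbd_A)$.

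The main step is support identification. For every index $i$ the triangle inequality gives $|(\beta^{0}_i+d^{0}_i)-(\hbeta_i+\hd_i)|\le\normi{\hbbeta-\bbeta^{0}}+\normi{\hbd-\d^{0}}\le C$. I would combine this with the gap bound $\big||\hbeta_i+\hd_i|-\la\big|\ge C$ valid on $\tilde A$. For $i\in A\subseteq\tilde A$ one has $|\hbeta_i+\hd_i|\ge\la+C$, whence $|\beta^{0}_i+d^{0}_i|\ge\la$; moreover $|\hbeta_i+\hd_i|\ge\la+C>C$ (as $\la>0$) means the perturbation is strictly smaller than the magnitude, so $\beta^{0}_i+d^{0}_i\ne0$ and $\sgn(\beta^{0}_i+d^{0}_i)=\sgn(\hbeta_i+\hd_i)$. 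For strictly inactive $i$ one has $|\hbeta_i+\hd_i|\le\la-C$, forcing $|\beta^{0}_i+d^{0}_i|\le\la$. These two estimates separate the active magnitudes from the inactive ones and yield $A_0=A$ in \eqref{eac}, together with the sign identity on $A$.

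It remains to substitute $A_0=A$ into \eqref{e211}--\eqref{e214}. The inactive block gives $\bbeta^{1}_{A^{c}}=\0=\hbbeta_{A^{c}}$. The dual update gives $\d^{1}_A=\la\,\sgn(\beta^{0}_A+d^{0}_A)=\la\,\sgn(\hbbeta_A+\hbd_A)=\hbd_A$, using the sign identity from Step~2 and KKT. Finally $\bbeta^{1}_A=(X_A'X_A)^{-1}(X_A'\y-n\d^{1}_A)$; substituting $n\hbd_A=X_A'(\y-X_A\hbbeta_A)$ collapses the right-hand side to $X_A'X_A\hbbeta_A$, so $\bbeta^{1}_A=\hbbeta_A$, where invertibility of $X_A'X_A$ is exactly the hypothesis $\textrm{rank}(X_A)=|A|$. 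Hence $\bbeta^{1}=\hbbeta$ (and in fact $\d^{1}=\hbd$ by the analogous computation on $A^{c}$, though only the primal claim is needed).

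The one genuinely delicate point, and the main obstacle, is the exact support identification: I must guarantee that no index is misclassified across the threshold $\la$. The constant $C$ is precisely what buys the separation $|\hbeta_i+\hd_i|\ge\la+C$ versus $\le\la-C$, so that a perturbation of total $\li$-size $\le C$ cannot carry an index strictly across $\la$. The only uncontrolled indices are those in the complement of $\tilde A$, where $|\hbeta_i+\hd_i|=\la$ exactly and $C$ gives no margin; these carry $\hbeta_i=0$ and their perturbed values can sit on either side of $\la$. I would either rule them out under strict complementary slackness ($\tilde A=S$), or rely on the strictness inherent in the closeness hypothesis to keep them in $B_0$; note also that admitting an extra index into $A_0$ is dangerous because $\textrm{rank}(X_A)=|A|$ controls only $X_A'X_A$, not the enlarged Gram block. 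Away from this threshold, the entire argument reduces to the two $\li$ estimates above followed by a single inversion of $X_A'X_A$.
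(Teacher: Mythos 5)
Your plan stands or falls on the claim $A_0=A$, and that claim is the genuine gap: it cannot be established under the theorem's hypotheses, and you say so yourself without repairing it. The closeness hypothesis $\normi{\hbbeta-\bbeta^0}+\normi{\hbd-\d^0}\le C$ buys a margin only on $\tilde A$; for an index $i\notin\tilde A$, where $|\hbeta_i+\hd_i|=\la$ exactly (for the LASSO these are precisely the equicorrelation indices with $\hbeta_i=0$ and $|\hd_i|=\la$, which do occur), nothing prevents $|\beta^0_i+d^0_i|>\la$, so such an index may enter $A_0$ as defined in \eqref{eac} and then $A_0\supsetneq A$. Your two escapes are not available: strict complementarity $\tilde A=S$ is an extra assumption that Theorem \ref{th5-6} does not make, so assuming it proves a weaker statement; and there is no ``strictness inherent in the closeness hypothesis'' --- the bound is ``$\le C$'', and even a strict bound gives zero margin at indices sitting exactly at the threshold. (A smaller blemish, shared with the paper: for $i\in A$ your estimate yields only $|\beta^0_i+d^0_i|\ge\la$, while membership in $A_0$ requires the strict inequality.)

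The idea you are missing --- and the route the paper actually takes --- is that exact support identification is not needed. The paper proves only the sandwich $A\subseteq A_0\subseteq\bar A:=\{i:|\hbeta_i+\hd_i|\ge\la\}$ and then exploits that the KKT relations hold in the required form on all of $\bar A$, not just on $A$: by the second equation of \eqref{K2}, every $i\in\bar A$ satisfies $\hd_i=\la\,\sgn(\hbeta_i+\hd_i)$ (boundary indices have $\hbeta_i=0$ and $\hd_i=\pm\la$), and $\supp(\hbbeta)\subseteq A\subseteq A_0$. Combining this with the sign agreement your own perturbation estimates give, the dual update \eqref{e212} reproduces $\hbd_{A_0}$, so the pair $(\hbbeta_{A_0},\hbd_{A_0})$ satisfies exactly the linear system $X_{A_0}'X_{A_0}\bbeta+n\d=X_{A_0}'\y$ that \eqref{e213} solves for $(\bbeta^1_{A_0},\d^1_{A_0})$; subtracting the two systems kills the dual terms, the rank condition forces $\bbeta^1_{A_0}=\hbbeta_{A_0}$, and both vectors vanish off $A_0$, so spurious boundary indices in $A_0$ are harmless. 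Your closing observation --- that $\textrm{rank}(X_A)=|A|$ controls $X_A'X_A$ but not the enlarged block $X_{A_0}'X_{A_0}$ --- is a fair criticism, and in fact it touches the paper's own last step (which silently applies the rank hypothesis to $A_0$, and also needs that block invertible for \eqref{e213} to be well defined when $\alpha=0$); but the remedy for the one-step claim is the sandwich-plus-KKT argument just described, not exact support recovery, which is unattainable under the stated assumptions.
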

}

Theorem \ref{th5-6} shows that the SNA has an optimal local convergence rate in the sense that it converges in just one step, which improves the locally supperliner convergence rate of semismooth Newton method, see for example,
\cite{kummer1988newton}, \cite{qi1993nonsmooth} and \cite{ito2008lagrange}.

\color{black}

\subsection{Computational complexity analysis}

We now consider the computational complexity of SNA (Algorithm \ref{alg1}).
We look at the number of floating point operations per iteration.
Clearly it takes  $O(p)$ flops to finish step 4-7 in  Algorithm \ref{alg1}.
For step 8, we solve the linear equation iteratively by  conjugate gradient (CG) method  initialized with the projection of the previous solution onto the current active set
\citee{golub1996matrix}. The main operation of  iteration of CG is two matrix-vector multiplication cost $2n|A_{k}|$ flops
Therefore we can control the number of CG iterations smaller than $p/(2|A_{k+1}|)$ to make
that  $O(np)$ flops will be enough for   step 8.
For step 9, calculation of the matrix-vector product  costs  $np$ flops.
So the the overall cost per iteration of Algorithm 1 is $O(np)$ which is also the cost for state-of-the-art first order  LASSO solvers.
The local  superlinear/one step convergence of SNA guaranteed that a good solution can be found in only a few iteration    if it is warm started.
Therefore, at each knot of the path,  the whole cost of SNA can be still $O(np)$ if we use the continuation strategy.
So Algorithm \ref{alg2} (SNAP) can get the solution path accurately and efficiently at the cost of $O(Nnp)$ with $N$ be the number of knot on the path,  see the numerical results in Section 6.

\section{Error bounds and finite-step sign consistency}
As shown in Theorems  \ref{th5} and \ref{th5-6}, SNA converges locally superlinearly for Enet and converges in one step for LASSO.  In this section we prove that the simple warm start technique makes the SNAP converge globally under certain mutual coherence conditions on $X$ and a condition on the minimum magnitude of the nonzero components of $\bbeta^{\dag}$.
Specifically, we show that SNAP hits a solution with the same sign as  $\bbeta^{\dag}$
and attains a sharp statistical error bound  in finitely many steps with high probability if we properly design the path $\{\lambda_{t} = \lambda_{0} \gamma^{t}\}_{t= 0, 1, ..N}$ and run SNA along it with warm start.

We only consider the LASSO, so we set $\alpha = 0$ and $G = X^{\prime}X$.
The mutual coherence $\nu$ defined as $\nu = \max_{i\neq j}\abs{G_{i,j}}/n$
\citee{donoho2001uncertainty,donoho2006stable}
characterizes the minimum angle between different columns of $X/\sqrt{n}$.
Let $ A^{\dag} = \textrm{supp}(\bbeta^{\dag})$ and $ T = |A^{\dag}|$.
Define
$\abs{\bbeta^{\dag}}_{{\min}} = \min \{|\beta_j^{\dag}|: j \in A^{\dag}\},$
Denote the universal threshold value by $\lambda_u = \sigma \sqrt{2\log(p)/n}$.
Let   $\delta_u = 3\lambda_{u}$,
$\lambda_{0}=\|X^{\prime} y/n\|_{\infty}$,
 and
$\lambda_{t} = \lambda_{0} \gamma^{t}$, $t= 0, 1, ...$.

We make  the following  assumptions on the design matrix $X$, the target coefficient $\bbeta^{\dag}$, and the noise vector $\etaa$.
\begin{itemize}
\item[(A1)] The mutual coherence satisfies $T\nu  \leq \frac{1}{4}$.
\end{itemize}
\begin{itemize}
\item[(A2)] The smallest nonzero regression coefficient satisfies $|\bbeta^{\dag}|_{min}\geq 78\la_u $.
\end{itemize}
\begin{itemize}
\item[(A3)] $\etaa$ satisfies  $\etaa\sim N(0, \sigma^2 I_n)$.
\end{itemize}

\begin{lemma}\label{Lem3}
Suppose that (A1) to (A3) hold.
There exists an integer $N \in [1, \log_{\gamma}(\frac{10\delta_u}{\lambda_{0}}))$ such that $\lambda_{N}  > 10\delta_u \geq \lambda_{N+1}$ and $|\bbeta^{\dag}|_{min}>8\la_N/5$ hold with  probability at least  $1 - {1}/{(2\sqrt{\pi\log(p)})}$.
\end{lemma}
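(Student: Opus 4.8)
The plan is to reduce everything to a single high-probability ``good event'' controlling the noise, and then to locate $N$ deterministically on that event as the index at which the geometric grid $\lambda_t=\lambda_0\gamma^t$ crosses the level $10\delta_u$. Since $\gamma\in(0,1)$, the sequence $\{\lambda_t\}$ is strictly decreasing to $0$, so as soon as we know $\lambda_0>10\delta_u/\gamma$ there is a unique integer $N\ge 1$ with $\lambda_N>10\delta_u\ge\lambda_{N+1}$; taking $\log_\gamma$ (a decreasing operation) of $\lambda_0\gamma^N>10\delta_u$ places this $N$ in $[1,\log_\gamma(10\delta_u/\lambda_0))$ automatically. Thus the lemma reduces to two things: a lower bound on the random quantity $\lambda_0=\|X^{\prime}\y/n\|_\infty$ guaranteeing $\lambda_0>10\delta_u/\gamma$, and the arithmetic inequality $|\bbeta^{\dag}|_{\min}>\tfrac{8}{5}\lambda_N$.

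The only probabilistic input is an upper bound on the noise $\|X^{\prime}\etaa/n\|_\infty$. Because the columns of $X$ are $\sqrt{n}$-normalized, each coordinate $(X^{\prime}\etaa/n)_j=\langle x_j,\etaa\rangle/n$ is $N(0,\sigma^2/n)$ under (A3), so after standardizing it exceeds $\lambda_u=\sigma\sqrt{2\log(p)/n}$ exactly when a standard normal exceeds $\sqrt{2\log p}$. A union bound over the $p$ coordinates together with the Mills-ratio tail bound $1-\Phi(t)\le\phi(t)/t$ evaluated at $t=\sqrt{2\log p}$ yields $P(\|X^{\prime}\etaa/n\|_\infty>\lambda_u)\le 1/(2\sqrt{\pi\log p})$; equivalently the good event $\mathcal{E}=\{\|X^{\prime}\etaa/n\|_\infty\le\lambda_u\}$ holds with probability at least $1-1/(2\sqrt{\pi\log p})$. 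All remaining steps are then carried out deterministically on $\mathcal{E}$.

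On $\mathcal{E}$ I would lower bound $\lambda_0\ge\|X^{\prime}X\bbeta^{\dag}/n\|_\infty-\|X^{\prime}\etaa/n\|_\infty\ge\|G\bbeta^{\dag}/n\|_\infty-\lambda_u$. To bound $\|G\bbeta^{\dag}/n\|_\infty$ from below I evaluate at the coordinate $j^\star=\argmax_{j\in A^{\dag}}|\beta_j^{\dag}|$: since $G_{jj}/n=1$ and the off-diagonal entries are bounded by the mutual coherence $\nu$, diagonal dominance gives $|(G\bbeta^{\dag}/n)_{j^\star}|\ge(1-(T-1)\nu)\|\bbeta^{\dag}\|_\infty\ge\tfrac{3}{4}\|\bbeta^{\dag}\|_\infty$ using (A1). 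Combined with (A2), this makes $\lambda_0$ a fixed multiple of $\lambda_u$, large enough to force $\lambda_0>10\delta_u/\gamma$, which establishes the existence of $N\ge 1$ and the first displayed inequality. For the second inequality I use $\lambda_N=\lambda_{N+1}/\gamma\le 10\delta_u/\gamma=30\lambda_u/\gamma$, so that $\tfrac{8}{5}\lambda_N\le 48\lambda_u/\gamma$, which is strictly below $|\bbeta^{\dag}|_{\min}\ge 78\lambda_u$ from (A2).

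The conceptual steps are routine; the real work is constant-chasing, so that the fixed numbers in (A1)--(A2) (the bound $T\nu\le\tfrac14$ and the factor $78$), the level $10\delta_u=30\lambda_u$, and the target factor $\tfrac{8}{5}$ all line up simultaneously. In particular, both the existence of $N\ge 1$ (which needs $\lambda_0>30\lambda_u/\gamma$) and the second inequality (which needs $48\lambda_u/\gamma<78\lambda_u$) implicitly require $\gamma$ to be bounded away from $0$ (roughly $\gamma>8/13$), so I would either record this as a standing restriction on the grid or absorb it into the choice of constants. Verifying that the chosen constants leave enough slack for every inequality at once, while preserving the stated failure probability, is the part needing the most care; I would also double-check the factor $1/2$ in the probability, which corresponds to the sharp (effectively one-sided) form of the Gaussian maximal inequality. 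This is the main obstacle, though it is bookkeeping rather than a genuine difficulty.
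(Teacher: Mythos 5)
Your proposal is correct and follows essentially the same route as the paper: the same good event $\{\|X'\etaa/n\|_\infty\le\lambda_u\}$ (the paper's Lemma~\ref{Lem5}), the same coherence-based diagonal-dominance lower bound $(1-(T-1)\nu)\|\bbeta^{\dag}\|_\infty\ge\tfrac34\|\bbeta^{\dag}\|_\infty$ combined with (A2) to force $\lambda_1=\gamma\lambda_0>10\delta_u$, the same crossing-index definition of $N$, and the same arithmetic $\tfrac85\lambda_N\le\tfrac{13}{8}\cdot\tfrac85\cdot 10\delta_u=78\lambda_u$ for the final bound. The two loose ends you flag are present in the paper as well: at $\gamma=8/13$ the last chain yields only $|\bbeta^{\dag}|_{\min}\ge\tfrac85\lambda_N$ rather than the stated strict inequality, and the paper's Lemma~\ref{Lem5} asserts the constant $1/(2\sqrt{\pi\log p})$ without proof, which indeed corresponds to the one-sided Gaussian maximal bound.
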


\begin{theorem}\label{th6}
Suppose that (A1) to (A3) hold.
Then with probability at least $1 - {1}/{(2\sqrt{\pi\log(p)})}$, $\text{SNAP}(\lambda_{0},\gamma,N,K)$ with $\gamma = 8/13$, $N$  determined in Lemma  \ref{Lem3}, $K\geq T$, and $\bar{\la} = \frac{9}{10}\la_t+\delta_u$ at the $t_{th}$ knot, has a finite step sign consistence property and achieves a sharp estimation error, i.e.,
\begin{equation}\label{Th61}
 \sgn(\hbbeta(\lambda_N)) =\sgn(\bbeta^{\dag}),
 \end{equation}
  and
  \begin{equation}\label{Th62}
 \|\hbbeta(\lambda_N)-\bbeta^{\dag}\|_{\infty }<\frac{23}{6}\la_u.
  \end{equation}
\end{theorem}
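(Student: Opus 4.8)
The plan is to condition on a single high-probability noise event and then argue entirely deterministically, reducing everything to a fixed-point analysis of the SNA iteration \eqref{e211}--\eqref{e214} at the final knot $\la_N$. First I would introduce the event $\mathcal{E} = \{\|X^{\prime}\etaa/n\|_{\infty} \le \la_u\}$. Since the columns of $X$ are $\sqrt{n}$-normalized, each coordinate of $X^{\prime}\etaa/n$ is $N(0,\sigma^2/n)$ under (A3), so a standard Gaussian maximal inequality shows that $\mathcal{E}$ holds with probability at least $1 - 1/(2\sqrt{\pi\log(p)})$; this is the same event underlying Lemma \ref{Lem3}, so on $\mathcal{E}$ both the conclusion of that lemma and the bound $\|X^{\prime}\etaa/n\|_{\infty} \le \la_u$ are simultaneously available, and the rest of the argument carries the stated probability automatically.

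Next I would extract the consequences of the coherence assumption (A1). For any $A \subseteq A^{\dag}$ we have $|A| \le T$, the matrix $G_{AA}/n$ has unit diagonal and off-diagonal entries bounded by $\nu$, so Gershgorin's theorem (equivalently a Neumann series) gives invertibility together with $\|(G_{AA}/n)^{-1}\|_{\infty} \le 1/(1 - T\nu) \le 4/3$. Writing $\tby = G\bbeta^{\dag} + X^{\prime}\etaa$ and substituting \eqref{e212} into \eqref{e213}, any converged iterate of the SNA run at $\la_N$ with active set $A \subseteq A^{\dag}$ satisfies the identity
\[
\bbeta_{A} - \bbeta^{\dag}_{A} = (G_{AA}/n)^{-1}\Big[(G_{A,A^{\dag}\setminus A}/n)\,\bbeta^{\dag}_{A^{\dag}\setminus A} + (X^{\prime}\etaa/n)_{A} - \bd_{A}\Big],
\]
with $\|\bd_{A}\|_{\infty} = \la_N - \bar{\la}_N$. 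This identity is the engine for both support recovery and the error bound.

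I would then verify that the oracle pair --- support exactly $A^{\dag}$, signs $\sgn(\bbeta^{\dag})$, and dual $\bd_{A^{\dag}} = (\la_N - \bar{\la}_N)\sgn(\bbeta^{\dag}_{A^{\dag}})$ --- is a self-consistent fixed point of \eqref{e211}--\eqref{e214} at $\la_N$. When $A = A^{\dag}$ the cross term above vanishes, and since $\bar{\la}_N = \tfrac{9}{10}\la_N + \delta_u$ gives $\la_N - \bar{\la}_N = \tfrac{1}{10}\la_N - \delta_u \le \tfrac{15}{8}\la_u$ (using $\la_N \le 10\delta_u/\gamma$ from Lemma \ref{Lem3} with $\gamma = 8/13$ and $\delta_u = 3\la_u$), the displayed identity yields $\|\bbeta_{A^{\dag}} - \bbeta^{\dag}_{A^{\dag}}\|_{\infty} \le \tfrac{4}{3}(\la_u + \tfrac{15}{8}\la_u) = \tfrac{23}{6}\la_u$, which is exactly \eqref{Th62}. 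Feeding this error bound through \eqref{e214}, for $j \notin A^{\dag}$ one obtains $|d_j| \le T\nu\cdot\tfrac{23}{6}\la_u + \la_u < \la_N$, so no spurious index passes the membership test \eqref{eac} (no false positives); while for $j \in A^{\dag}$ the lower bound $|\bbeta^{\dag}|_{\min} > \tfrac{8}{5}\la_N$ from Lemma \ref{Lem3} forces $|\beta_j + d_j| > \la_N$ (no false negatives). Together these give $\sgn(\hbbeta(\la_N)) = \sgn(\bbeta^{\dag})$, i.e.\ \eqref{Th61}.

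The main obstacle is reachability: I must show that SNAP, warm-started along the path, actually converges to this oracle fixed point within $K$ iterations rather than to some other self-consistent configuration. The plan is an induction over the knots $t = 0,1,\ldots,N$, carrying the invariant that $\supp(\hbbeta(\la_t)) \subseteq A^{\dag}$ with matching signs and that every coordinate of $\bbeta^{\dag}$ detectable at level $\la_t$ already lies in the active set. The geometric spacing $\gamma = 8/13$ is precisely what keeps this invariant stable: it must be large enough that the warm start $(\hbbeta(\la_{t-1}), \hbd(\la_{t-1}))$ never lets the test \eqref{eac} admit an index outside $A^{\dag}$, yet small enough that by $t = N$ the threshold has dropped below every true signal. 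Within a single knot I would invoke the finite-step (indeed one-step) convergence of SNA for the LASSO from Theorem \ref{th5-6}, combined with the observation that the active set can grow by at least one correct index per iteration, so that $K \ge T$ guarantees stabilization at $A^{\dag}$. Controlling the interaction between the warm-start error and the membership test \eqref{eac} uniformly over the path --- ensuring no spurious index is ever activated and no true index is prematurely dropped --- is the delicate part, and is exactly where assumptions (A1), (A2), and the precise choices of $\bar{\la}_t$ and $\gamma$ must be balanced against one another.
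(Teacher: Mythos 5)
Your setup (the noise event $\{\normi{X^{\prime}\etaa/n}\le\la_u\}$, the coherence bounds, the fixed-point verification at $\la_N$, and the arithmetic $\frac{4}{3}(\la_u+\frac{15}{8}\la_u)=\frac{23}{6}\la_u$) is sound and essentially reproduces the paper's final error computation. But there is a genuine gap at exactly the point you yourself flag as ``the delicate part'': the reachability argument is never carried out, and the tool you propose for it does not apply. \thmref{th5-6} is a \emph{local} one-step convergence result for the iteration with $\bar{\la}=0$, converging to a LASSO minimizer $\hbbeta_{\la}$, under the hypothesis that the initialization lies within $C$ of $(\hbbeta,\hbd)$, where $C$ is determined by that LASSO solution. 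In \thmref{th6} the iteration runs with $\bar{\la}_t=\frac{9}{10}\la_t+\delta_u>0$, which changes the update \eqref{e212}, and its relevant limit is not the LASSO minimizer at $\la_t$ but the less-shrunk solution supported on $\Ap$; moreover the warm start $(\hbbeta(\la_{t-1}),\hbd(\la_{t-1}))$ is never shown to satisfy the locality hypothesis of \thmref{th5-6}, and there is no reason it should. So the within-knot convergence cannot be outsourced to that theorem.

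What actually fills this hole in the paper is a self-contained quantitative analysis of the \emph{shifted} iteration: \lemref{Lem7} establishes one-step error bounds \eqref{Lem7-1}--\eqref{Lem7-4} valid under the inductive hypothesis $A^k\subset\Ap$ and on the noise event, and \lemref{Lem8} converts them into the two facts you need but only assert --- (i) if $S_{\la,\tau}\subset A^k\subset\Ap$ then $S_{\la,\tau}\subset A^{k+1}\subset\Ap$ (no false positive ever enters, no detectable signal is dropped), and (ii) if some detectable signal is still missing, the largest missing coefficient enters at the next iteration, which is what makes $K\ge T$ (or stabilization $A^{k+1}=A^k$) sufficient. The proof of \thmref{th6} is then a double induction, over knots $t$ and within-knot iterations $k$, maintaining the claims $S_{\la_t,\kappa+1}\subseteq A^0_t\subseteq\Ap$ and $S_{\la_t,\kappa}\subseteq A^{k_t}_t\subseteq\Ap$, and the invariant passes from knot $t$ to knot $t+1$ through the exact identity $S_{\la_{t+1},\kappa+1}=S_{\la_t,\kappa}$ --- this is the real reason for $\gamma=8/13$ and $\kappa=8/5$, namely $(\kappa+1)\gamma=\kappa$, which your proposal does not identify. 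Your ``at least one correct index per iteration'' heuristic is the right intuition, but it is precisely the content of \lemref{Lem8} and requires the estimates of \lemref{Lem7}; without proving these (or some substitute), the proposal establishes only that the oracle configuration is a consistent fixed point, not that SNAP reaches it, so the proof is incomplete at its core.
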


\begin{remark}
 From the proof of Theorem \ref{th6} we can see that SNAP (Algorithm  \ref{alg3}) with  $\bar{\lambda}=0$ can recover $\bbeta^{\dag}$ exactly by letting $\lambda_t\rightarrow0$ in the case  $\etaa = \0$. However, if the observation contains noise
 we have to set the shift parameter $\bar{\lambda}$ in SNAP to be nonzero which reduce the
 amount of shrinkage of LASSO.
 \end{remark}

The properties of 
LASSO have been studied by many authors.
For example, \cite{zhao2006on} and \cite{meinshausen2006high}
showed that LASSO is sign consistent under a strong irrepresentable condition, which is a little weaker than (A1). They also required $|\bbp|_{min}$ be bounded below by  $O(n^{-c/2})$ with $c\in(0,1)$,  which  is stronger than (A2).
\cite{zhang2008the} required $X$ satisfy a sparse Rieze condition, which may be weaker than (A1);  and $|\bbp|_{min}$ larger than  $O(\sqrt{T}\la_u)$, which is stronger than (A2).
\cite{wainwright2009sharp} assumed a condition stronger than the strong irrepresentable condition to guarantee the uniqueness of LASSO and its sign consistency with a condition on $|\bbp|_{min}$ similar to (A1).
\cite{lounici2008sup} and \cite{candes2009near} assumed the mutual coherence conditions
with $T\nu<{1}/{7}$ and $\nu<{c}/{\log(p)}$ for a constant $c$,  respectively; and
their requirements for $|\bbeta^{\dag}|_{min}$  are similar to (A1) with different constants.
In deriving the $\ell_2$ and $\ell_{\infty}$ error bounds of the LASSO,
\cite{donoho2006stable} and \cite{zhang2009some} assumed
$T\nu<{1}/{4}$ and $T\nu\leq {1}/{4}$, respectively. The latter is exactly (A1).
However, these existing results do not imply the finite-step sign consistency property
established in Theorem \ref{th6}.

All the results mentioned above concern the minimizer of the  LASSO problem,  but they  did not directly address the statistical properties of the sequence generated by a specific solver. So there is a gap between those theoretical results and the computational solutions. 
There has been efforts to close this gap. For example, \cite{agarwal2012fast} considered the statistical properties of the proximal gradient descent path. But their analysis required knowing $\|\bbeta^{\dag}\|_1$, which is hard to estimate in practice.
 \cite{xiao2013proximal} remedied this, but their result does not achieve the sharp error bound like SNAP does. Also, the technique used for deriving the statistical properties of SNAP (a Newton type  method), is quite different from the proximal gradient method (a gradient type method).


\section{Connections  LARS, SIS and active set tricks for accelerating coordinate descent}
The key idea in SNAP is using the Newton type method  SNA  to iteratively identify the active set  using  both the primal  and
dual information, then solve the problem on the active set.
In this section we discuss the connections between SNAP with other three dual active set mehtods, i.e.,
 LARS, SIS \citee{fan2008sure}, and sequential strong rule SSR (active set tricks for accelerating coordinate descent) \citee{tibshirani2012strong}.

LARS \citee{efron2004least} also does not solve (\ref{regLASSO}) exactly
since it omits the removing procedure of Homotopy \citee{osborne2000new}.
As discussed in \cite{donoho2008fast}, the LARS  algorithm
can be formulated  as
\begin{align*}
\bbeta_{B_{k}}^{k+1} &= \textbf{0}, \\
  \bbeta_{A_{k}}^{k+1} &=(X_{A_{k}}^{\prime} X_{A_{k}})^{-1}
(\tby_{A_{k}} - \bar{\lambda}^{k}\sgn (\d^{k}_{A_{k}})),
\end{align*}
where $A_{k}$ is the set of the indices of the variables with highest correlation with the current residual, $B_{k} = (A_{k})^{c}$, $\bar{\lambda}^{k} = \|\d^{k}\|_{\infty} -\gamma_{k}$, $\d^k = X^{\prime}(\y-X\bbeta^{k})/n$,
and $\gamma_{k}$ is the step size to the next breakpoint on the
 path \citee{efron2004least,donoho2008fast}.
 Comparing  Algorithm \ref{alg2} (SNAP) (by setting $K= 0$)  with the above reformulation of the LARS algorithm, we see that both SNAP and LARS can be understood as approaches for estimating the support of the underlying solution, which is the essential aspect in fitting sparse, high-dimensional models.
So, SNAP and LARS share some similarity both in formulation and in spirit although they were derived from different perspectives.
However, the definitions of the active set in SNAP  is based on the sum of  primal approximation (current approximation $\bbeta^k$) and the dual approximation (current correlation  $\dk = X^{\prime}(\y-X\bbeta^k)/n$) while  LARS is based on  dual only. The   following low-dimensional small  noise interpretation may clarify the difference between the two active set definitions.  If  $X^{\prime}X/n \approx $ identity  and  $\etaa \approx\textrm{0}$
we get \begin{equation*}
\d^k = \bX^{\prime}(\y-\bX\bbeta^k)/n = \bX^{\prime}(\bX\bbeta^{\dag} + \etaa -\bX\bbeta^k)/n \approx \bbeta^{\dag}-\bbeta^k +\bX^{\prime}\etaa/n \approx \bbeta^{\dag}-\bbeta^k
\end{equation*}
and
$$\bbeta^k+\d^k\approx \bbeta^{\dag}.$$
In addition, LARS selects variables one by one while SNAP can selects more than one variable at each  iteration. Also, the adjusted least squares fits on the active sets
in SNAP and LARS are different.
Under certain conditions on $X$  both of them recover $\bbeta^{\dag}$ exactly  in the noise free case \citee{donoho2008fast} even when $p>n$. But the convergence or consistency  of LARS is unknown when the noise vector $\etaa\neq \0$.

Given a starting point $\lambda_0$, SNA is initialized with
$\bbeta^0=\textbf{0}, \d^0= X'\y/n$.
Therefore,
\[
A_{0}=\{j: |\beta_j^0+d_j^0| > \lambda_0\} =\{j: |\x_j'\y/n| > \lambda_0\}.
\]
Thus the first active set  generated by SNAP   contains the features that coorelated with y larger than $\lambda_0$,   which are the same
as those from the sure independence screening \citee{fan2008sure}
 with parameter  $\lambda_0$ and include the one  selected by  the first step of   LARS. We then use \eqref{e211} - \eqref{e214} to obtain
$\{\bbeta^1, \d^1\}$, and  update the
active set to $A_1$  using \eqref{eac}. Clearly, for $k \ge 1$, $A_k$ are determined not just by the correlation $\d^k$,  but  by  the primal ($\bbeta^k$) and dual ($\d^k$) together.

\cite{tibshirani2012strong} proposed  a sequential strong rule (SSR) for discarting predictors in LASSO-type problems.
At point $\lambda_t$ on the solution path, this rule discards the $j$th predictor if
\[
|\hd_{j}(\lambda_{t-1})| < 2\lambda_t-\lambda_{t-1},
\]
where $\hd_j(\lambda) = \x_j^{\prime}(\y-X\hbbeta(\lambda))/n$ for the LASSO penalty.
They define  active set
\[
A_{k} = \{j: |d_{j}(\lambda_{t-1})| \geq 2\lambda_t-\lambda_{t-1}\},
\]
and  set $\hbbeta(\lambda_t)_{B_{k}}=0$ for $B_{k}=A_{k}^c$
and solve the LASSO problem
on $A_{k}$. By combining with a simple check of  the KKT condition, it speeds up the computation considerably.
So SNA shares some similarity in spirit with SSR 
in that both methods seek to identify an active set and solve a smaller optimization
problem, although they are derived from quite different perspectives. However, there are some important differences. First, the active sets are determined differently. Specifically, SSR determines the active set
only based on the dual approximation; while SNA uses both primal and dual approximation.  Second, SNA does not need the unit slope assumption, and additional check of the KKT conditions is not needed (The cost of check KKT is $O(np)$). Third, as far as we know, the statistical properties of the solution sequence generated from SSR are unknown, while
error bounds and sign consistency are established under suitable conditions
for the solution sequence generated from the SNAP.


\section{Numerical examples}
\label{sec:numEx}

In this section, we present numerical examples to
evaluate the performance of the proposed SNAP algorithm \ref{alg2}
for solving LASSO.
All experiments are performed
in MATLAB R2010b on a quad-core laptop with an Intel Core i5 CPU (2.60 GHz)
and 8 GB RAM running Windows 8.1 (64 bit).

\subsection{Comparison with existing popular algorithms}
Both the LARS \citee{efron2004least,donoho2008fast}
and the CD \citee{friedman2007pathwise,friedman2010regularization}
are popular algorithms capable of efficiently computing
the LASSO solution, hence we compare the
proposed SNAP with these two algorithms.
In implementation, we consider two solvers:
(1) SolveLasso, the Matlab code for LARS with the LASSO modification,
available online at
\url{http://sparselab.stanford.edu/SparseLab_files/Download_files/SparseLab21-Core.zip};
(2) glmnet, the Fortran based Matlab package using CD,
available online at
\url{https://github.com/distrep/DMLT/tree/master/external/glmnet}.
The parameters in the solvers are the default values as their online versions.
In addition to the default stopping parameters in the solvers,
we stop LARS (SolveLasso), CD (glmnet) and SNAP if the number of nonzero elements at some
iteration is larger than a given fixed quantity
such as $n/\log(p)$ or even larger $0.5n$,
since the  upper bound of the estimated sparsity level of  LASSO is $O(n/\log(p))$
when $n\ll p$ \citee{candes2006robust,candes2006near}.

\subsection{Tuning parameter selection}
To choose a proper value of $\lambda$ in \eqref{regLASSO}
is a crucial issue for LASSO problems,
since it  balances the tradeoff between the data fidelity
and the sparsity level of the solution.
In practice, the Bayesian information criterion (BIC)
is a widely used selector for the tuning parameter selection,
due to its model selection
consistency under some regularity conditions.
We refer the readers to
\cite{wang2007tuning,chen2008extended,wang2009shrinkage,
chen2012extended,kim2012consistent,wang2013calibrating}
and references therein for more details.
In this paper, we  use a modified BIC (MBIC) from \cite{kim2012consistent}
to choose $\lambda$, which is given as
\begin{equation}\label{mbic}
\hat{\lambda}=\amin{\lambda\in\Lambda}
\dkh{\frac{1}{2n}\|X\hat{\bb}(\la)-\y\|^{2}_{2}
+|\hat{A}(\la)|\frac{\log(n)\log(p)}{n}},
\end{equation}
where $\Lambda=\{\lambda_t\}_t$ is the candidate set for $\lambda$,
and $\hat{A}(\la)=\{j:\hat{\beta}(\lambda)\neq 0\}$
is the model identified by $\hat{\bb}(\la)$. Besides,
the high-dimensional BIC (HBIC) in \cite{wang2013calibrating} defined by
\begin{equation}\label{hbic}
\hat{\lambda}=\amin{\lambda\in\Lambda}
\dkh{\log\xkhb{\|X\hat{\bb}(\la)-\y\|^{2}_{2}/n}
+|\hat{A}(\la)|\frac{\log(\log n)\log(p)}{n}}
\end{equation}
is also a good candidate for the selection of $\lambda$.
Unless otherwise specified, the MBIC \eqref{mbic} is the default one
to select $\lambda$.

\subsection{Simulation}

\subsubsection{Implementation setting}
\label{ssec:sett}
The $n\times p$ design matrix $X$ is generated as follows.
\begin{itemize}
\item[(i)]
Classical Gaussian matrix with correlation parameter $\rho$.
The rows of $X$ are drawn independently from $N(0,\Sigma)$
with $\Sigma_{jk} = \rho^{|j-k|}, 1 \le j, k \le p$, $\rho\in(0,1)$.
\item[(ii)] Random Gaussian matrix  with auto-correlation parameter $\nu$. First we
generate a random Gaussian matrix $\widetilde{X} \in\mathbb{R}^{n\times p}$ with its
entries following i.i.d. $N(0,1)$. Then we define a matrix
$X\in\mathbb{R}^{n\times p}$ by setting $X_1 = \widetilde{X}_1$,
\begin{equation*}
   X_j = \widetilde{X}_j + \nu*(\widetilde{X}_{j-1} + \widetilde{X}_{j+1}), \ \ j=2,...,p-1,
\end{equation*}
and $X_p= \widetilde{X}_p$.
\end{itemize}
 The elements of the
error vector $\etaa$ are generated independently with
$\eta_{i}\thicksim N(0,\sigma^2)$, $i=1,2,...,n$.
Let $A^{\dag}=\textrm{supp}(\bbeta^{\dag})$ be the support of $\bbeta^{\dag}$,
and let $ R^{\dag}=
{\max\{|\bbeta^{\dag}_{A^{\dag}}|\}}/{\min\{|\bbeta^{\dag}_{A^{\dag}}|\}}$
be the range of magnitude of nonzero elements of $\bbeta^{\dag}$.
The underling regression coefficient vector $\bbeta^{\dag} \in \mathbb{R}^{p}$ is generated in a way that $A^{\dag}$ is a randomly chosen subset of
$S$ with $|A^{\dag}|=T$. 
As in \cite{becker2011nesta}, \cite{shi2018admm} and \cite{shi2018semi},
each nonzero entry of $\bm{\beta}^{\dag}$ is generated as follows:
\begin{equation}\label{gendata}
\beta^\dag_j = \xi_{1j}10^{\xi_{2j}},
\end{equation}
where $j\in A^{\dag}$,
$\xi_{1j}=\pm 1$ with probability $\frac{1}{2}$ and
$\xi_{2j}$ is uniformly distributed in $[0,1]$.
Then the observation vector $\y =X\bbeta^{\dag}+\etaa$.
 For convenience, we use $(n,p,\rho,\sigma,T,\Rd)$ and
 $(n,p,\nu,\sigma,T,\Rd)$ to denote
the data generated as above, respectively.

\subsubsection{The behavior of the SNAP algorithm}

\paragraph{The algorithm parameters of SNAP}

We study the influence of the free parameters $N$ and $K$ in the
SNAP algorithm on the exact support recovery probability (Probability for short),
that is, the percentage of the estimated model $\Ah$ agrees with the true model $\Ad$.
To this end, we independently generate $20$ datasets from
$(n=200,p=1000,\rho=0.1,\sigma=0.01,T=5:5:30, \Rd=10)$  for each combination of $(N,K)$.
Here $5:5:30$ means the sparsity level starts from $5$ to $30$ with an increment of $5$.
The numerical results are summarized in \figref{fig:SNAP-NK},
which consider the following two settings:
(a) $K=1$, and varying $N \in \{40,60,80,100\}$;
(b) $N=100$, and varying $K \in \{1,2,3\}$.

\begin{figure}[ht]
\begin{subfigure}{.49\linewidth}
\centering
\includegraphics[width=\linewidth]{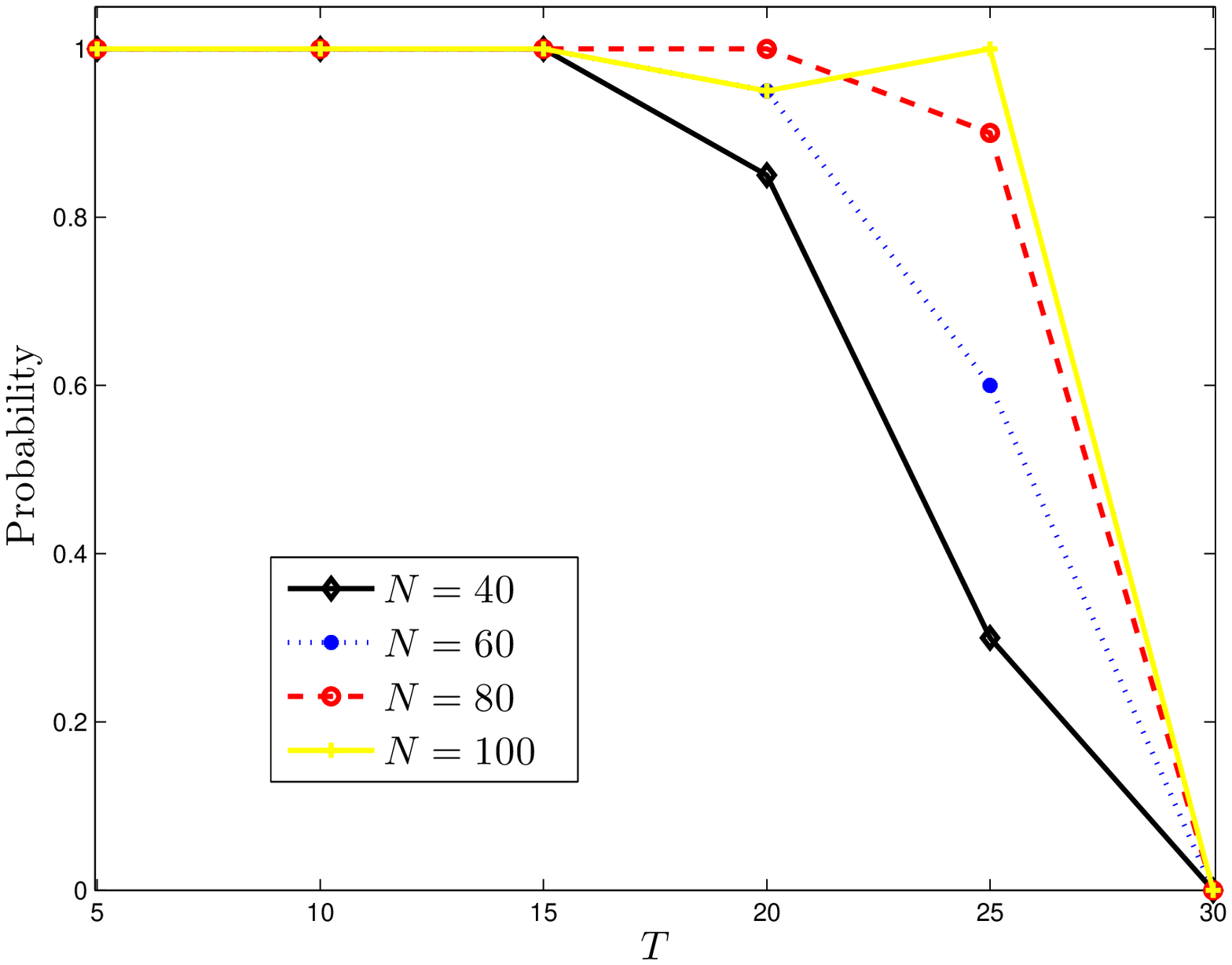}
\end{subfigure}
\begin{subfigure}{.49\linewidth}
\centering
\includegraphics[width=\linewidth]{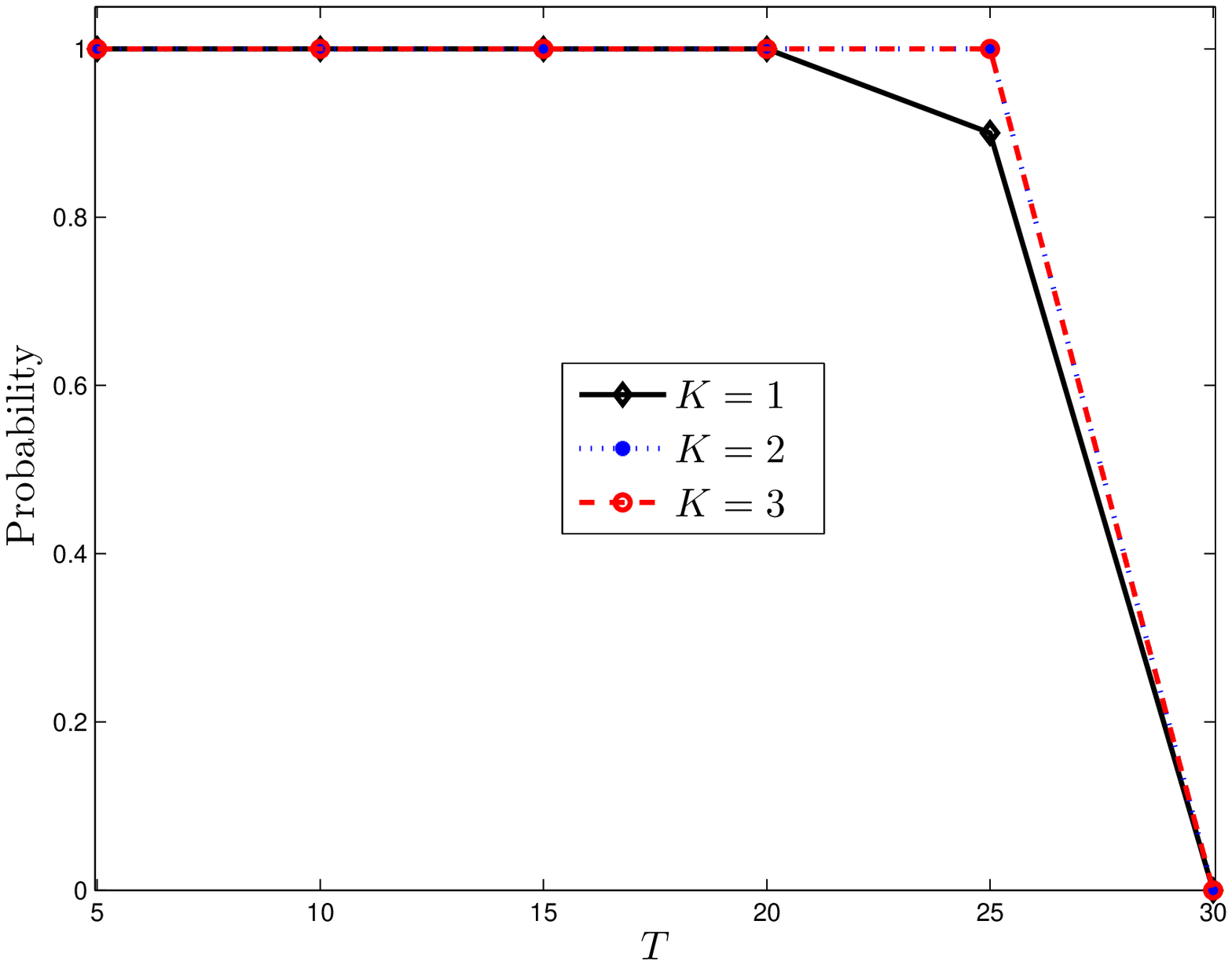}
\end{subfigure}
\caption{The influence of  the SNAP parameters $N$ (left panel) and
$K$ (right panel) on the exact support recovery probability.}
\label{fig:SNAP-NK}
\end{figure}

It is observed from \figref{fig:SNAP-NK} that
the influence of $K$ is very mild on the exact support recovery
probability and $K=1$  generally works well in practice,
due to the locally superlinear
convergence of SNA and the continuation technique with warm start
on the solution path, which is consistent with
the conclusions in \secref{sec:genSNAP}.
It is also found in \figref{fig:SNAP-NK} that
Larger $N$ values make the algorithm have better exact
support recovery probability, but the enhancement
decreases as $N$ increases.
Thus, unless otherwise specified, we set $(N,K)=(100,1)$ for the SNAP solver.

\paragraph{The MBIC selector for SNAP}

We illustrate the performance of the MBIC selector \eqref{mbic}
for SNAP with simulated data
$(n = 400, p = 2000, \rho = 0.5, \sigma = 0.1, T = 10, \Rd=10)$.
The results are summarized in  \figref{fig:SnapBehaveMbic}.
 It can be observed from \figref{fig:SnapBehaveMbic}
that the MBIC selector performs very well for the SNAP
algorithm on the continuation solution path introduced in \secref{ssec:solpath}.

\begin{figure}[!ht]
\begin{subfigure}{.33\linewidth}
\centering
\includegraphics[width=\linewidth]{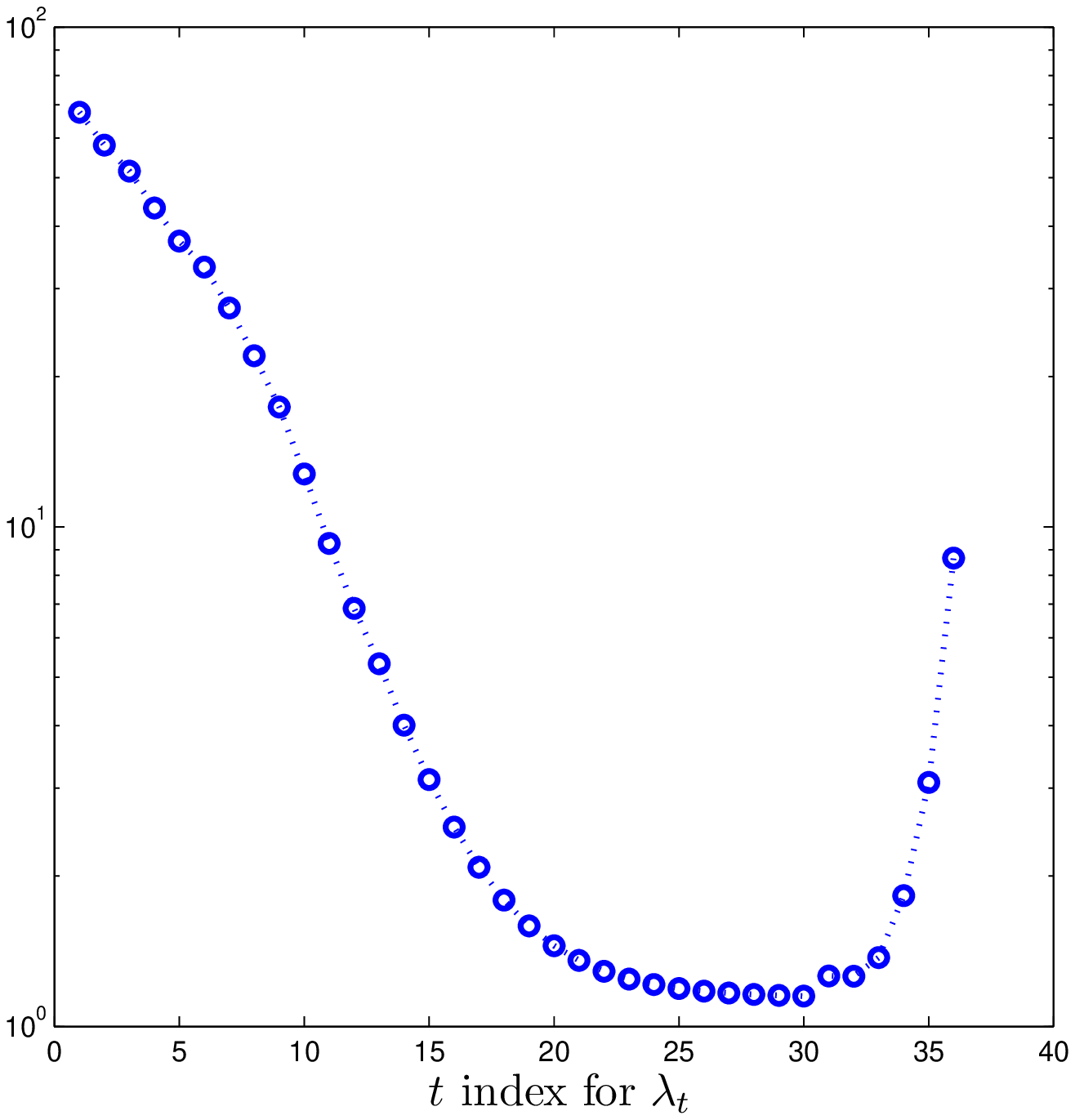}
\end{subfigure}
\begin{subfigure}{.33\linewidth}
\centering
\includegraphics[width=\linewidth]{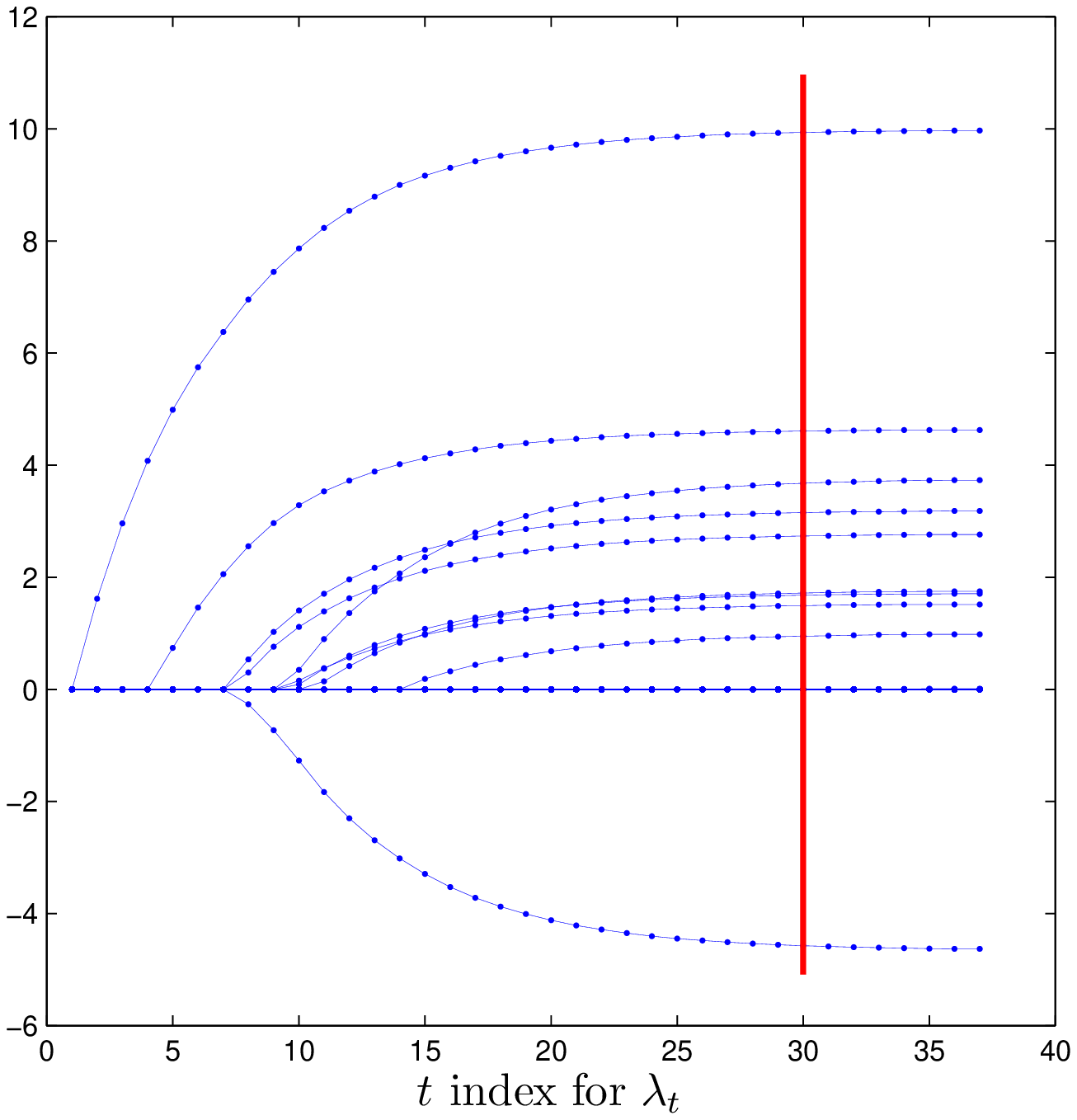}
\end{subfigure}
\begin{subfigure}{.33\linewidth}
\centering
\includegraphics[width=\linewidth]{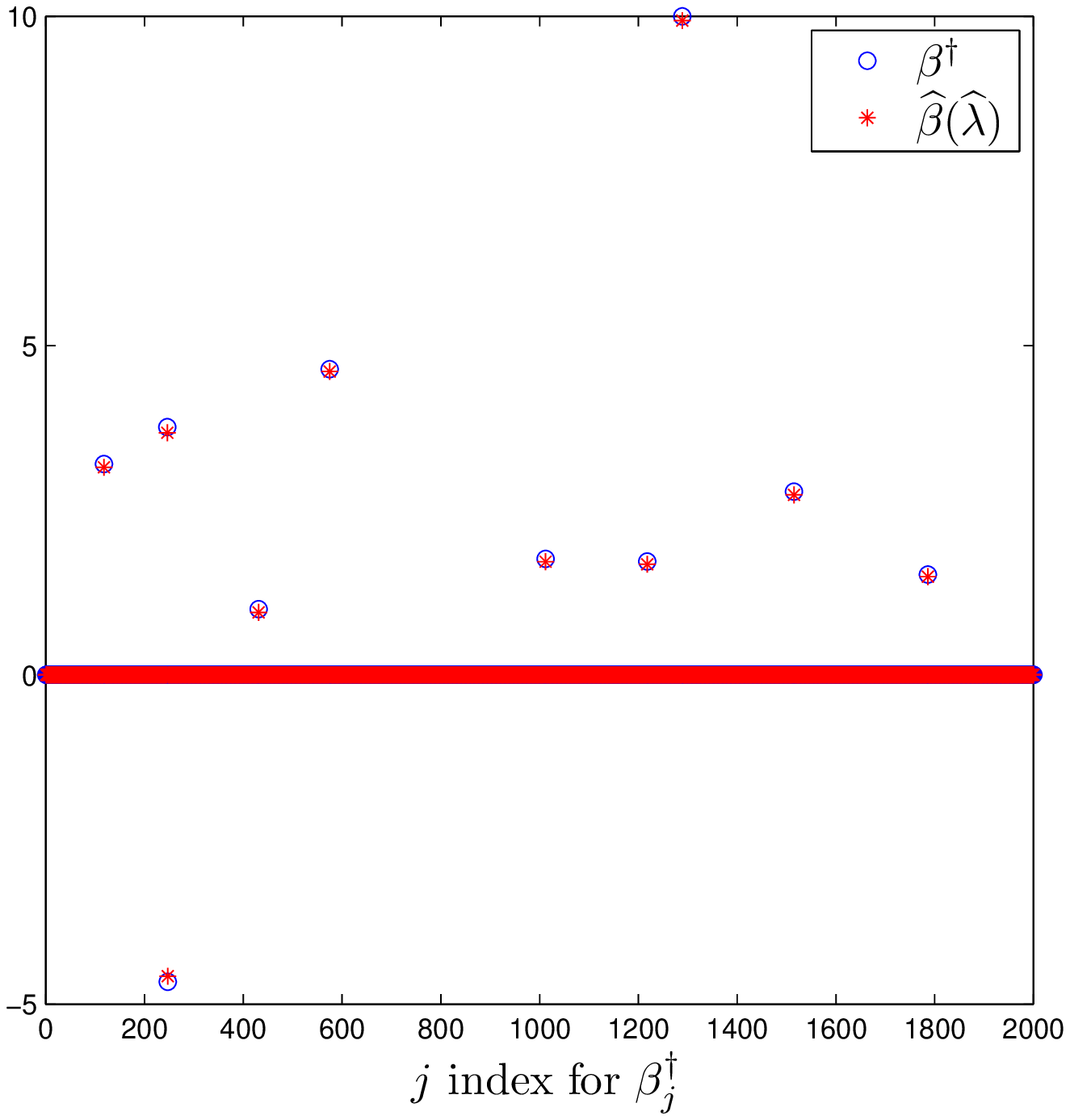}
\end{subfigure}
\caption{
Plots for  SNAP
using the MBIC  selector with data
$(n = 400, p = 2000, \rho = 0.5, \sigma = 0.1, T = 10, \Rd=10)$:
MBIC curve (left panel),
the solution path (middle panel),
and the comparison between
the underlying true parameter $\bbd$ and
the selected solution $\hbbeta(\hat{\la})$ (right panel).
The red vertical line in the middle panel
shows the solutions selected by MBIC.
}
\label{fig:SnapBehaveMbic}
\end{figure}

\paragraph{The local superlinear convergence of SNAP}

To gain further insight into the SNAP algorithm,
we illustrate the convergence behavior of the algorithm
using the simulated data as that of \figref{fig:SnapBehaveMbic}.
Let $\hat{A}_t=\{j:\;\hat{\beta}_j(\lambda_t)\neq 0\}$,
where $\hat{\bm{\beta}}(\lambda_t)$ is the solution to the $\lambda_t$-problem.
Set $(N,K)=(100,5)$.
The convergence history  is  shown  in \figref{fig:SnapBehaveConverge},
which presents the change of the active sets
and the number of iterations for each fixed $\lambda_t$
along the path $\lambda_0>\lambda_1>\cdots>\hat{\lambda}$.
It is observed in \figref{fig:SnapBehaveConverge} that
$\hat{A}_t \subset \Ad$, and  the size $|\hat{A}_t|$
increases monotonically as the path proceeds
and eventually equals the true model size $|\Ad|$.
In particular, for each $\lambda_{t+1}$ problem with $\hat{\bm{\beta}}(\lambda_t)$
as the initial guess,  SNAP generally reaches convergence
within two iterations (typically one, noting that
 the maximum number of iterations $K=5$ here).
This is attributed to the local superlinear/one step convergence of the algorithm
for LASSO, which is consistent with the results in \thmref{th5-6}.
Hence, the overall procedure is very efficient.

\begin{figure}[!ht]
\begin{subfigure}{.49\linewidth}
\centering
\includegraphics[width=\linewidth]{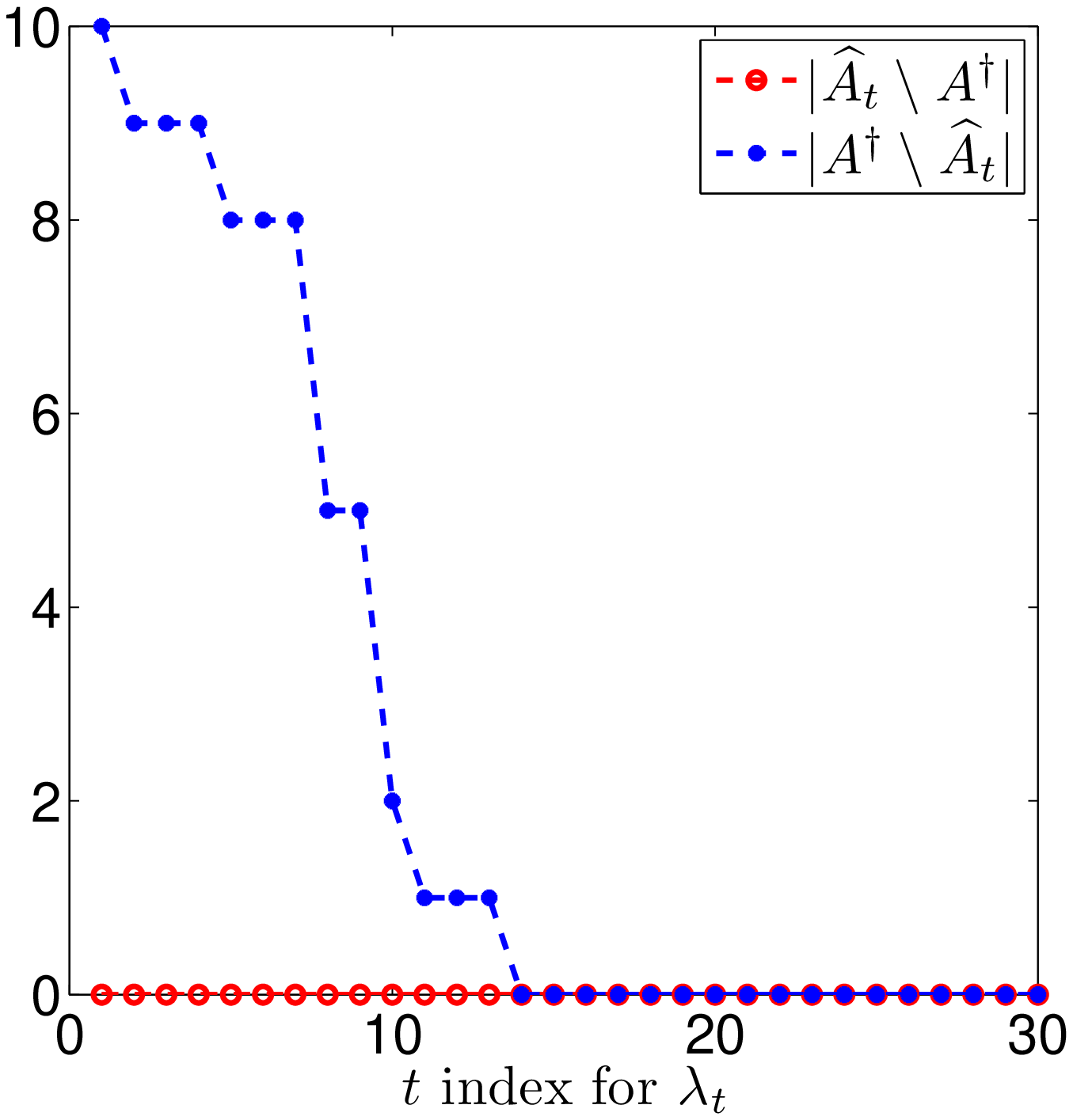}
\end{subfigure}
\begin{subfigure}{.49\linewidth}
\centering
\includegraphics[width=\linewidth]{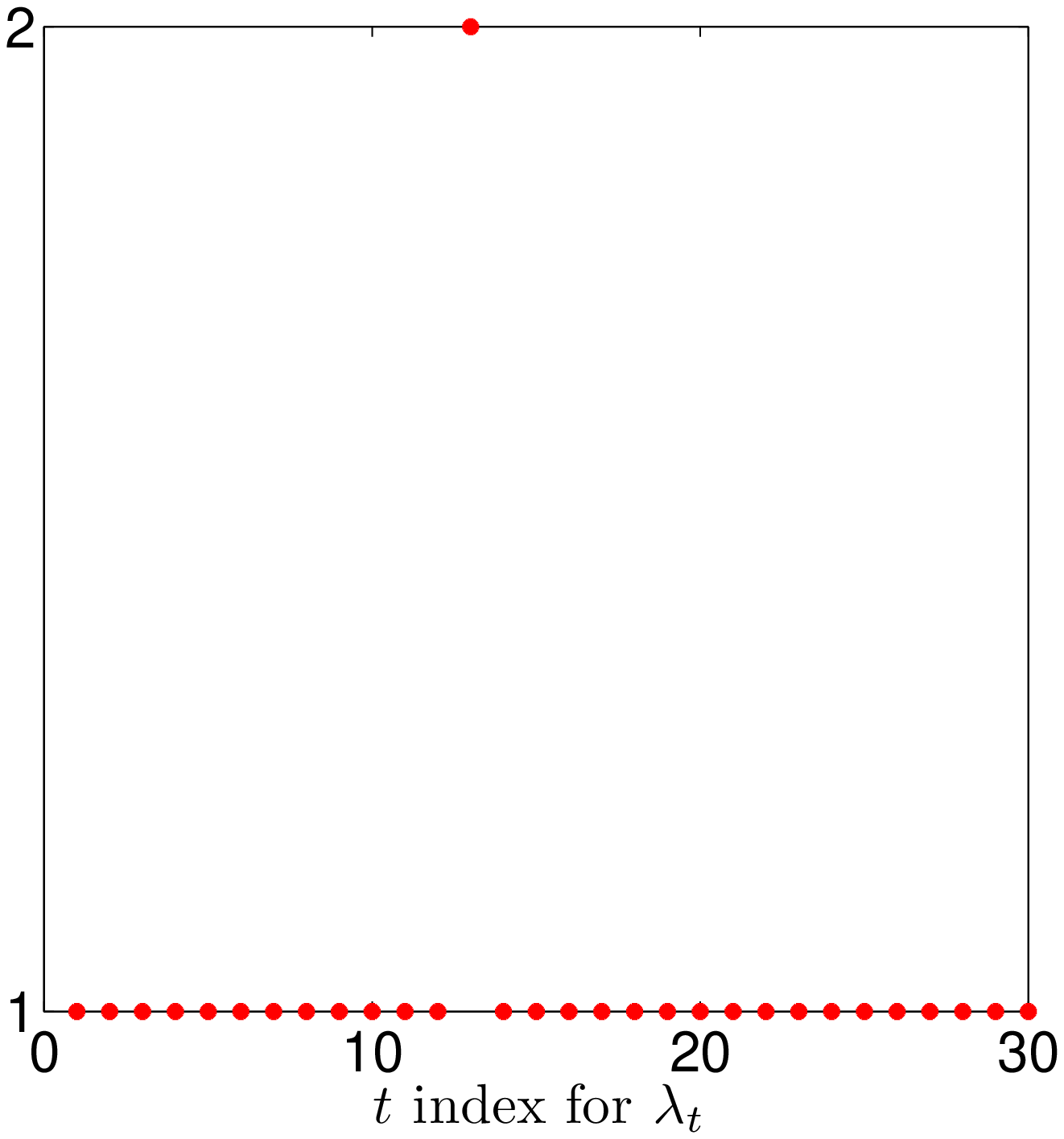}
\end{subfigure}
\caption{
Convergence behavior of SNAP with data
$(n = 400, p = 2000, \rho = 0.5, \sigma = 0.1, T = 10, \Rd=10)$:
the change of the active sets (left panel)
and the number of iterations (right panel)
for each $\lambda_t$-problem along the path.
$\hat{A}_t\backslash\Ad$ ($\Ad\backslash \hat{A}_t$)
denotes the set difference of sets $\hat{A}_t$
and $\Ad$ ($\Ad$ and $\hat{A}_t$).
In the left panel, the vertical axis is the size of sets;
in the right panel, the vertical axis is the number of iterations.
}
\label{fig:SnapBehaveConverge}
\end{figure}

\subsubsection{Efficiency and accuracy}

To evaluate the efficiency and accuracy of the proposed SNAP algorithm,
we independently generate $M=100$ datasets
from two settings:
(i) the classical Gaussian matrix with
$(n,p,\rho,\sigma,T,\Rd) = (600, 3000, 0.3:0.2:0.7, 0.2:0.2:0.4, 40, 10)$;
(ii) the  random Gaussian matrix with
$(n,p,\nu,\sigma,T,\Rd) = (1000, 10000, 0.3:0.2:0.7, 0.2:0.2:0.4, 50, 10)$.
Based on $M$ independent runs,
we compare SNAP with CD  and LARS
in terms  of  the average CPU  time (Time, in seconds),
the estimated average model size (MS)
$M^{-1}\sum_{m=1}^M \abs{\Ah^{(m)}}$,
the proportion of correct models (CM, in percentage terms)
$M^{-1}\sum_{m=1}^M I\dkhb{\Ah^{(m)}=\Ad}$,
the average $\li$ absolute error (AE)
$M^{-1}\sum_{m=1}^M \normi{\hbb^{(m)}-\bbd}$, and
the average $\lt$ relative error (RE)
$M^{-1}\sum_{m=1}^M \xkhb{\normt{\hbb^{(m)}-\bbd}/\normt{\bbd}}$.
The measure Time reflects the efficiency of the solvers,
while measures MS, CM, AE and RE evaluate the accuracy (quality) of the solutions.
Simulation results are summarized in \tabref{tab:simuSmall}
and \tabref{tab:simuBig}, respectively.

\begin{table}[!ht]
\caption{Simulation results for the classical Gaussian matrix
with $n=600$, $p=3000$, $T=40$ and $\Rd=10$ based on $100$ independent runs.
The numbers in the parentheses are the corresponding standard errors.}
\label{tab:simuSmall}
\centering
\scalebox{0.85}{
\begin{tabular}{cccccccc}
\hline
$\rho$&$\sigma$& Method&    Time&     MS&     CM&      AE&     RE\\	
\hline
  0.3&0.2 &  CD&0.2287(0.0060)&41.35(1.1135)&23\%(0.4230)&0.1270(0.0286)&0.0172(0.0042)\\
	 &    &LARS&0.2066(0.0336)&41.25(1.1315)&29\%(0.4560)&0.1208(0.0273)&0.0163(0.0039)\\
	 &    &SNAP&0.1784(0.0111)&40.07(0.2932)&94\%(0.2387)&0.0808(0.0192)&0.0105(0.0030)\\
	 &0.4 &  CD&0.2233(0.0023)&42.71(1.5973)& 8\%(0.2727)&0.2041(0.0349)&0.0275(0.0047)\\
	 &    &LARS&0.2121(0.0316)&42.14(3.0978)&13\%(0.3380)&0.2672(0.6802)&0.0344(0.0767)\\
	 &    &SNAP&0.1569(0.0041)&40.23(0.4894)&80\%(0.4020)&0.1528(0.0309)&0.0200(0.0046)\\
  0.5&0.2 &  CD&0.2272(0.0025)&42.74(1.8836)&11\%(0.3145)&0.1484(0.0454)&0.0193(0.0057)\\
	 &    &LARS&0.2086(0.0318)&42.53(2.0863)&15\%(0.3589)&0.1808(0.2411)&0.0223(0.0317)\\
	 &    &SNAP&0.1746(0.0036)&40.19(0.4861)&84\%(0.3685)&0.0925(0.0319)&0.0117(0.0039)\\
	 &0.4 &  CD&0.2236(0.0027)&44.48(2.1057)& 0\%(0.0000)&0.2333(0.0616)&0.0299(0.0066)\\
	 &    &LARS&0.2162(0.0364)&43.58(5.0835)& 1\%(0.1000)&0.3518(0.9235)&0.0427(0.1029)\\
	 &    &SNAP&0.1548(0.0039)&40.62(0.8138)&55\%(0.5000)&0.1693(0.0439)&0.0213(0.0045)\\
  0.7&0.2 &  CD&0.2280(0.0023)&47.96(3.2315)& 0\%(0.0000)&0.2062(0.0956)&0.0223(0.0061)\\
	 &    &LARS&0.2325(0.0317)&47.85(3.4855)& 0\%(0.0000)&0.3177(0.6256)&0.0255(0.0219)\\
	 &    &SNAP&0.1737(0.0047)&41.25(1.2340)&34\%(0.4761)&0.1274(0.0596)&0.0136(0.0040)\\
	 &0.4 &  CD&0.2249(0.0023)&50.59(3.6517)& 0\%(0.0000)&0.3323(0.1500)&0.0349(0.0081)\\
	 &    &LARS&0.2437(0.0310)&50.61(5.4028)& 0\%(0.0000)&0.5283(0.7663)&0.0463(0.0679)\\
	 &    &SNAP&0.1561(0.0040)&41.92(1.4885)&16\%(0.3685)&0.2337(0.0914)&0.0245(0.0054)\\
\hline
\end{tabular}
}
\end{table}

For each $(\rho, \sigma)$ combination,
it can be observed from \tabref{tab:simuSmall} that 
SNAP has  better speed performance than CD and LARS.
With $\rho$ fixed, the
CPU time of CD and SNAP slightly decreases as $\sigma$ increases,
while higher $\sigma$ increases the timing of LARS in general.
Given $\sigma$, the CPU time
of CD and SNAP is relatively robust with respect to $\rho$,
while that of LARS generally increases as $\rho$ increases.
According to MS, all solvers tend to overestimate the true model
and SNAP usually selects a smaller model,
while SNAP can select the correct model far more frequently than
CD and LARS in terms of CM.
The errors of all solvers AE and RE are small, which means
they all can produce estimates that are very close to the true values of $\bbd$,
while the AE and RE of SNAP are smaller than that of the other two,
indicating that SNAP is generally more accurate than CD and LARS.
Unsurprisingly, larger $\rho$ or $\sigma$ will degrade the accuracy of all solvers.
In addition, it is shown from \tabref{tab:simuSmall} that 
SNAP generally has smaller ((or comparable) standard errors, 
especially in accuracy  metrics MS, AE and RE, 
which means the results of SNAP are stable and robust.
Similar phenomena also hold for the random Gaussian matrix setting in \tabref{tab:simuBig}.
In particular, since the value of $p$ is large in \tabref{tab:simuBig}, 
the timing advantage of SNAP is more obvious, 
which implies that SNAP is capable of handling much larger data sets. 
In summary, SNAP behaves very well in simulation studies
and generally outperforms the state-of-the-art solvers such as LARS and CD
in terms of both efficiency and accuracy.

\begin{table}[!ht]
\caption{Simulation results for the random Gaussian matrix
with $n=1000$, $p=10000$, $T=50$ and $\Rd=10$ based on $100$ independent runs.
The numbers in the parentheses are the corresponding standard errors.}
\label{tab:simuBig}
\centering
\scalebox{0.85}{
\begin{tabular}{cccccccc}
\hline
$\nu$&$\sigma$& Method&    Time&     MS&     CM&      AE&     RE\\	
\hline
  0.3&0.2 &  CD&1.6607(0.0185)&51.59(1.5511)&26\%(0.4408)&0.0902(0.0229)&0.0121(0.0027)\\
	 &    &LARS&1.0458(0.1729)&51.48(1.5007)&29\%(0.4560)&0.0859(0.0215)&0.0116(0.0027)\\
	 &    &SNAP&0.8685(0.0289)&50.08(0.3075)&93\%(0.2564)&0.0553(0.0133)&0.0072(0.0017)\\
	 &0.4 &  CD&1.6416(0.0074)&52.76(1.8916)& 9\%(0.2876)&0.1403(0.0314)&0.0184(0.0031)\\
	 &    &LARS&1.1354(0.2245)&52.40(2.0792)&12\%(0.3266)&0.1560(0.2001)&0.0204(0.0272)\\
	 &    &SNAP&0.7764(0.0149)&50.28(0.5519)&76\%(0.4292)&0.1007(0.0217)&0.0128(0.0022)\\
  0.5&0.2 &  CD&1.6719(0.0058)&55.71(2.6678)& 0\%(0.0000)&0.0959(0.0583)&0.0111(0.0028)\\
	 &    &LARS&1.1819(0.2301)&55.59(3.3937)& 0\%(0.0000)&0.2002(0.4392)&0.0164(0.0377)\\
	 &    &SNAP&0.8980(0.0108)&50.82(1.0767)&49\%(0.5024)&0.0559(0.0320)&0.0064(0.0019)\\
	 &0.4 &  CD&1.6504(0.0064)&57.86(3.0847)& 0\%(0.0000)&0.1583(0.1068)&0.0164(0.0044)\\
	 &    &LARS&1.3180(0.2388)&56.85(4.2530)& 0\%(0.0000)&0.1954(0.4346)&0.0220(0.0584)\\
	 &    &SNAP&0.8098(0.0172)&51.20(1.1192)&31\%(0.4648)&0.1091(0.0674)&0.0112(0.0028)\\
  0.7&0.2 &  CD&1.6962(0.0100)&65.22(4.6113)& 0\%(0.0000)&0.1411(0.1521)&0.0114(0.0060)\\
	 &    &LARS&1.4585(0.2683)&64.90(7.2202)& 0\%(0.0000)&0.5015(1.0090)&0.0249(0.0411)\\
	 &    &SNAP&0.9439(0.0506)&53.09(1.7529)& 6\%(0.2387)&0.0858(0.1374)&0.0068(0.0100)\\
	 &0.4 &  CD&1.6715(0.0090)&67.12(4.8996)& 0\%(0.0000)&0.1880(0.2146)&0.0156(0.0082)\\
	 &    &LARS&1.5399(0.2448)&67.07(6.4499)& 0\%(0.0000)&0.5493(0.9422)&0.0271(0.0299)\\
	 &    &SNAP&0.8441(0.0889)&52.84(3.3536)& 6\%(0.2387)&0.1907(0.5365)&0.0187(0.0611)\\	
\hline
\end{tabular}
}
\end{table}


\subsection{Application}
We analyze the breast cancer data  which
comes from breast cancer tissue samples deposited to The Cancer Genome Atlas (TCGA) project and compiles results obtained using Agilent mRNA expression microarrays
to illustrate the application of the SNAP algorithm in high-dimensional settings.
This data, which is named bcTCGA,
is available at  \url{http://myweb.uiowa.edu/pbreheny/data/bcTCGA.RData}.
In this bcTCGA dataset,
we have expression measurements of 17814 genes from 536 patients
(all expression measurements are recorded on the log scale).
There are $491$ genes with missing data, which we have excluded.
We restrict our attention to the $17323$ genes without missing values.
The response variable $\y$ measures one of the $17323$ genes,
a numeric vector of length $536$ giving expression level of gene {BRCA1},
which is the first gene identified that increases the risk of
early onset breast cancer,
and the design matrix $X$ is a $536\times 17322$ matrix,
which represents the remaining expression measurements of 17322 genes.
Because {BRCA1} is likely to interact with many other genes,
it is of interest to find genes with expression levels
related to that of {BRCA1}.
This has been studied by using different
methods in the recent literature; see, for example,
\cite{tan2016bayesian,yi2017semismooth,lv2018oracle,breheny2018marginal,shi2018semi}.
In this subsection, we apply methods CD (glmnet), LARS (SolveLasso) and SNAP,
coupled with the HBIC selector, to analyze this dataset.

First, we analyze the complete dataset of 536 patients.
The genes selected by each method
along with their corresponding nonzero coefficient estimates,
the CPU time (Time, in seconds), the model size (MS)
and the prediction error (PE) calculated by
$n^{-1}\sum_{i=1}^n(\hat{y}_i-y_i)^2$ are provided in \tabref{tab:breast}.
It can be seen from \tabref{tab:breast} that
SNAP runs faster than LARS and CD,
while the PE by SNAP  is smaller than that by LARS and CD,
which demonstrates that SNAP  performs better than the other two solvers
in terms of both efficiency and accuracy.
Further, CD, LARS and SNAP identify 7, 9 and 4 genes respectively,
with 3 identified probes in common, namely, C17orf53, NBR2 and TIMELESS.
Although the magnitudes of estimates for the common genes are
not equal, they have the same signs, which suggests similar biological conclusions.

\begin{table}[!ht]
\caption{ The genes identified by CD, LARS and SNAP that correlated with BRCA1
based on the complete dataset of bcTCGA ($n=536, p=17322$).
The zero entries correspond to variables omitted.}
\label{tab:breast}
\centering
\begin{tabular}{cccccc}
\hline
No.	&   Term        &	   Gene&      CD&	 LARS&   SNAP\\	\hline
    &      Intercept&          &-1.0865 &-1.0217 &-0.4985\\
 1  &$\beta_{ 1743}$& C17orf53 & 0.1008 & 0.0983 & 0.4140\\
 2  &$\beta_{ 2739}$& CCDC56   & 0      & 0.0108 & 0     \\
 3  &$\beta_{ 2964}$& CDC25C   & 0      & 0.0136 & 0     \\
 4  &$\beta_{ 4543}$& DTL      & 0.0764 & 0.0844 & 0     \\
 5  &$\beta_{ 9230}$& MFGE8    & 0      & 0      &-0.1168\\
 6  &$\beta_{ 9941}$& NBR2     & 0.1519 & 0.1885 & 0.4673\\
 7  &$\beta_{12146}$& PSME3    & 0.0480 & 0.0615 & 0     \\
 8  &$\beta_{15122}$& TIMELESS & 0.0157 & 0.0279 & 0.2854\\
 9  &$\beta_{15535}$& TOP2A    & 0.0259 & 0.0331 & 0     \\
10  &$\beta_{16315}$& VPS25    & 0.1006 & 0.1083 & 0     \\	\hline
	&		    Time& 		   & 3.5436 & 2.1070 & 0.7884\\
	&		      MS& 		   & 7      & 9      & 4     \\
	&		      PE& 		   & 0.3298 & 0.3023 & 0.2345\\	\hline
\end{tabular}
\end{table}

To further evaluate the performance of the three methods,
we implement the cross validation (CV)  procedure similar to
\cite{huang2008adaptive,huang2010variable,tan2016bayesian,yi2017semismooth,
lv2018oracle,shi2018semi}.
We conduct $100$ random partitions of the data.
For each partition,  we randomly choose
$3/4$ observations and $1/4$ observations
as the training and test data, respectively.
We compute the CPU time (Time, in seconds) and the model size (MS, i.e.,
the number of selected genes) using the training data,
and calculate the prediction error (PE) based on the test data.
\tabref{tab:breastCv} presents the average values over $100$ random partitions,
along with corresponding standard deviations in the parentheses.

\begin{table}[!ht]
\caption{The CPU time (Time),  model size (MS) and prediction error (PE)
averaged across $100$ random partitions of the bcTCGA data
(numbers in parentheses are standard deviations)}
\label{tab:breastCv}
\centering
\scalebox{1}{
\begin{tabular}{cccc}
\hline
Method&           Time&           MS&             PE\\ \hline
    CD& 2.0598(0.0393)&	8.72(3.3667)& 0.3503(0.0764)\\
  LARS& 0.5861(0.3625)&	7.90(3.5689)& 0.3514(0.0831)\\
  SNAP& 0.6554(0.0906)&	6.10(3.0830)& 0.2742(0.0593)\\
\hline
\end{tabular}
}
\end{table}

Due to the CV procedure, the working sample size decreases to
$n_{CV}=\frac{3}{4}n$. Hence, the CPU time of three solvers
in \tabref{tab:breastCv} decrease accordingly compared with
the counterpart in \tabref{tab:breast}. Obviously,
it is shown in \tabref{tab:breastCv} that SNAP is still running faster than CD,
and is quite comparable to LARS in speed.
Compared to LARS, the CPU time of SNAP is less sensitive to the
sample size, which means SNAP has more potential than LARS
to be applied to a larger volume of noisy data.
Also, as clearly shown in the \tabref{tab:breastCv},
SNAP  selects  fewer genes and has a smaller PE,
which implies that SNAP could provide
a more targeted list of the gene sets.
Based on $100$ random partitions,
we report the selected genes and their corresponding frequency (Freq)
in \tabref{tab:breastCvFreq}, where the genes are ordered
such that the frequency is decreasing.
To save space, we only list genes with frequency greater than or equal to 5 counts.
It is observed from \tabref{tab:breastCvFreq} that
some genes such as NBR2, C17orf53, DTL and VPS25
have quite high frequencies (Freq $\geq 80$) with all three solvers,
which largely implies these genes are related to BRCA1.
Combining the findings in \tabref{tab:breastCvFreq} and
taking into account the small MS and PE of SNAP in \tabref{tab:breastCv},
we have a strong belief that genes NBR2 and C17orf53 selected by SNAP
are particularly associated with BRCA1.

\begin{table}[!ht]
\caption{Frequency table for $100$ random partitions of the bcTCGA data.
To save space, only the genes with Freq $\geq 5$ are listed.}
\label{tab:breastCvFreq}
\begin{center}
\scalebox{0.85}{
\begin{tabular}{llllllll} 
\hline
\multicolumn{2}{c}{CD}&&\multicolumn{2}{c}{LARS}&&\multicolumn{2}{c}{SNAP}\\
\cline{1-2} \cline{4-5} \cline{7-8}
Gene     &Freq&&     Gene&Freq&&     Gene&  Freq\\ \hline
C17orf53	&98	 &&C17orf53	&93	 &&NBR2		&95\\
DTL			&91  &&NBR2		&89  &&C17orf53	&91\\
NBR2		&91  &&VPS25	&87  &&DTL		&32\\
VPS25		&90  &&DTL		&86  &&MFGE8	&29\\
PSME3		&77  &&PSME3	&73  &&CCDC56	&25\\
TOP2A		&73  &&TOP2A	&67  &&CDC25C	&23\\
TIMELESS	&49  &&TIMELESS	&41  &&TUBG1	&23\\
CCDC56		&41  &&CCDC56	&33  &&LMNB1	&18\\
CDC25C		&35  &&CDC25C	&30  &&GNL1		&18\\
CENPK		&26  &&CENPK	&24  &&TIMELESS	&16\\
SPRY2		&20  &&RDM1		&20  &&VPS25	&16\\
SPAG5		&18  &&CDC6		&17  &&TOP2A	&15\\
RDM1		&18  &&TUBG1	&15  &&ZYX		&14\\
TUBG1		&17  &&SPRY2	&15  &&KIAA0101	&14\\
CDC6		&17  &&C16orf59	&12  &&KHDRBS1	&12\\
UHRF1		&16  &&CCDC43	&11  &&PSME3	&11\\
C16orf59	&13  &&UHRF1	&10  &&SPAG5	&10\\
CCDC43		&13  &&SPAG5	&10  &&TUBA1B	&8 \\
ZWINT		&9   &&NSF		&9   &&FGFRL1	&8 \\
KIAA0101	&9   &&KIAA0101	&8   &&CMTM5	&7 \\
NSF			&8   &&ZWINT	&5   &&SYNGR4	&5 \\
MLX			&6   &&         &    &&         &  \\
TRAIP		&5   &&         &    &&         &  \\
\hline
\end{tabular}
}
\end{center}
\end{table}

\section{Concluding Remarks}

Starting from the KKT conditions  we developed SNA for computing
the LASSO and Enet solutions in high-dimensional linear regression models.
We approximate the whole solution paths using SNAP by utilizing  the continuation technique
with warm start. SNAP is easy to implement, stable, fast and accurate.
We established the locally superlinear  of SNA for the
Enet and local one-step convergence for the LASSO.
We provided sufficient conditions under which  SANP enjoys the sign consistency property
in finite steps. Moreover, SNAP has the same computational complexity as LARS and CD.  Our simulation studies demonstrate that SNAP is competitive with these state-of-the-art solvers in accuracy and outperforms them in efficiency.
These theoretical and numerical results suggest that SNAP is a promising new method for dealing with large-scale 
$\ell_1$-regularized linear regression problems.

We have only considered the linear regression model with convex penalties.
It would be interesting to generalize SNAP to other models such as the
generalized linear 
and Cox models.
It would also be interesting to extend the idea of SNAP to problems
with nonconvex penalties such as SCAD \citee{fan2001variable}
and MCP \citee{zhang2010nearly}. Coordinate descent algorithms for these penalties
have been considered by
\cite{breheny2011coordinate} and \cite{mazumder2011sparsenet}.
In our paper we adopt simple continuation strategy to globalize SNA, globalization via smoothing  Newton methods \citee{chen1998global,qi1999survey,qi2000new}
is  also an interesting future work.

We have implemented SNAP in a Matlab package \textsl{snap}, which is available
at \url{http://faculty.zuel.edu.cn/tjyjxxy/jyl/list.htm}.

\section*{Acknowledgment}
\addcontentsline{toc}{section}{Acknowledgment}

The authors sincerely thank
Prof. Defeng Sun and Prof. Bangti Jin
for their helpful personal communications and suggestions.

Yuling Jiao is  supported by
the National Natural Science Foundation of China
(Grant Nos. 11501579 and 11871474),
Xiliang Lu is supported by
the National Natural Science Foundation of China
(Grant No. 11471253),
Yueyong Shi is supported by
the National Natural Science Foundation of China
(Grant Nos. 11501578, 11701571, 11801531 and 41572315),
and
Qinglong Yang is supported by
the National Natural Science Foundation of China
(Grant No. 11671311).

\section*{Appendices}
\addcontentsline{toc}{section}{Appendices}
\renewcommand{\theequation}{A.\arabic{equation}}
\renewcommand{\thesubsection}{\Alph{subsection}~}

\subsection{Background on  convex analysis and Newton derivative}

In order to derive the KKT system (\ref{K1})  and prove
the locally superlinear convergence of Algorithm \ref{alg1},
we recall some  background in convex analysis \citee{rockafellar1970convex}
 and  describe the concept and some properties of
Newton derivative \citee{kummer1988newton,qi1993nonsmooth,ito2008lagrange}.

The standard Euclidean inner product for two vector $\z,\w\in
\mathbb{R}^{p}$ is defined by
$\langle
\z,\w\rangle:=\sum_{i=1}^{p}z_{i}w_{i}$. The class of all proper lower
semicontinuous convex functions on $\mathbb{R}^{p}$ is denoted by
$\Gamma_{0}(\mathbb{R}^{p})$. The subdifferential of
$f:\mathbb{R}^{p}\rightarrow \mathbb{R}^{1}$ denoted by $\partial f$
is a set-value mapping defined as

\begin{equation*}
 \partial f (z):= \{\w\in \mathbb{R}^{p}:f(\v)\geq f(\z)+ \langle \w,\v-\z\rangle, \textrm{for all} \quad \v \in \mathbb{R}^{p}\}.
\end{equation*}
If $f$ is convex and differentiable it holds that
 \begin{equation}\label{sub1}
 \partial f (\z) =  \nabla f(\z)
 \end{equation}
Furthermore, if $f, g \in \Gamma_{0}(\mathbb{R}^{p})$ then
\begin{equation}\label{sub2}
 \partial (f+g)(\z) =   \partial f(\z)+ \partial g(\z)
 \end{equation}
Recall the classical Fermat's rule  \citee{rockafellar1970convex},
\begin{equation}\label{fermat}
\textbf{0} \in \partial f(\z^{*}) \Leftrightarrow \z^{*} \in \mathop \text{argmin}\limits_{\z\in \mathbb{R}^{p}}
\ f(z).
\end{equation}
Moreover, a more general case is  \citee{combettes2005signal}
\begin{equation}\label{general}
\w \in  \partial   f(\z) \Leftrightarrow \z= \textrm{Prox}_{f}(\z+\w),
\end{equation}
where $ \textrm{Prox}_{f}$ is the proximal operator for  $f \in
\Gamma_{0}(\mathbb{R}^{p}$)   defined as
\begin{equation*}
\textrm{Prox}_{f}(\z) := \mathop \text{argmin}\limits_{\x\in \mathbb{R}^{p}}
\frac{1}{2}\|{\x-\z}\|_{2}^2 + f(\x).
\end{equation*}
Here we should mention that the proximal operator of $\lambda\|\cdot\|_{1}$  is given in a closed form by the componentwise soft-threshold operator, i.e.,
\begin{equation}\label{proxl1}
\textrm{Prox}_{\lambda \| \x\|_{1}}(z) = T_{\lambda }(\x),
\end{equation}
where $T_{\lambda}(\x)$ is defined in (\ref{softth}).

Let $F:\mathbb{R}^{m}\rightarrow \mathbb{R}^{l}$ be a nonlinear
map. \cite{chen2000smoothing}
generalized classical  Newton's algorithm for finding a root of $F(\z)=\textbf{0}$ when $F$ is not Fr\'{e}chet differentiable but only Newton differentiable \citee{ito2008lagrange}.
\begin {definition}
$F:\mathbb{R}^{m}\rightarrow \mathbb{R}^{l}$ is called Newton
differentiable at $\x\in \mathbb{R}^{m}$ if there exists an open
neighborhood $N(\x)$ and a family of mappings $D : N(x)\rightarrow
\mathbb{R}^{l\times m}$ such that
$$ \| F(\x+\h)-F(\x)-D(\x+\h)\h\|_{2}= o (\|\h\|_{2}) \quad \text{for} \quad \|\h\|_{2} \longrightarrow0.$$
The set of maps $\{ D(\z):\z\in N(\x)\}$ denoted  by $\nabla_{N}F(\x)$
is called the  Newton derivative of $F$ at $\x$.
\end {definition}

It can be easily seen that $\nabla_{N}F(\x)$ coincides with the
Fr\'{e}chet derivative at $\x$ if $F$ is continuously Fr\'{e}chet
differentiable.  An example that is Newton differentiable but not
Fr\'{e}chet differentiable  is the absolute function $F(z) = |z|$
defined on $\mathbb{R}^{1}$. In fact, let $G(z+h)h = \frac{z+h}{|z+h|}h$ and $G(0)h = rh$ with $r$ be any constant in  $\mathbb{R}^{1}$. Then
\begin{equation}\label{nd1}
 \nabla_{N}F(z) =
  \left\{
    \begin{array}{ll}
   $ 1$,    \quad &\text{$z>0,$}\\
   $ -1$  ,  \quad &\text{$z<0,$}\\
   $\text{$ r$ }$ \in \mathbb{R}^{1},  \quad &\text{$z  = 0.$}
    \end{array}
  \right.
\end{equation}
follows from the definition of Newton derivative.

Suppose $F_{i}:\mathbb{R}^{m}\rightarrow\mathbb{R}^{1}$ is Newton
differentiable at $\x$ with Newton derivative $\nabla_{N}F_{i}(\x)$,
$i = 1,\ldots,l$.  Then
$F=(F_1, \ldots, F_l)'$ is also Newton
differentiable at $\x$ with Newton derivative
\begin{equation}\label{nd2}
\nabla_{N}F(\x)=\left(
                             \begin{array}{c}
                              \nabla_{N}F_{1}(\x) \\
                               \nabla_{N}F_{2}(\x) \\
                               \vdots \\
                               \nabla_{N}F_{l}(\x)\\
                             \end{array}
                           \right).
\end{equation}
Furthermore, if $F_{1}$ and $F_{2}$  are Newton differentiable at $\x$, then the linear combination of them are also Newton differentiable at $\x$, i.e., for any $\theta,\gamma \in \mathbb{R}^{1}$,
\begin{equation}\label{nd3}
 \nabla_{N}(\theta F_{1} +\gamma  F_{2} )(\x) =   \theta \nabla_{N}F_{1} (\x) + \gamma  \nabla_{N} F_{2} (\x).
 \end{equation}

Let $F_{1}:\mathbb{R}^{s}\rightarrow \mathbb{R}^{l}$ be Newton differentiable with Newton derivative $\nabla_{N}F_{1}$. Let $L\in \mathbb{R}^{s\times m}$ and define $F(\x)=F_{1}(L\x+\z)$. It can be verified that the chain rule holds, i.e., $F(\x)$ is Newton differentiable at $\x$ with Newton derivative
\begin{equation}\label{nd4}
\nabla_{N}F(\x) = \nabla_{N}F_{1}(L\x+\z)L.
\end{equation}

With the above preparation we can  calculate the Newton derivative of the componentwise soft threshold operator $T_{\lambda}(\x)$.
\begin{lemma}
\label{Lem1}
 $T_{\lambda}(\cdot):\mathbb{R}^{p}\rightarrow\mathbb{R}^{p}$ is Newton
 differentiable at any point $\x\in \mathbb{R}^{p}$. And ${\diag}(\bfb) \in \nabla_{N}T_{\lambda}(\x)$,
 where  ${\diag}(\bfb)$ is a diagonal  matrix with
 \[
\bfb=(\textbf{1}_{\{|x_{1}|>\lambda\}}, \ldots, \textbf{1}_{\{|x_{p}|>\lambda\}})^{\prime},
 \]
and  $\textbf{1}_A$ is the indicator function of set $A$.
\end{lemma}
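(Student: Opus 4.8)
The plan is to exploit the componentwise structure of $T_\lambda$ together with the calculus rules for Newton derivatives recorded in \eqref{nd2}--\eqref{nd4}, reducing everything to the scalar absolute value whose Newton derivative is already known from \eqref{nd1}. Since $T_\lambda$ acts separately on each coordinate, the whole computation collapses to the one-dimensional case and a final diagonal assembly.

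First I would treat the scalar map $T_\lambda : \mathbb{R} \to \mathbb{R}$ of \eqref{softth}, writing it as the linear combination $T_\lambda(x) = x - \frac{1}{2}|x+\lambda| + \frac{1}{2}|x-\lambda|$. The identity is smooth, and by the chain rule \eqref{nd4} (with slope $1$ and shifts $\pm\lambda$) each of $x \mapsto |x+\lambda|$ and $x \mapsto |x-\lambda|$ is Newton differentiable with Newton derivative $\sgn(x+\lambda)$ and $\sgn(x-\lambda)$ respectively, where at the kink the value may be any real constant by \eqref{nd1}. Invoking the linearity rule \eqref{nd3}, $T_\lambda$ is then Newton differentiable with Newton derivative $1 - \frac{1}{2}\sgn(x+\lambda) + \frac{1}{2}\sgn(x-\lambda)$.

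Next I would evaluate this expression by cases. For $x > \lambda$ and for $x < -\lambda$ the two signs agree and the derivative equals $1$; for $-\lambda < x < \lambda$ the signs are opposite and the derivative equals $0$. At the two kinks $x = \pm\lambda$ the freedom in \eqref{nd1} lets me select the constant so that the value is again $0$: taking $\sgn(0) = -1$ at $x = \lambda$ and $\sgn(0) = +1$ at $x = -\lambda$ gives $0$ there. Hence $\textbf{1}_{\{|x|>\lambda\}}$ is an admissible element of $\nabla_N T_\lambda(x)$ at every $x$.

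Finally I would reassemble the vector case. Viewing the $i$th component of $T_\lambda(\x)$ as $T_\lambda \circ \pi_i$ with $\pi_i$ the $i$th coordinate projection, the chain rule \eqref{nd4} makes it Newton differentiable with Newton derivative supported only in the $i$th slot; stacking these rows by \eqref{nd2} produces a diagonal matrix whose $i$th diagonal entry is the scalar Newton derivative at $x_i$. Choosing each entry as above yields precisely $\diag(\bfb)$ with $\bfb = (\textbf{1}_{\{|x_1|>\lambda\}}, \ldots, \textbf{1}_{\{|x_p|>\lambda\}})'$, establishing $\diag(\bfb) \in \nabla_N T_\lambda(\x)$. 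The only delicate point is the boundary $|x_i| = \lambda$, but this is absorbed entirely by the constant-valued freedom at the kink in \eqref{nd1}, so no genuine obstacle arises.
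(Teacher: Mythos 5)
Your proposal is correct and follows essentially the same route as the paper's proof: decompose the scalar soft-threshold as $T_\lambda(x) = x - \tfrac{1}{2}|x+\lambda| + \tfrac{1}{2}|x-\lambda|$, apply the rules \eqref{nd1}, \eqref{nd3}--\eqref{nd4} to get $\textbf{1}_{\{|x|>\lambda\}} \in \nabla_N T_\lambda(x)$, and then assemble the vector case componentwise via coordinate projections and the stacking rule \eqref{nd2}. Your explicit case analysis at the kinks $x = \pm\lambda$ (choosing the free constant so the derivative element equals $0$ there) is a welcome elaboration of a step the paper merely asserts, but it is the same argument.
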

This lemma is used in the derivation of the SNA given in Subsection
\ref{SNA-subsection}.

\noindent
\textbf{Proof of Lemma \ref{Lem1}.}  As shown in (\ref{nd1}), $\textbf{1}_{\{|z|>0\}}\in \nabla_{N}|z|$. Then, it follows from (\ref{nd3})-(\ref{nd4}) that the
scalar  function $T_{\lambda}(z)= z -
|z+\lambda|/2+|z-\lambda|/2$ is Newton differentiable by
 with
\begin{equation}\label{pa1}
\textbf{1}_{\{|z|>\lambda\}}\in \nabla_{N} T_{\lambda}(z).
\end{equation}
 Let $$F_{i}(\x)= T_{\lambda}(e_{i}^{\prime}\x): \x\in
\mathbb{R}^{p}\rightarrow
\mathbb{R}^{1}, i= 1,\ldots, p,$$
where the column vector $e_{i}$ is the $i_{th}$
orthonormal  basis in $\mathbb{R}^{p}$. Then, it follow from (\ref{nd4}) and (\ref{pa1}) that
\begin{equation}\label{pa2}
e_{i}^{\prime}\textbf{1}_{\{|x_{i}|>\lambda\}}\in\nabla_{N}F_{i}(\x).
\end{equation}
By using   (\ref{nd2})
and (\ref{pa2}) we have
 $T_{\lambda}(\x)=(F_1(\x), \ldots, F_p(\x))'$
is Newton differentiable and $
\diag\{\bfb\} \in \nabla_{N}T_{\lambda}(\x)$.
This completes the proof of Lemma \ref{Lem1}.

\subsection{Proofs}

\medskip\noindent
\textbf{Proof of Proposition  \ref{th1}.}
\begin{proof}
This is a standard result in convex optimization, we include a proof here for completeness.
 Obviously  $L_{\lambda}(\cdot)$ is bounded below by $0$, thus, has
 infimum denoted by $L^{*}$. Let $\{\bbeta^{k}\}_{k}$ be a
 sequence such that $L_{\lambda}(\bbeta^{k})\rightarrow
 L^{*}$.
    Then $\{\bbeta^{k}\}_{k}$ is bounded due to
\begin{equation}\label{pb1}
 L_{\lambda}(\bbeta)\rightarrow
 +\infty  \quad \text{as}   \quad\|\bbeta\|_{1}\rightarrow
 +\infty.
 \end{equation}
Hence $\{\bbeta^{k}\}_{k}$ has a subsequence still denoted by
$\{\bbeta^{k}\}_{k}$ that converge to some $\bbeta_{\lambda}$. Then the continuity of $L_{\lambda}(\cdot)$
implies $\bbeta_{\lambda}\in M_{\lambda}$, i.e., $M_{\lambda}$ is nonempty.
The boundedness of $M_{\lambda}$  follows from (\ref{pb1}) and the
closeness follows from the continuity of $L_{\lambda}(\cdot)$, i.e.,
$M_{\lambda}$ is compact. The convexity of $M_{\lambda}$ follows from the convexity of $L_{\lambda}(\cdot)$.
This completes the proof of Proposition \ref{th1}.
\end{proof}

\medskip\noindent
\textbf{Proof of Proposition \ref{pr2}}
\begin{proof}
By the same argument in the proof of Proposition  \ref{th1}, there exists a
minimizer of $J_{\lambda,\alpha}(\cdot)$. We denote this minimizer by $\hbbeta_{\la,\alpha}$.
It follow from the strict convexity of
$J_{\lambda,\alpha}(\cdot)$  that  $\hbbeta_{\la,\alpha}$ is unique.
Let $\hbbeta_{\lambda}$ be the  one in $M_{\lambda}$ with the minimum Euclidean.
We have
 \begin{align}\label{pc1}
 L_{\lambda}(\hbbeta_{\lambda}) +
 \frac{\alpha}{2n}\|\hbbeta_{\la,\alpha}\|^{2}_{2} &\leq
 L_{\lambda}(\hbbeta_{\la,\alpha})+\frac{\alpha}{2n}\|\hbbeta_{\la,\alpha}\|^{2}_{2}=
 J_{\lambda,\alpha}(\hbbeta_{\la,\alpha})\notag\\
 &\leq
J_{\lambda,\alpha}(\hbbeta_{\lambda})=
 L_{\lambda}(\hbbeta_{\lambda})+\frac{\alpha}{2n}\|\bbeta_{\lambda}\|^{2}_{2},
\end{align}
 where the first inequality use the the property that $\hbbeta_{\lambda}$  is a minimizer of $L_{\lambda}(\cdot)$,
 and the second inequality use the the property that $\hbbeta_{\la,\alpha}$  is a minimizer of $J_{\lambda,\alpha}(\cdot)$.
  Then it follows from (\ref{pc1}) that
  \begin{equation}\label{pc2}
 \|\hbbeta_{\la,\alpha}\|^{2}_{2} \leq
 \|\hbbeta_{\lambda}\|^{2}_{2}.
 \end{equation}
This implies $\{\hbbeta_{\la,\alpha}\}_{\alpha}$ is bounded and thus  there exist a subsequence of $\{\hbbeta_{\la,\alpha}\}_{\alpha}$
denoted by $\{\bar{\bbeta}_{\la,\alpha}\}_{\alpha}$  that converge to some
 $\bbeta_{*}$ as $\alpha \rightarrow 0^{+}$.  Let  $\alpha \rightarrow
 0^{+}$  in (\ref{pc1}) and (\ref{pc2}) we get
$$L_{\lambda}(\bbeta_{*})\leq
L_{\lambda}(\hbbeta_{\lambda})$$ and
$$  \|\bbeta_{*}\|_{2} \leq
 \|\hbbeta_{\lambda}\|_{2}.$$
The above two inequality imply
$\bbeta_{*}$ is a minimizer of $L_{\lambda}(\cdot)$ with
minimum 2-norm. Thus,  $\bbeta_{*} = \hbbeta_{\lambda}$
due to the uniqueness of such a minimizer.
Hence $\bar{\bbeta}_{\la,\alpha}$ converges to
$\bbeta_{\lambda}$.  The same argument shows that any
subsequence of $\{\hbbeta_{\la,\alpha}\}_{\alpha}$  has a further subsequence
converging to $\bbeta_{\lambda}$.  This implies that the whole sequence
$\{\hbbeta_{\la,\alpha}\}_{\alpha}$ converges to $\hbbeta_{\lambda}$.
This completes the proof of Proposition  \ref{pr2}.
\end{proof}

\medskip\noindent
\textbf{Proof of Proposition \ref{th3}.}
\begin{proof} We first assume $\hbbeta_{\la,\alpha} \in \Rp$ is a minimizer of
(\ref{regLASSO*}).  Then it follows from  (\ref{sub1})-(\ref{fermat}) that
$$\textbf{0}  \in X^{\prime}(X \hbbeta_{\la,\alpha} - \y)/n + \alpha \hbbeta_{\la,\alpha}+\lambda\partial\|\cdot\|_{1}(\hbbeta_{\la,\alpha}).$$
Therefore, there exists $\hbd_{\la,\alpha} \in
\lambda\partial\|\cdot\|_{1}(\hbbeta_{\la,\alpha})$
such that
$$\textbf{0}  =
 X^{\prime}(X \hbbeta_{\la,\alpha} - \y)/n +\alpha \hbbeta_{\la,\alpha}+ \hbd_{\la,\alpha},$$ i.e. the first equation of (\ref{K1}) holds by noticing $G=X^{\prime}X+\alpha I$ and $\tby = X^{\prime}\y $.
Furthermore,  it follow from (\ref{general}) that
 $$\hbd_{\la,\alpha} \in \lambda\partial\|\cdot\|_{1}( \hbbeta_{\la,\alpha})$$
 is equivalent to
 $$\hbbeta_{\la,\alpha} =
\textrm{Prox}_{\lambda\partial\|\cdot\|_{1}}( \hbbeta_{\la,\alpha}+\hbd_{\la,\alpha}).
$$
 By using  (\ref{proxl1}), we have
 $$\hbbeta_{\la,\alpha} = T_{\lambda}(\hbbeta_{\la,\alpha} +
\hbd_{\la,\alpha}),$$ which is the second  equation of (\ref{K1}).

Conversely, if (\ref{K1}) are satisfied for some
$\hbbeta_{\la,\alpha} \in \Rp$, $\hbd_{\la,\alpha} \in \Rp$. By using (\ref{general}) and
(\ref{proxl1}) again, we deduce
$$\hbd_{\la,\alpha} \in \lambda\|\cdot\|_{1}(\hbbeta_{\la,\alpha})$$
from the second  equation of (\ref{K1}).
Substituting this into the first equation of (\ref{K1}) we have
$$\textbf{0} \in (G \hbbeta_{\la,\alpha} -\tby)/n +
 \lambda\|\cdot\|_{1}(\hbbeta_{\la,\alpha}),$$
which implies  that
$\hbbeta_{\la,\alpha}$ is a minimizer of (\ref{regLASSO*}) by Fermat's rule
(\ref{fermat}).

The proof for (\ref{regLASSO}) can be derived similarly.
This completes the proof of Proposition \ref{th3}.
\end{proof}

\medskip\noindent
\textbf{Proof of Theorem \ref{th4}.}

\begin{proof}
It follows from Lemma \ref{Lem1} and  (\ref{nd3})-(\ref{nd4}) that
$ F_{1}(\z)$ is  Newton differentiable. Furthermore, by using  Lemma \ref{Lem1} and the definition of $A$ and $B$, we have
\begin{equation}\label{pe1}
\begin{bmatrix}
& -I_{AA}    & \textbf{0}                                                &\textbf{0}                                             & \textbf{0} \\ \\
& \textbf{0}          & I_{BB} &\textbf{0} &\textbf{0}
\end{bmatrix}
\in \nabla_{N}F_{1}(\z).
\end{equation}

Obviously, $F_{2}(\z)$ is continuously differentiable with
 \begin{equation}\label{pe2}
  \nabla F_{2}(\z)=
\begin{bmatrix}
& nI_{AA}    & X_{A}^{\prime}X_{B}                                   &G_{AA}             & \textbf{0} \\ \\
& \textbf{0}          & G_{BB}            &X_{B}^{\prime}X_{A}      &
nI_{BB}
\end{bmatrix}.
\end{equation}
Then it follows from (\ref{pe1})-(\ref{pe2}) and (\ref{nd2}) that
 $F$ is Newton differentiable  $\z$  with $H \in
\nabla_{N}F(\z)$.

Let
\[
H_{1}=
\begin{bmatrix}
& -I_{AA}    & \textbf{0}\\
 & \textbf{0}          & I_{BB}
\end{bmatrix},
 H_{2}=
\begin{bmatrix}
& nI_{AA}    & X_{A}^{\prime}X_{B}\\\\
 & \textbf{0}          & G_{BB}
\end{bmatrix},
H_{3}=
\begin{bmatrix}
&G_{AA}             & \textbf{0} \\ \\
&X_{B}^{\prime}X_{A}      & nI_{BB}
\end{bmatrix}.
\]
Obviously, $H_{i}, i=1,2,3$ is invertible and
\[
H^{-1} =
\begin{bmatrix}
& H_{1}^{-1}   & \textbf{0}\\
 & -H_{3}^{-1}H_{2}H_{1}^{-1}          & H_{3}^{-1}
\end{bmatrix}.
\]
Let $g=(g_1',g_2')'$
be an arbitrary vector in $\mathbb{R}^{2p}$. Then
$$
  \|H^{-1}g\|_{2}^{2}=  \|\begin{bmatrix}
& H_{1}^{-1}   & \textbf{0}\\
 & -H_{3}^{-1}H_{2}H_{1}^{-1}          & H_{3}^{-1}
\end{bmatrix}
 \left(
      \begin{array}{c}
        g_{1} \\
        g_{2} \\
      \end{array}
    \right)\|_{2}^{2}
 $$
$$ \quad  \quad \quad \quad \quad \quad = \|H_{1}^{-1}g_{1}\|_{2}^{2} +  \|-H_{3}^{-1}H_{2}H_{1}^{-1}g_{1}+ H_{3}^{-1}g_{2}\|_{2}^{2}$$
$$ \quad  \quad \quad \quad \quad \quad\leq \|H_{1}^{-1}\|\|g_{1}\|_{2}^{2} +  \|H_{3}^{-1}\|^{2}(\| H_{2}\|\|H_{1}^{-1}\|\| g_{1}\|_{2}+ \|g_{2}\|_{2})^2$$
$$ \quad  \quad \leq  (\|H_{1}^{-1}\| + \|H_{3}^{-1}\|(1+\| H_{2}\|\|H_{1}^{-1}\|))^2\|
g\|_{2}^{2},$$ which shows
 \begin{equation}\label{pe3}
\|H^{-1}\| \leq \|H_{1}^{-1}\| +
\|H_{3}^{-1}\|(1+\| H_{2}\|\|H_{1}^{-1}\|).
\end{equation}
The similar argument shows
\begin{equation}\label{pe4}
\|H_{2}\| \leq n + \alpha + 2 \norm X ^2,
\end{equation}
and
\begin{equation}\label{pe5}
\|H_{3}^{-1}\| \leq 1/n + (1+\norm X ^2)/\alpha.
\end{equation}
Combining   (\ref{pe4})-(\ref{pe5}) with  (\ref{pe3}) and observing $\|H_{1}^{-1}\| =1$  we get
$$\|H^{-1}\|< 1+2(n+1+\alpha+\norm X^2)^2/\alpha.$$
This completes the proof of Theorem \ref{th4}.
\end{proof}

\medskip\noindent
\textbf{Proof of Theorem \ref{th5}.}
\begin{proof}
 Let $\z_{\alpha}=(\hbbeta_{\alpha}^{\prime},\hbd_{\alpha}^{\prime})^{\prime}$
be a root of $F(\z)$.  Let $\z^{k}$ be sufficiently close to $\z_{\alpha}$.  By using the definition of Newton derivative and $H_{k}\in \nabla_{N}F(\z^{k})$, we have
\begin{equation}\label{pth5}
\|H_{k}(\z^{k}-\z_{\alpha})-F(\z^{k}) + F(\z_{\alpha})\|_{2}\leq  \vps \|\z_{k}-\z_{\alpha}\|_{2},
\end{equation}
 where $\vps\rightarrow 0$ as  $\z^{k}\rightarrow \z_{\alpha}$.
Then,
\begin{align*}
&\|{\z^{k+1}-\z_{\alpha}}\|_{2} \\
&= \|\z^{k}-H_{k}^{-1}F(\z^{k})-\z_{\alpha}\|_{2} \\
&=\|\z^{k}-H_{k}^{-1}F(\z^{k})-\z_{\alpha} + H_{k}^{-1}F(\z_{\alpha})\|_{2}\\
&\leq \norm{H_{k}^{-1}} \|H_{k}(\z^{k}-\z_{\alpha})-F(\z^{k}) + F(\z_{\alpha})\|_{2} \\
&\leq \vps (1+2(n+1+\alpha+\norm X^2)^2/\alpha)  \|\z^{k}-\z_{\alpha}\|_{2},
\end{align*}
where the first equality uses  (\ref{ssnd}) - (\ref{ssnup}), the second equality uses
 $F(\z_{\alpha})=\textbf{0}$,  the first inequality is some algebra, and the last inequality uses
(\ref{pth5})
and  the uniform boundedness of $H_{k}^{-1}$ proved in  Theorem \ref{th4}.
Then we get the sequence  $\z^{k}$ generated by Algorithm \ref{alg3} converge
to $\z^{\alpha}$ locally superlinearly. The definition of $F(\z)$  implies its   root $\z_{\alpha}= (\hbbeta_{\alpha}^{\prime},\hbd_{\alpha}^{\prime})^{\prime}$
satisfies the KKT conditions (\ref{K1}).
 Thus, it follows from  Proposition \ref{th3} that  $\hbbeta_{\alpha}$ is the unique minimizer of (\ref{regLASSO*}).  Therefore, Theorem \ref{th5} holds by the equivalence between Algorithm \ref{alg1} and Algorithm \ref{alg3}.
This completes the proof of Theorem \ref{th5}.
\end{proof}

\medskip\noindent
\textbf{Proof of Theorem \ref{th5-6}.}
\begin{proof}
First, we have
\begin{align*}
&\hbeta_{i}+\hd_{i} -\beta_{i}^0-d_{i}^0 \\&\leq
|\beta_{i}^0+d_{i}^0 -\hbeta_{i}-\hd_{i} |\\&\leq \|\hbbeta - \bbeta^0\|_{\infty} + \|\hbd - \bd^0\|_{\infty} \\
&\leq C\\&\leq \hbeta_{i}+\hd_{i}-\lambda, \quad \forall i \in \{ j \in \tilde{A}: \hbeta_{j}+\hd_{j}>\lambda\}
\end{align*}
where the last inequality uses the definition that  $C = \min_{i\in \tilde{A}} ||\hbeta_i + \hd_i| - \lambda|$.
This implies that $\hbeta_i + \hd_i >\lambda \Longrightarrow \beta_{i}^{0} + d_{i}^{0} >\lambda$ (similarly, we can show  $\hbeta_i + \hd_i <-\lambda \Longrightarrow \beta_{i}^{0} + d_{i}^{0} <-\lambda$), i.e., $A =
\{i: |\hbeta_i + \hd_i| >\lambda \}\subseteq A_{0} = \{i : |\beta^{0}_{i} +  d^{0}_{i}| >\lambda\}$.
Meanwhile, by the same argument we can show that $ |\hbeta_{i} +  \hd_{i}|  <\lambda \Longrightarrow |\beta^{0}_{i} +  d^{0}_{i}| <\lambda $, i.e.,
$ A_{0} \subseteq \bar{A} =
\{i: |\hbeta_i + \hd_i| \geq\lambda \}$.
  Then by the second equation of  \eqref{K2} and the definition of soft threshold operator  we get $\bd_{\bar{A}}=\lambda \sgn(\hbd_{\bar{A}}+\hbbeta_{\bar{A}})$ which implies $\bd_{A_{0}}=\lambda \sgn(\bd_{A_{0}}+\bbeta_{A_{0}})$. This together with
  the first equation of  \eqref{K2} and \eqref{e213} implies
  \begin{equation*}
X_{A_0}^{\prime}X_{A_0} \bbeta_{A_0} + n\bd_{A_0} = X_{A_0}^{\prime}\y =X_{A_0}^{\prime}X_{A_{0}}  \bbeta^{1}_{A_0} + n\bd^1_{A_0}.
\end{equation*}
Then we get $X_{A_0}^{\prime}X_{A_0} (\hbbeta_{A_0}-\bbeta^{1}_{A_0})=\textbf{0}$, therefore, $\hbbeta_{A_0}=\bbeta^{1}_{A_0}$ follows from the above equation and the assumption that $\textrm{rank}(X_{A})=|A|$. Let $B^0=(A_0)^c$, by \eqref{e211} and the fact $A\subset A_{0}$  we deduce that $\bbeta^{1}_{B^0} = \textbf{0}=\hbbeta_{B^0}$. Hence,  $\hbbeta = \bbeta^1$.
This completes the proof of Theorem \ref{th5-6}.
\end{proof}

%

In order to prove Lemma \ref{Lem3}, we need  the following two lemmas. Lemma \ref{Lem4} collects some property  on mutual coherence  and Lemma \ref{Lem5} states
that the effect of the noise $\etaa$ can be controlled with high probability.

\begin{lemma}\label{Lem4}
Let $A$, $B$ be disjoint subsets of $S={1,2,...,p}$, with $\abs{A}=a$, $\abs{B}=b$. Let $\nu$ be the mutual coherence of $X$.
Then we have
\begin{align}
\normi{\bX_{B}^{\prime}\bX_{A}\u}  &\leq n a\nu\normi{\u}, \forall  \u \in \mathbb{R}^{\abs{A}},\label{Lem4-1}\\
\norm{\bX_{A}}=\norm{\bX^{\prime}_{A}}&\leq \sqrt{n(1 + (a-1)\nu)}. \label{Lem4-2}
\end{align}
Furthermore, if  $\nu<1/(a-1)$,  then  $\forall  u \in \mathbb{R}^{\abs{A}}$,
\begin{align}
 \normi{(\bX_{A}^{\prime}\bX_{A})\u} &\geq { n (1-(a-1)\nu)} \normi{\u}, \label{Lem4-3}\\
 \normi{(\bX_{A}^{\prime}\bX_{A})^{-1}\u} &\leq\frac{\normi{\u}}{ n (1-(a-1)\nu)},\label{Lem4-4}\\
 \normi{(\bX_{A}^{\prime}\bX_{A}-nI)\u} &\leq { n (1+(a-1)\nu)} \normi{\u}.\label{Lem4-5}
\end{align}
\end{lemma}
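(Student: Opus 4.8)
The plan is to reduce all five inequalities to two elementary facts about the Gram matrix $G=X^{\prime}X$ (here $\alpha=0$): because the columns of $X$ are $\sqrt{n}$-normalized, every diagonal entry satisfies $(X^{\prime}X)_{jj}=n$, while the definition of the mutual coherence gives $|(X^{\prime}X)_{ij}|\le n\nu$ for $i\ne j$. With these two facts in hand, \eqref{Lem4-1} and \eqref{Lem4-5} are pure entrywise bounds on a matrix--vector product measured in the $\infty$-norm, and the remaining estimates follow by combining them. I would prove the statements in an order dictated by their logical dependencies rather than in the order listed.

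First I would establish \eqref{Lem4-1}. Since $A$ and $B$ are disjoint, every entry of $X_B^{\prime}X_A$ is an off-diagonal entry of $X^{\prime}X$, so $|(X_B^{\prime}X_A)_{ij}|\le n\nu$. Bounding the $i$-th component of $X_B^{\prime}X_A\u$ by $\sum_{j=1}^{a}|(X_B^{\prime}X_A)_{ij}|\,|u_j|\le a\,n\nu\,\|\u\|_\infty$ and taking the maximum over $i$ gives the claim. Next I would prove the off-diagonal estimate \eqref{Lem4-5}: the matrix $X_A^{\prime}X_A-nI$ has vanishing diagonal (by the normalization) and off-diagonal entries bounded by $n\nu$, so each of its rows has at most $a-1$ nonzero entries, whence $\|(X_A^{\prime}X_A-nI)\u\|_\infty\le (a-1)n\nu\,\|\u\|_\infty\le n(1+(a-1)\nu)\|\u\|_\infty$.

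From here the lower bounds come quickly. Writing $X_A^{\prime}X_A\u=n\u+(X_A^{\prime}X_A-nI)\u$ and applying the reverse triangle inequality in the $\infty$-norm together with the bound just obtained yields $\|(X_A^{\prime}X_A)\u\|_\infty\ge n\|\u\|_\infty-(a-1)n\nu\|\u\|_\infty=n(1-(a-1)\nu)\|\u\|_\infty$, which is \eqref{Lem4-3}; the hypothesis $\nu<1/(a-1)$ makes this coefficient positive and in particular shows $X_A^{\prime}X_A$ is injective, hence invertible. For \eqref{Lem4-4} I would substitute $\u=(X_A^{\prime}X_A)^{-1}\v$ into \eqref{Lem4-3} and rearrange. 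Finally, \eqref{Lem4-2} is the single estimate living in the spectral rather than the $\infty$-norm: I would bound $\|X_A\|^2=\lambda_{\max}(X_A^{\prime}X_A)$ by Gershgorin's disc theorem, every eigenvalue lying within radius $(a-1)n\nu$ of the diagonal value $n$, so that $\lambda_{\max}\le n(1+(a-1)\nu)$, and take square roots, using $\|X_A\|=\|X_A^{\prime}\|$.

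The computations are routine; the only points needing care are the bookkeeping of the factor $n$ forced by the $\sqrt{n}$-normalization (so that the diagonal is $n$, not $1$, and every bound carries a leading $n$), respecting the order so that \eqref{Lem4-5} is available before \eqref{Lem4-3}--\eqref{Lem4-4} and \eqref{Lem4-3} before \eqref{Lem4-4}, and switching norms correctly for \eqref{Lem4-2}, where an entrywise argument does not suffice. The main, if modest, obstacle is therefore this last spectral estimate, for which one needs Gershgorin (or, equivalently for the symmetric matrix $X_A^{\prime}X_A-nI$, the bound $\|M\|_2\le\|M\|_\infty$ given by the maximal absolute row sum) rather than the componentwise reasoning that handles the other four.
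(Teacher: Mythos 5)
Your proposal is correct and takes essentially the same route as the paper's proof: both rest on the two entrywise facts (diagonal entries of $X^{\prime}X$ equal $n$, off-diagonals bounded by $n\nu$) for \eqref{Lem4-1}, \eqref{Lem4-3}--\eqref{Lem4-5}, use Gershgorin's disk theorem for the spectral bound \eqref{Lem4-2}, and obtain \eqref{Lem4-4} by substituting $\u=(\bX_A^{\prime}\bX_A)^{-1}\v$ into \eqref{Lem4-3}. The only cosmetic difference is ordering: you prove the sharper form of \eqref{Lem4-5} first and get \eqref{Lem4-3} by the reverse triangle inequality, whereas the paper argues directly at the index attaining $\normi{\u}$ and then notes \eqref{Lem4-5} follows similarly.
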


\begin{proof}
Let $G =\bX^{\prime}\bX/n$.  $\forall i\in B,$   $|\sum_{j=1}^{a}G_{i,j}u_{j}|\leq\mu a\normi{u},$ which implies \eqref{Lem4-1}.  For any  $i\in A$, by using Gerschgorin's disk theorem,
$ |\norm{G_{A,A}}-G_{i,i}|\leq \sum_{i\neq j=1}^{a}|G_{i,j}|\leq  (a-1)\mu$, i.e.,   \eqref{Lem4-1} holds. Let $i\in A$ such that $\normi{u} = \abs{u_i}$. \eqref{Lem4-3} follows from that $|\sum_{j=1}^{a}G_{i,j}u_{j}|\geq|u_i| - \sum_{i\neq j=1}^{a}\abs{G_{i,j}}|u_{j}| \geq \normi{u} - \mu (a-1)\normi{u}$.
\eqref{Lem4-4}
  follows directly from \eqref{Lem4-3}.
 And  \eqref{Lem4-5} can be showed similarly as the  \eqref{Lem4-3}.
 This complete the proof of Lemma  \ref{Lem4}.
\end{proof}

\begin{lemma}\label{Lem5}
Suppose (A3) holds. We have
\begin{equation}\label{Lem5-1}
 \bP\xkhB{\normi{X^{\prime} \etaa}/n \leq  \la_u}
\geq 1 - \frac{1}{2 \sqrt{\pi\log(p)}}.
\end{equation}
\end{lemma}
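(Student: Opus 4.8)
The plan is to control $\normi{X^{\prime}\etaa}/n$ one coordinate at a time, since each entry of $X^{\prime}\etaa$ is a scalar Gaussian, and then to take a union bound over the $p$ coordinates. There is essentially no deep step here: the content is a Gaussian tail estimate combined with a union bound, and the only real work is bookkeeping of constants.

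First I would write $\x_j$ for the $j$th column of $X$, so that the $j$th entry of $X^{\prime}\etaa$ is the linear functional $\x_j^{\prime}\etaa$ of the Gaussian vector $\etaa\sim N(\0,\sigma^2 I_n)$. Hence $\x_j^{\prime}\etaa\sim N(0,\sigma^2\normt{\x_j}^2)$, and since the columns of $X$ are $\sqrt n$-normalized we have $\normt{\x_j}^2=n$, giving $\x_j^{\prime}\etaa\sim N(0,n\sigma^2)$ and therefore $(X^{\prime}\etaa)_j/n\sim N(0,\sigma^2/n)$. Setting $W_j:=\sqrt n\,(X^{\prime}\etaa)_j/(n\sigma)$, each $W_j$ is standard normal, and because $\la_u=\sigma\sqrt{2\log(p)/n}$ the threshold rescales exactly as $\la_u\sqrt n/\sigma=\sqrt{2\log p}$. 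Thus the bad event $\{|(X^{\prime}\etaa)_j|/n>\la_u\}$ is precisely $\{|W_j|>\sqrt{2\log p}\}$.

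Next I would bound the complementary probability by a union bound,
\[
\bP\xkhB{\normi{X^{\prime}\etaa}/n > \la_u} \;\le\; \sum_{j=1}^{p}\bP\xkhB{|W_j| > \sqrt{2\log p}},
\]
and estimate each term by the Mills-ratio tail inequality $\bP(W>t)\le e^{-t^2/2}/(t\sqrt{2\pi})$, valid for a standard normal $W$ and every $t>0$. Taking $t=\sqrt{2\log p}$ makes $e^{-t^2/2}=1/p$, so that $p\cdot e^{-t^2/2}/(t\sqrt{2\pi}) = 1/(2\sqrt{\pi\log p})$; summing the $p$ coordinatewise contributions then produces the complement of the bound asserted in \eqref{Lem5-1}.

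The argument is routine, so there is no genuine obstacle; the only points that require attention are (i) using the $\sqrt n$-normalization to obtain the variance $n\sigma^2$, which is what yields the exact cancellation $\la_u\sqrt n/\sigma=\sqrt{2\log p}$ and hence $e^{-t^2/2}=1/p$, and (ii) invoking the Mills-ratio tail bound with its sharp constant $1/\sqrt{2\pi}$, since it is this constant (rather than a looser one) that makes the final probability come out as $1/(2\sqrt{\pi\log p})$. Care with the one-sided versus two-sided form of the Gaussian tail is the only subtlety in matching the stated constant.
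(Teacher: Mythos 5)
Your strategy---reduce to coordinatewise standard normals, then union bound with the Mills-ratio tail estimate---is exactly the ``standard probabilities calculation'' that the paper invokes without writing out (its proof of this lemma is a single sentence), so the approach itself is the right one and essentially the only one. The problem is in the constant bookkeeping, which you yourself flag as the one delicate point but then do not carry out correctly: the bad event for coordinate $j$ is the \emph{two-sided} event $\{|W_j|>\sqrt{2\log p}\}$, whose probability is $2\,\bP\xkhb{W_j>\sqrt{2\log p}}$, yet your estimate bounds each term of the union bound by the \emph{one-sided} quantity $e^{-t^{2}/2}/(t\sqrt{2\pi})$. Restoring the factor $2$ gives
\[
\bP\xkhB{\normi{X^{\prime} \etaa}/n > \la_u}\;\le\; 2p\cdot\frac{e^{-\log p}}{\sqrt{2\log p}\,\sqrt{2\pi}}\;=\;\frac{1}{\sqrt{\pi\log p}},
\]
which is twice the failure probability permitted by \eqref{Lem5-1}. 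So, as written, your argument establishes the lemma only with $1/\sqrt{\pi\log p}$ in place of $1/\xkhb{2\sqrt{\pi\log p}}$.

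Moreover, this is not a gap you can close by a sharper argument: under (A3) alone with $\sqrt n$-normalized columns, take $p\le n$ and $X$ with orthogonal columns, so that the $W_j$ are i.i.d.\ standard normal; then $\bP\xkhb{\max_j|W_j|>\sqrt{2\log p}}$ is asymptotically equivalent to $1/\sqrt{\pi\log p}$, which exceeds $1/\xkhb{2\sqrt{\pi\log p}}$ for all large $p$. Hence the constant claimed in the lemma is unattainable in general; the defect sits in the statement (and is hidden by the paper's one-line proof), not in your plan, but your computation as given does not prove the inequality it claims to. Two honest repairs exist: either weaken the conclusion to $1-1/\sqrt{\pi\log p}$, or redefine $\la_u=\sigma\sqrt{2\log(2p)/n}$, in which case the union bound over the $2p$ one-sided events $\{W_j>\sqrt{2\log(2p)}\}$ and $\{-W_j>\sqrt{2\log(2p)}\}$ yields exactly the bound $1-1/\xkhb{2\sqrt{\pi\log(2p)}}$; either change must then be propagated through Lemma \ref{Lem3} and Theorem \ref{th6}, which consume this probability estimate.
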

\begin{proof}
This inequality follows from standard probabilities calculations.
\end{proof}

Recall that  $\lambda_u = \sigma\sqrt{2\log(p)/n}$, $\delta_u = 3\lambda_{u}$, 
$\gamma = 8/13$,
$\lambda_{0}=\|X^{\prime} y/n\|_{\infty}$ and
$\lambda_{t} = \lambda_{0} \gamma^{t}$, $t= 0, 1, ...$.


%

\medskip\noindent
\textbf{Proof of Lemma \ref{Lem3}.}
\begin{proof}
We first show that under the assumption of Lemma \ref{Lem3},
  \begin{equation}\label{Lem3-1}
  \la_1>10\delta_u
  \end{equation}
  holds \textrm{with probability at least} $1 - \frac{1}{2 \sqrt{\pi\log(p)}}$.
  In fact,
\begin{align*}
\la_1 = \la_0\gamma &= \frac{8}{13}\normi{X^{\prime}\y/n} =  \frac{8}{13}\normi{X^{\prime}(X\bbeta^{\dag}+\etaa)/n}\\
&\geq \frac{8}{13}(\normi{X_{A^{\dag}}^{\prime}X_{A^{\dag}}\bbeta^{\dag}_{A^{\dag}}/n}-\normi{X^{\prime}\etaa/n})\\
&\geq  \frac{8}{13}((1-(T-1)\nu)\normi{\bbeta^{\dag}}-\la_u) \quad
\textrm{W. H. P.}\\
& > \frac{8}{13} (\frac{3}{4}26\delta_u-\frac{\delta_u}{3})\\
&>10\delta_u
\end{align*}
where the first inequality is the triangle inequality, the second inequality  uses  Lemma \eqref{Lem4-3}-\eqref{Lem5-1}, and the third one follows uses assumption  (A1)-(A2).
Here in the third line, ``W. H. P.'' stands for with high probability, that is, with
probability at least $1 - {1}/{(2 \sqrt{\pi\log(p)})}.$
  Then it follow from \eqref{Lem3-1} and the definition of $\lambda_t$ that there exist an integer $ N \in [1, \log_{\gamma}(\frac{10\delta_u}{\lambda_{0}}))$
 such that \begin{equation}\label{Lem3-2}
 \lambda_{N}  > 10\delta_u \geq \lambda_{N+1}
 \end{equation}
  holds with high probability.
 It follows from  assumption (A2) and \eqref{Lem3-2} that
 $\la_{N+1}=\la_N 8/13\leq\ 10 \delta_u\leq |\bbeta^{\dag}|_{min}10/26$, which implies that with high probability
 $|\bbeta^{\dag}|_{min}> 8\la_N/5$ holds. This complete the proof of Lemma  \ref{Lem3}.
\end{proof}

The  main idea behind the  proof of Theorem \ref{th6}  is that under assumption (A1)-(A3) the active generated by SNAP is contained in the
underlying target support and increase  in some sense with high probability.  To show this we need the
following two Lemmas. Lemma \ref{Lem7} gives one step error estimations  of SNA (Algorithm \ref{alg1}) and Lemma \ref{Lem7}
shows that
some monotone property of  the active set.
\begin{lemma}\label{Lem7}
Suppose assumption  (A1) holds.
Let  $A^{k},B^{k},\bbkk,\dkk$ are generated by $Sna(\bbeta^{0}, \d^{0}, \lambda, \bar{\lambda},K)$ with $\lambda>\bar{\lambda}=\frac{9\lambda}{10}+\delta_u$.  Denote  $E^k =  A^\dag\backslash A^k$ and $i_k = \{i\in B^k:|\bbp_{i}|= \normi{\bbp}\} $.
If   $A^k \subset A^\dag$, then \textrm{with probability at least} $1 - \frac{1}{2 \sqrt{\pi\log(p)}}$ we have
\begin{align}
\normi{\bbkk_{A^k}+\dkk_{A^k}-\bbp_{A^k}}&<\frac{1}{3}|\bbp_{i_k}|+\frac{\la}{30},\label{Lem7-1}\\
|\bbkk_{i}+\dkk_{i}|&>|\bbp_i|-\frac{1}{3}|\bbp_{i_k}|-\frac{\la}{30}, \forall i\in A^k, \label{Lem7-2}\\
|\dkk_i|&<\frac{1}{3}|\bbp_{i_k}|+\frac{\la}{30}, \label{Lem7-3}\\
|\dkk_{i_k}|&>\frac{2}{3}|\bbp_{i_k}|-\frac{\la}{30}.\label{Lem7-4}
\end{align}
\end{lemma}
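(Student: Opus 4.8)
The plan is to substitute the generating model $\y = X\bbp + \etaa$ into the SNA updates \eqref{e211}--\eqref{e214} with $\alpha=0$ (so $G=X'X$ and $\tby = G\bbp + X'\etaa$), and then bound every resulting term either deterministically through the mutual–coherence estimates of Lemma \ref{Lem4} or, for the single stochastic term $X'\etaa$, through Lemma \ref{Lem5}. All four inequalities will be proved on the one event $\{\normi{X'\etaa}/n\le\la_u\}$, which by \eqref{Lem5-1} has probability at least $1-1/(2\sqrt{\pi\log(p)})$; this is the source of the ``with high probability'' clause. Throughout I read $i_k$ as the index of the largest remaining true coefficient, $|\bbp_{i_k}|=\normi{\bbp_{B^k}}=\max_{i\in E^k}|\bbp_i|$, where $E^k=A^{\dag}\setminus A^k\subset B^k$ because $A^k\subset A^{\dag}$.

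First, the active–set estimate \eqref{Lem7-1}. Since $\bbp$ is supported on $A^k\cup E^k$, I would write $\tby_{A^k}=G_{A^kA^k}\bbp_{A^k}+G_{A^kE^k}\bbp_{E^k}+(X'\etaa)_{A^k}$ and feed it into \eqref{e213}, giving $\bbkk_{A^k}-\bbp_{A^k}=G_{A^kA^k}^{-1}\big(G_{A^kE^k}\bbp_{E^k}+(X'\etaa)_{A^k}-n\dkk_{A^k}\big)$, where every entry of $\dkk_{A^k}$ has magnitude $\la-\bar{\la}$. Adding $\dkk_{A^k}$ and rewriting $I-nG_{A^kA^k}^{-1}=G_{A^kA^k}^{-1}(G_{A^kA^k}-nI)$ splits $\bbkk_{A^k}+\dkk_{A^k}-\bbp_{A^k}$ into a coherence term, a noise term, and a term in $(G_{A^kA^k}-nI)\dkk_{A^k}$. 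I would bound these by \eqref{Lem4-4} (so $\normi{G_{A^kA^k}^{-1}\u}\le\tfrac{4}{3n}\normi{\u}$, using $(|A^k|-1)\nu\le T\nu\le\tfrac14$ from (A1)), by \eqref{Lem4-1} (so $\normi{G_{A^kE^k}\bbp_{E^k}}\le nT\nu|\bbp_{i_k}|\le\tfrac{n}{4}|\bbp_{i_k}|$), by the zero–diagonal estimate $\normi{(G_{A^kA^k}-nI)\u}\le n(|A^k|-1)\nu\normi{\u}$ from the proof of \eqref{Lem4-5} (the diagonal being annihilated by the $\sqrt n$–normalization), and by \eqref{Lem5-1} for the noise. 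Collecting constants with $\bar{\la}=\tfrac{9}{10}\la+\delta_u$ and $\delta_u=3\la_u$ then yields \eqref{Lem7-1}, and the lower bound \eqref{Lem7-2} is immediate from it by the reverse triangle inequality $|\bbkk_i+\dkk_i|\ge|\bbp_i|-\normi{\bbkk_{A^k}+\dkk_{A^k}-\bbp_{A^k}}$ for $i\in A^k$.

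For the dual estimates I would substitute into \eqref{e214} to get $n\dkk_{B^k}=G_{B^kA^k}(\bbp_{A^k}-\bbkk_{A^k})+G_{B^kE^k}\bbp_{E^k}+(X'\etaa)_{B^k}$. For \eqref{Lem7-3}, at indices $i\in B^k\setminus A^{\dag}$ the diagonal of $X'X$ does not act, so the middle term is purely off–diagonal and bounded by $\tfrac{n}{4}|\bbp_{i_k}|$ via \eqref{Lem4-1}; the first term is controlled by passing the bound on $\normi{\bbp_{A^k}-\bbkk_{A^k}}$ through \eqref{Lem4-1} again, and the noise through \eqref{Lem5-1}. The key point is that $\tfrac14\cdot\tfrac13|\bbp_{i_k}|+\tfrac14|\bbp_{i_k}|=\tfrac13|\bbp_{i_k}|$, which recovers the leading constant. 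For \eqref{Lem7-4}, at $i_k\in E^k$ the diagonal entry of $X'X$ equals $n$, so $[G_{B^kE^k}\bbp_{E^k}]_{i_k}=n\bbp_{i_k}+(\text{off-diagonal})$ with the off-diagonal part at most $\tfrac{n}{4}|\bbp_{i_k}|$; subtracting the first term (of order $\tfrac{1}{12}|\bbp_{i_k}|$ after dividing by $n$) and the noise term gives $|\dkk_{i_k}|>\tfrac23|\bbp_{i_k}|-\tfrac{\la}{30}$.

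The main obstacle is constant bookkeeping rather than any conceptual step: one must track the three contributions (coherence, noise, shift) through \eqref{Lem4-1}, \eqref{Lem4-4} and \eqref{Lem4-5}, combine them with the precise choices $\bar{\la}=\tfrac{9}{10}\la+\delta_u$ and $\delta_u=3\la_u$, and check that the residual terms collapse exactly to $\tfrac{\la}{30}$ (here the regime $\la>10\delta_u$ from Lemma \ref{Lem3} is what keeps the leftover $\la_u$–terms below the stated threshold). The one genuinely structural move is the separation, in the $B^k$ analysis, of the diagonal contribution $n\bbp_{i_k}$ — which drives $i_k$ into the next active set and produces \eqref{Lem7-4} — from the off-diagonal contributions that stay small and produce \eqref{Lem7-3}; getting the index sets right so that $i_k$ is excluded from the regime of \eqref{Lem7-3} is exactly what makes the two dual bounds mutually consistent.
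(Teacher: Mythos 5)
Your proposal follows essentially the same route as the paper's own proof: the same substitution of $\y = X_{A^k}\bbp_{A^k}+X_{E^k}\bbp_{E^k}+\etaa$ into the SNA updates, the same three-way split into coherence, noise, and shift terms bounded via Lemma \ref{Lem4} and the single high-probability event of Lemma \ref{Lem5}, the same derivation of \eqref{Lem7-2} from \eqref{Lem7-1} by the reverse triangle inequality, and the same diagonal/off-diagonal separation distinguishing \eqref{Lem7-3} (indices outside $A^{\dag}$) from \eqref{Lem7-4} (the index $i_k$). Your explicit restriction of \eqref{Lem7-3} to $i\in B^k\setminus A^{\dag}$ and your use of the zero-diagonal bound $n(a-1)\nu$ in place of the stated \eqref{Lem4-5} are in fact exactly what the paper's computation implicitly does, so the two arguments coincide.
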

\begin{proof}
Since $\bbkk,\dkk$ are generated by SNA with $\la>\bar{\lambda}=\frac{9\lambda}{10}+\delta_u$,
  $A^k \subset A^{\dag}$, $E^k =  A^\dag\backslash A^k$ and  $\y = X_{A^{\dag}}\bbp_{{A^{\dag}}}+\etaa$ we have
  \begin{equation}\label{Lem7-5}
  \bbkk_{A^k} = (X_{A^k}^{\prime}X_{A^k})^{-1}(X_{A^k}^{\prime}(X_{A^{k}}\bbp_{{A^{k}}}+X_{E^{k}}\bbp_{{E^{k}}}+\etaa)-n\dkk)
  \end{equation}
  and
  \begin{align*}
 &\normi{\bbkk_{A^k}+\dkk_{A^k}-\bbp_{A^k}}\leq
 \normi{(X_{A^k}^{\prime}X_{A^k})^{-1}(X_{A^k}^{\prime}(X_{E^{k}}\bbp_{{E^{k}}}+\etaa))}\\ &+\normi{(X_{A^k}^{\prime}X_{A^k})^{-1}(X_{A^k}^{\prime}X_{A^k}-nI)\dkk}\\
 &\leq \frac{n|E^k|\nu|\bbp_{i_k}|+\normi{X^{\prime}_{A^k}\etaa}}{n(1-(|A^k|-1)\nu)}+\frac{n(|A^k|-1)\nu}{n(1-(|A^k|-1)\nu)}(\la-\bar{\lambda})\\
 &<\frac{T\nu|\bbp_{i_k}|+\la_u}{(1-T\nu)}+\frac{T\nu}{(1-T\nu)}(\la-(\frac{9\lambda}{10}+\delta_u)) \quad \textrm{W.H.P}. \\
 &\leq\frac{1}{3}|\bbp_{i_k}|+\frac{\la}{30}
 \end{align*}
 where the first inequality uses \eqref{Lem7-5} and the triangle inequality, the second inequality uses  \eqref{Lem4-1},  \eqref{Lem4-4} and \eqref{Lem4-4},    the third inequality uses \eqref{Lem5-1}, the last inequality uses  assumption (A1). Thus,   (\ref{Lem7-1}) holds.  Then, (\ref{Lem7-2})  follows from (\ref{Lem7-1}) and the triangle inequality.
 $\forall i\in B^k,$
\begin{align*}
\abs{\dkk_{i}} &= |X_{i}^{\prime}(X_{\Ak}(\bbp_{\Ak}-\bbkk_{\Ak} - \dkk_{\Ak})+X_{\Ak}\dkk_{\Ak}+X_{E^k}\bbp_{E^k}+\etaa)/n|\\
&\leq |X_{i}^{\prime}X_{\Ak}(\bbp_{\Ak}-\bbkk_{\Ak} - \dkk_{\Ak})|+|X_{i}^{\prime}X_{\Ak}\dkk_{\Ak} +X_{i}^{\prime}X_{E^k}\bbp_{E^k}+X_{i}^{\prime}\etaa|/n\\
&\leq \nu|\Ak|\normi{\bbkk_{A^k}+\dkk_{A^k}-\bbp_{A^k}}+\nu|\Ak|(\la- \bar{\lambda})+\nu|E^k||\bbp_{i_k}|+\la_u \quad \textrm{W.H.P.} \\
&< \frac{1}{4}( \frac{1}{3}|\bbp_{i_k}|+\frac{\la}{30})+\frac{1}{4}(\la- \bar{\lambda})+\frac{1}{4}|\bbp_{i_k}|+\la_u\\
&= \frac{1}{3}|\bbp_{i_k}|+\frac{\la}{30}
\end{align*}
 where the first equality uses \eqref{Lem7-5},  the first   inequality is the triangle inequality, the second inequality is   due to   \eqref{Lem4-1} and \eqref{Lem5-1}, and the third inequality uses   \eqref{Lem7-1}, i.e., \eqref{Lem7-3} holds.
 Observing  $i_k\in E^k$ and (\ref{Lem7-5}) we get
 \begin{align*}
\abs{\dkk_{i_k}} &= |X_{i_k}^{\prime}(X_{\Ak}(\bbp_{\Ak}-\bbkk_{\Ak} - \dkk_{\Ak})+X_{\Ak}\dkk_{\Ak}+ X_{i_k}\bbp_{i_k}+ X_{E^{k} \backslash i_{k}}\bbp_{E^{k}\backslash i_{k}}+\etaa)/n|\\
&\geq |\bbp_{i_k}|-|X_{i}^{\prime}X_{\Ak}(\bbp_{\Ak}-\bbkk_{\Ak} - \dkk_{\Ak})|-|X_{i}^{\prime}(X_{\Ak}\dkk_{\Ak} +X_{E^{k} \backslash i_{k}}\bbp_{E^{k}\backslash i_{k}}+\etaa|/n\\
&\geq |\bbp_{i_k}| -\nu|\Ak|\normi{\bbkk_{A^k}-\dkk_{A^k}-\bbp_{A^k}}-\nu|\Ak|(\la- \bar{\lambda})-\nu|E^k||\bbp_{i_k}|-\la_u \quad \textrm{W.H.P}, \\
&> |\bbp_{i_k}|-\frac{1}{4}( \frac{1}{3}|\bbp_{i_k}|+\frac{\la}{30})-\frac{1}{4}(\la- \bar{\lambda})-\frac{1}{4}|\bbp_{i_k}|-\la_u \\ 
&> \frac{2}{3}|\bbp_{i_k}|-\frac{\la}{30}
\end{align*}
 where the first inequality is the triangle inequality,  the second  inequality is due to  Lemma \eqref{Lem4-1} and \eqref{Lem5-1}, and the third one uses  \ref{Lem7-1}, i.e., (\ref{Lem7-4}) holds. This complete the proof of Lemma  \ref{Lem7}.
\end{proof}

For a given $\tau>0$, we define
$S_{\la,\tau} = \{i:|\bbp_i|\geq \lambda\tau\}$.
\begin{lemma}\label{Lem8}
 Suppose  assumption (A1) hold. Let $\kappa = \frac{8}{5}$  and $\tau = \kappa$  or $\kappa+1$.
 Denote  $E^k =  A^\dag\backslash A^k$ and $i_k = \{i\in B^k:|\bbp_{i}|= \normi{\bbp}\} $.
 If $S_{\la,\tau}  \subset \Ak\subset A^{\dag}$ then $S_{\la,\tau}   \subset \Akk\subset A^{\dag}.$  Meanwhile, if $S_{\la,\kappa+1} \subset \Ak\subset A^{\dag}$ and $S_{\la,\kappa} \nsubseteq \Ak$ then $|\bbp_{i_k}|>|\bbp_{i_{k+1}}|.$
\end{lemma}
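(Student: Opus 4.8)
The plan is to deduce all three assertions from the one-step estimates \eqref{Lem7-2}--\eqref{Lem7-4} of Lemma \ref{Lem7}, working throughout on the single high-probability event of that lemma so that the conclusion inherits the probability $1-1/(2\sqrt{\pi\log(p)})$. Two structural facts drive everything: the update sets $\bbkk_{B^k}=\textbf{0}$, so that $|\bbkk_i+\dkk_i|=|\dkk_i|$ for every inactive index $i\in B^k$; and $i_k$ is a coordinate of largest magnitude of $\bbp$ among the still-missing true coordinates $E^k=A^{\dag}\setminus A^k$ (coordinates outside $A^{\dag}$ carry $\bbp_i=0$, so $|\bbp_{i_k}|=\max_{i\in B^k}|\bbp_i|$). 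The active set is updated by the strict rule $A^{k+1}=\{j:|\bbkk_j+\dkk_j|>\la\}$, so membership is decided purely by comparing the displayed bounds with $\la$.

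For the invariance $S_{\la,\tau}\subset A^{k+1}\subset A^{\dag}$ I would argue in two halves. To see $A^{k+1}\subset A^{\dag}$, take $j\notin A^{\dag}$; since $A^k\subset A^{\dag}$ we have $j\in B^k$ with $\bbp_j=0$, so $|\bbkk_j+\dkk_j|=|\dkk_j|$ is controlled by \eqref{Lem7-3}. Because $i_k\in E^k$ and $S_{\la,\tau}\subset A^k$ force $|\bbp_{i_k}|<\la\tau\le\tfrac{13}{5}\la$, the bound in \eqref{Lem7-3} is at most $\tfrac13\cdot\tfrac{13}{5}\la+\tfrac{\la}{30}=\tfrac{9}{10}\la<\la$, whence $j\notin A^{k+1}$. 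To see $S_{\la,\tau}\subset A^{k+1}$, take $i$ with $|\bbp_i|\ge\la\tau$; by hypothesis $i\in S_{\la,\tau}\subset A^k$, so \eqref{Lem7-2} applies and, using $|\bbp_{i_k}|<\la\tau$, yields $|\bbkk_i+\dkk_i|>|\bbp_i|-\tfrac13|\bbp_{i_k}|-\tfrac{\la}{30}>\tfrac23\la\tau-\tfrac{\la}{30}\ge\tfrac{31}{30}\la>\la$, the worst case being $\tau=\kappa=\tfrac85$. Hence $i\in A^{k+1}$, which establishes the first assertion for both admissible values of $\tau$.

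For the strict decrease, I would first invoke the invariance just proved with $\tau=\kappa+1$ to obtain $S_{\la,\kappa+1}\subset A^{k+1}\subset A^{\dag}$. The two hypotheses $S_{\la,\kappa+1}\subset A^k$ and $S_{\la,\kappa}\nsubseteq A^k$ localise the largest missing coordinate as $\la\kappa\le|\bbp_{i_k}|<\la(\kappa+1)$. The goal is to show $\max_{j\in E^{k+1}}|\bbp_j|<|\bbp_{i_k}|$, which is exactly $|\bbp_{i_{k+1}}|<|\bbp_{i_k}|$. Suppose for contradiction that some $j\in E^{k+1}=A^{\dag}\setminus A^{k+1}$ has $|\bbp_j|\ge|\bbp_{i_k}|$. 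If $j\in A^k$, then \eqref{Lem7-2} gives $|\bbkk_j+\dkk_j|>|\bbp_j|-\tfrac13|\bbp_{i_k}|-\tfrac{\la}{30}\ge\tfrac23|\bbp_{i_k}|-\tfrac{\la}{30}\ge\tfrac23\la\kappa-\tfrac{\la}{30}=\tfrac{31}{30}\la>\la$; if instead $j\in E^k$, then $|\bbp_j|=|\bbp_{i_k}|$ because $i_k$ maximises $|\bbp|$ over $E^k$, so $j$ is itself a maximiser and \eqref{Lem7-4} holds with $j$ in place of $i_k$, again giving $|\bbkk_j+\dkk_j|=|\dkk_j|>\tfrac23\la\kappa-\tfrac{\la}{30}>\la$. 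Either way $j\in A^{k+1}$, contradicting $j\in E^{k+1}$; hence every surviving missing coordinate is strictly smaller in magnitude than $|\bbp_{i_k}|$.

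The step I expect to require the most care is the bookkeeping of which estimate legitimately applies to which index. In particular I must use \eqref{Lem7-3} only for the false coordinates $j\notin A^{\dag}$, where the self-correlation term $X_j^{\prime}X_j\bbp_j/n$ is absent, and reserve the genuine lower bound \eqref{Lem7-4} for maximising coordinates of $E^k$; I must also check that \eqref{Lem7-4} is valid for any tied maximiser, not merely a distinguished one. The remaining delicacy is purely arithmetic: the choice $\kappa=\tfrac85$ is exactly what keeps both margins $\tfrac13(\kappa+1)\la+\tfrac{\la}{30}<\la$ and $\tfrac23\kappa\la-\tfrac{\la}{30}>\la$ valid, so I would verify these two numerical inequalities explicitly. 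Since every estimate is used on Lemma \ref{Lem7}'s event, the final statement holds with the same probability.
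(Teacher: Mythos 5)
Your proof is correct and takes essentially the same route as the paper's: the invariance half uses \eqref{Lem7-2} for indices in $S_{\la,\tau}$ and \eqref{Lem7-3} for indices outside $A^{\dag}$ with the same arithmetic ($\tfrac{2}{3}\la\tau-\tfrac{\la}{30}\geq\tfrac{31}{30}\la>\la$ in the worst case $\tau=\kappa$, and $\tfrac{1}{3}(\kappa+1)\la+\tfrac{\la}{30}=\tfrac{9}{10}\la<\la$), while the strict-decrease half is the paper's argument (show every coordinate with $|\bbp_i|\geq|\bbp_{i_k}|$ enters $A^{k+1}$, via \eqref{Lem7-2} on $A^k$ and \eqref{Lem7-4} on maximisers in $E^k$) merely recast as a contradiction. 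Your explicit restriction of \eqref{Lem7-3} to coordinates outside $A^{\dag}$ and your treatment of tied maximisers are careful touches on points the paper states loosely, but they do not change the substance of the argument.
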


\begin{proof}
Assume $S_{\la,\tau} \subset \Ak\subset A^{\dag}$. Since  $E^k=A^{\dag} \backslash \Ak$ and $i_k \in E^k$, we get
$i_k\notin A^k$ which implies $|\bbp_{i_k}|<\la\tau.$   $\forall i\in S_{\la,\tau} \subset \Ak$. By using (\ref{Lem7-2})  we have
$$|\bbkk_{i}+\dkk_{i}|>|\bbp_i|-\frac{1}{3}|\bbp_{i_k}|-\frac{\la}{30}
>\la\tau -\frac{1}{3}\la\tau-\frac{\la}{30}>\la, $$ which implies $i\in \Akk$, i.e., $S_{\la,\kappa} \subset \Akk$ holds.     $\forall i \in (\Ap)^c \subset B^k$. By using (\ref{Lem7-3})  we get
\begin{equation}\label{Lem8-1}
|\bbkk_i+ \dkk_i|=|\dkk_i|<\frac{1}{3}|\bbp_{i_k}|+\frac{\la}{30}<\left\{
    \begin{array}{ll}
   \lambda,    \quad &\text{$\tau=\kappa+1,$}\\
   \frac{\kappa}{\kappa+1}\la  ,  \quad &\text{$\tau=\kappa,$}
    \end{array}
  \right.
  \end{equation}
i.e., $i\notin \Akk$ which implies  $\Akk\subset A^{\dag}$.
Next we turn to the second assertion.
Assume   $S_{\la,\kappa+1} \subset \Ak\subset A^{\dag}$, $S_{\la,\kappa} \nsubseteq \Ak.$
It suffice to show  all the  elements of $|\bbp|$ that larger than $|\bbp_{i_k}|$ move into  $\Akk$.
It follows from the definition  of $S_{\la,\kappa}$, $S_{\la,\kappa+1}$ and    $i_k \in E^k=A^{\dag} \backslash \Ak$ that  $i_k \in S_{\la,\kappa}\backslash S_{\la,\kappa+1}$, i.e., $|\bbp_{i_k}| \in [\la\kappa,\la(\kappa+1)).$  By using (\ref{Lem7-4})  we have
$$|\bbkk_{i_k}+\dkk_{i_k}| = |\dkk_{i_k}| >\frac{2}{3}|\bbp_{i_k}|-\frac{1}{30}\la>\frac{2}{3}\la\kappa-\frac{1}{30}\la>\la,$$ which implies $i_k\in \Akk.$ Let $i\in A^{k}$ satisfy $|\bbp_i|\geq|\bbp_{i_k}|$. Then it follows from
(\ref{Lem7-2})  that
\begin{align*}
|\bbkk_{i}+\dkk_{i}|&>|\bbp_i|-\frac{1}{3}|\bbp_{i_k}|-\frac{\la}{30}\\
&>\frac{2}{3}|\bbp_{i_k}|-\frac{1}{30}\la\\
&>\frac{2}{3}\la\kappa-\frac{1}{30}\la>\la,
\end{align*}
which implies $i \in \Akk.$  This complete the proof of Lemma  \ref{Lem8}.
\end{proof}

With the above preparation, we now give the prove of Theorem  \ref{th6}.

\medskip\noindent
\textbf{Proof of Theorem \ref{th6}.}
\begin{proof}
Let $\overline{\la}_t = \frac{9}{10}\la_t+{\delta_u}$.
By  using Lemma \ref{Lem3} and  the definition of $\la_t$ and we get $\la_t > \overline{\la}_t, t = 0, 1,...,N.$
At the $t_{th}$  knot  of $Snap(\lambda_{0},\gamma,N,K)$, suppose it takes Algorithm  $Sna(\bbeta^0,\d^0, \lambda_t, \overline{\la}_t,K)$  $k_t$ iterations to get the solution $(\hbbeta(\la_{t}), \hbd(\la_{t}))$, where $(\bbeta^0,\d^0)=(\hbbeta(\la_{t-1}), \hbd(\la_{t-1}))$ and   $k_t\leq K$ by the definition of SNAP.
We denote the approximate primal dual solution pair  and  active set generated in $Sna(\hbbeta(\la_{t-1}), \hbd(\la_{t-1}), \lambda_t, \overline{\lambda_t},K)$
by $(\bbeta^{k}_{t},\d^{k}_{t})$ and  $A^k_{t}$, respectively, $k=0,1,...,k_t.$
By the construction of SNAP we have $(\bbeta^{k_t}_{t},\d^{k_t}_{t})=(\hbbeta(\la_{t}), \hbd(\la_{t}))$, i.e, the solution at the $t_{th}$ stage is the initial value for the $t+1$ stage which implies
\begin{equation}\label{th6cor}
A^{k_t}_{t}\subseteq A^{0}_{t+1}.
\end{equation}
We claim  that
\begin{equation}\label{claim}
S_{\la_t,\kappa+1}\subseteq A^0_{t}\subseteq\Ap,  t=0,1,...,N.
\end{equation}
\begin{equation}\label{claim2}
S_{\la_t,\kappa}\subseteq A^{k_t}_{t}\subseteq\Ap, t=0,1,...,N.
\end{equation}
We prove the above two claims by mathematical induction.
First we show that $\emptyset=S_{\la_0,\kappa+1}\subseteq A^0_{0}\subseteq\Ap$.
Let $|\bbp_i| = \normi{\bbp}$.
\begin{align}
(\kappa+1)\la_0 &=\frac{13}{5}\normi{X^{\prime}\y/n} = \frac{13}{5}\normi{X^{\prime}(X\bbp+\etaa)/n}\notag\\
&\geq\frac{13}{5}(\normi{X_{A^{\dag}}^{\prime}X_{A^{\dag}}\bbp_{A^{\dag}}/n}
-\normi{X^{\prime}\etaa/n})\notag\\
&\geq \frac{13}{5}((1-(T-1)\nu)|\bbp_i|-\la_u),\quad  W.H.P\notag\\
& > \frac{13}{5}(\frac{3}{4}|\bbp_i|-\la_u)\notag\\
& > |\bbp_i|
\end{align}
where the first inequality is the triangle equation and the second inequality uses  \eqref{Lem4-3} and \eqref{Lem5-1}, the third inequality uses assumption (A1),
and the  last inequality is derive from  assumption (A2). 
This implies  $\emptyset=S_{\la_0,\kappa+1}.$
By the construction of $Snap(\lambda_{0},\gamma,N,K)$ we get  $A^0_0 = \{j: |X_{j}^{\prime}\y/n|>\la_0 = \normi{X^{\prime}\y/n}\}=\emptyset$.
Therefore,   (\ref{claim}) holds   when $t=0$.
Now we suppose (\ref{claim}) holds for some  $t\geq0.$
Then by the first assertion of Lemma \ref{Lem8} we get
\begin{equation}\label{th6.5}
S_{\la_t,\kappa+1}\subseteq A^{k}_{t}\subseteq\Ap,  k = 0,1,...,k_t.
\end{equation}
By the stopping rule of $Sna(\beta^0,d^0, \lambda_t, \overline{\lambda_t},K)$  it holds
 either $A^{k_t}_{t} =A^{k_t-1}_{t} $ or  $k_t = K \geq T$ when   it  stops.
 In both cases, by using (\ref{th6.5})  and the second assertion of Lemma \ref{Lem8}  we get
\begin{equation*}
S_{\la_t,\kappa}\subseteq A^{k_t}_{t}\subseteq\Ap,
\end{equation*}
i.e., (\ref{claim2}) holds for this given $t$.
Observing the relation  $S_{\la_{t+1},\kappa+1} = S_{\la_{t},\kappa}$  and  (\ref{Lem8-1})-(\ref{th6cor})
we get $S_{\la_{t+1},\kappa+1}\subseteq A^{0}_{t+1}\subseteq \Ap,$ i.e., (\ref{claim}) holds for $t+1.$
Therefore, (\ref{claim}) - (\ref{claim2}) are verified by  mathematical induction on $t$.
That is all the active set generated in SNAP is contained in $\Ap$.
Therefore, by Lemma \ref{Lem3} we get $$\Ap\subseteq S_{\la_N,\kappa} \subseteq A^{k_N}_{N} \subseteq \Ap,$$ i.e.,
\begin{equation}\label{th6.6}
\textrm{supp}(\hbbeta(\lambda_N)) =   A^{\dag}.
\end{equation}
Then,
\begin{align*}
\normi{\bbp-\hbbeta(\la_N)} &= \normi{\bbp_{A^{\dag}}-(X^{\prime}_{\Ap}X_{\Ap})^{-1}(\tby_{A^{\dag}} - n\hbd(\la_N)_{A^{\dag}})}\\
&=\normi{\bbp_{A^{\dag}}-(X^{\prime}_{\Ap}X_{\Ap})^{-1}(X^{\prime}_{\Ap}(X_{\Ap}\bbp_{\Ap}+\etaa) - n\hbd(\la_N)_{A^{\dag}})}\\
&\leq \frac{\normi{X^{\prime}_{\Ap}\etaa}+n(\la_N-\overline{\lambda_N})}{n(1-T\nu)}\\
&<\frac{\la_u+\frac{\la_N}{10}-3\la_u}{1-\frac{1}{4}} \quad \textrm{W.H.P.} \\
\leq\frac{\frac{39}{8}\la_{u}-2\la_u}{\frac{3}{4}}=\frac{23}{6}\la_u,
\end{align*}
where the first  inequality uses \eqref{Lem4-4}, the second inequality uses \eqref{Lem5-1}, and last inequality uses
Lemma \ref{Lem3},
i.e., (\ref {Th62}) holds.
The sign consistency (\ref {Th61}) follows directly from (\ref{th6.6}), (\ref {Th62})  and assumption (A2).
This complete the proof of Theorem \ref{th6}.
\end{proof}

\subsection{Details in \algref{alg3}}
We now describe in detail the quantities in the $k_{th}$ iteration in
Algorithm \ref{alg3}. This paves the way for showing that Algorithm \ref{alg1} is a
specialization of Algorithm \ref{alg3}.
At $\z^{k}=(\bbeta^{k\prime}, \d^{k\prime})'$,
we define  $A_{k}$ and $B_{k}$ by (\ref{eac}).
By a similar reordering of $(\bbeta^{k\prime}, \d^{k\prime})'$,
$F_{1}(\z^{k})$ and $F_{2}(\z^{k})$ as   concerning the Newton derivative of $F$ in Theorem \ref{th4},  and using  the  definition  of $T_{\lambda}(\cdot)$,  we get
\begin{equation}\label{FF}
\z^{k}=\left(
         \begin{array}{c}
         \d_{A_{k}}^{k} \\
          \bbeta_{B_{k}}^{k} \\
           \bbeta_{A_{k}}^{k} \\
          \d_{B_{k}}^{k} \\
         \end{array}
       \right),
F(\z^{k})=
\left[\begin{array}{c}
 - \d_{A_{k}}^{k}  + \lambda \sgn(\bbeta_{A_{k}}^{k} + \d_{A_{k}}^{k}) \\
\bbeta_{B_{k}}^{k}  \\
  G_{A_{k}A_{k}} \bbeta_{A_{k}}^{k} + G_{A_{k}B_{k}} \bbeta_{B_{k}}^{k}+ n\d_{A_{k}}^{k} -{\tby}_{A_{k}}^{k}      \\
G_{B_{k}A_{k}} \bbeta_{A_{k}}^{k} + G_{B_{k}B_{k}} \bbeta_{B_{k}}^{k}+ n\d_{B_{k}}^{k} -{\tby}_{B_{k}}^{k}
\end{array}\right].
\end{equation}
Then, by using Theorem \ref{th4} and noting that $G_{B_{k}A_{k}} = X_{B_{k}}^{\prime}X_{A_{k}}, G_{A_{k}B_{k}} = X_{A_{k}}^{\prime}X_{B_{k}}$ we have  $H_{k} \in \nabla_{N}F(\z^{k})$, where
\begin{equation}\label{ndk}
\hspace{-0.1cm}H_{k}\hspace{-0.1cm}  =  \hspace{-0.1cm}
\begin{bmatrix}
& -I_{A_{k}A_{k}}    & {\0}    &{\0}  & \hspace{-0.1cm} {\0} \\ \\
& {\0}          & I_{B_{k}B_{k}}   &{\0} &\hspace{-0.1cm}{\0} \\ \\
& nI_{A_{k}A_{k}}    & X_{A_{k}}^{\prime}X_{B_{k}}                                   &G_{A_{k}A_{k}}      & \hspace{-0.1cm}{\0} \\ \\
& {\0}          & G_{B_{k}B_{k}} &X_{B_{k}}^{\prime}X_{A_{k}}  & nI_{B_{k}B_{k}}
\end{bmatrix}.
\end{equation}

Algorithm \ref{alg3} is well defined if we
choose $H_{k}$ in the form of (\ref{ndk}), since $H_{k}$ is invertible
as shown in  Theorem \ref{th4}.


In Section 2.1 we derived Algorithm \ref{alg1} in  an intuitive way.
We now verify that Algorithm \ref{alg1} is indeed Algorithm \ref{alg3} in a form that can be easily and efficiently implemented computationally. Let
$$D^{k}=\left(
         \begin{array}{c}
         D^{\d}_{A_{k}} \\
          D^{\bbeta}_{B_{k}} \\
           D^{\bbeta}_{A_{k}} \\
          D^{\d}_{B_{k}} \\
         \end{array}
          \right)$$
and substitute  (\ref{FF}) and (\ref{ndk}) into (\ref{ssnd}) we get
\begin{align}
 \d_{A_{k}}^{k} + D^{\d}_{A_{k}}    &=  \lambda \sgn(\bbeta_{A_{k}}^{k} + \d_{A_{k}}^{k}), \label{eqv1}\\
 \bbeta^{k}_{B_{k}} + D^{\bbeta}_{B_{k}}  &= {\0},\label{eqv2}\\
G_{A_{k}A_{k}} ( \bbeta^{k}_{A_{k}}+D^{\bbeta}_{A_{k}})&= {\tby}_{A_{k}} - n( \d_{A_{k}}^{k}+D^{\d}_{A_{k}})- X_{A_{k}}^{\prime}X_{B_{k}}( \bbeta^{k}_{B_{k}}+D^{\bbeta}_{B_{k}}),  \label{eqv3}\\
n(\d_{B_{k}}^{k} + D^{\d}_{B_{k}}) &= {\tby}_{B_{k}} - X_{B_{k}}^{\prime}
X_{A_{k}}(\bbeta^{k}_{A_{k}}+D^{\bbeta}_{A_{k}}) - G_{B_{k}B_{k}}( \bbeta^{k}_{B_{k}} + D^{\bbeta}_{B_{k}}) . \label{eqv4}
\end{align}
Observing the relationship
(by (\ref{ssnup})),
$$\left( \begin{array}{c}
         \d_{A_{k}}^{k+1} \\
          \bbeta_{B_{k}}^{k+1} \\
           \bbeta_{A_{k}}^{k+1} \\
          \d_{B_{k}}^{k+1} \\
         \end{array}
       \right)
       = \left(
         \begin{array}{c}
         \d_{A_{k}}^{k}+D^{\d}_{A_{k}} \\
          \bbeta_{B_{k}}^{k}+D^{\bbeta}_{B_{k}} \\
           \bbeta_{A_{k}}^{k} +D^{\bbeta}_{A_{k}}\\
          \d_{B_{k}}^{k}+ D^{d}_{B_{k}} \\
         \end{array}
       \right).
       $$
and substituting (\ref{eqv1}) - (\ref{eqv2}) into  (\ref{eqv3})-(\ref{eqv4}), we obtain (\ref{e211}) - (\ref{e214}), which are the computational steps
in Algorithm \ref{alg1}.

\bibliographystyle{apalike}
\nocite{*}
\bibliography{ref_snap_v3}
\addcontentsline{toc}{section}{References}
\end{document}